\newenvironment{dedication}
  {\clearpage           
   \thispagestyle{empty}
   \vspace*{\stretch{1}}
   \itshape             
   \raggedleft          
  }
  {\par 
   \vspace{\stretch{3}} 
   \clearpage           
  }
\definecolor{dark-red}{rgb}{0.4,0.15,0.15}
\definecolor{dark-blue}{rgb}{0,0,0.7}
\newtheorem{defn}{Definition}[section]
\newtheorem{thm}{Theorem}[section]
\newtheorem{lemma}{Lemma}[section]
\renewenvironment{proof}{{\noindent \emph{Proof.}}}
\begin{document}

\isbn{xxxxxxxxxxx}

\DOI{xxxxxx}


\abstract{Reinforcement learning is a simple, and yet, comprehensive theory of learning that simultaneously models the adaptive behavior of artificial agents, such as robots and autonomous software programs, as well as  attempts to explain the emergent behavior of biological systems. It also gives rise to computational ideas that provide a powerful tool to solve problems involving sequential prediction and decision making. Temporal difference learning is the most widely used method to solve reinforcement learning problems, with a rich history dating back  more than three decades. For these and many other reasons, developing a complete theory of reinforcement learning, one that is both rigorous and useful has been an ongoing research investigation for several decades. In this paper, we set forth a new vision of reinforcement learning developed by us over the past few years, one that yields mathematically rigorous solutions to longstanding important questions that have remained unresolved: (i) how to design reliable, convergent, and robust reinforcement learning algorithms (ii) how to guarantee that reinforcement learning satisfies pre-specified ``safely" guarantees, and remains in a stable region of the parameter space (iii) how to design ``off-policy" temporal difference learning algorithms in a reliable and stable manner, and finally (iv) how to integrate the study of reinforcement learning into the rich theory of stochastic optimization. In this paper, we provide detailed answers to all these questions using the powerful framework of {\em proximal operators}.

The  most important idea that emerges is the use of {\em primal dual spaces} connected through the use of a {\em Legendre} transform. This allows temporal difference updates to occur in dual spaces, allowing a variety of important technical advantages. The Legendre transform, as we show, elegantly generalizes past algorithms for solving reinforcement learning problems, such as {\em natural gradient} methods, which we show relate closely to the previously unconnected framework of {\em mirror descent} methods. Equally importantly, proximal operator theory enables the systematic development of {\em operator splitting} methods that show how to safely and reliably decompose complex products of gradients that occur in recent variants of gradient-based temporal difference learning. This key technical innovation makes it possible to finally design ``true" stochastic gradient methods for reinforcement learning. Finally, Legendre transforms enable a variety of other benefits, including modeling sparsity and domain geometry. Our work builds extensively on recent work on the convergence of saddle-point algorithms, and on the theory of {\em  monotone operators} in Hilbert spaces,  both in optimization and for variational inequalities. The latter framework, the subject of another ongoing investigation by our group, holds the promise of an even more elegant framework for reinforcement learning. Its explication is currently the topic of a further monograph that will appear in due course.}



\articletitle{Proximal Reinforcement Learning: A New Theory of Sequential Decision Making in Primal-Dual Spaces \footnote{This article is currently not under review for the journal Foundations and Trends in ML, but will be submitted for formal peer review at some point in the future, once the draft reaches a stable ``equilibrium" state. }}

\authorname1{Sridhar Mahadevan, Bo Liu, Philip Thomas,  Will Dabney, Steve Giguere, Nicholas Jacek, Ian Gemp}
\affiliation1{School of Computer Science}
\author1address2ndline{School of Computer Science, 140 Governor's Drive,  }
\author1city{ Amherst, MA}
\author1zip{ 01003}
\author1country{  USA}
\author1email{mahadeva,boliu,imgemp@cs.umass.edu, PThomasCS,sgiguere9,amarack@gmail.com}

\authorname2{Ji Liu}
\affiliation2{Hajim School of Engineering and Applied Sciences,}
\author2address2ndline{Hajim School of Engineering and Applied Sciences, University of Rochester, }
\author2city{Rochester, NY}
\author2zip{ 14627}
\author2country{  USA}
\author2email{jliu@cs.rochester.edu}

\journal{sample}
\volume{xx}
\issue{xx}
\copyrightowner{xxxxxxxxx}
\pubyear{xxxx}

\maketitle

\begin{dedication}
Dedicated to Andrew Barto and Richard Sutton for inspiring a  generation of researchers to the study of reinforcement learning.

\begin{figure}[ht]
\centering
\begin{minipage}[t]{0.7\textwidth}
\includegraphics[height=0.15\textheight,width=1in]{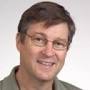} \hskip 0.5in \includegraphics[height=0.15\textheight,width=1in]{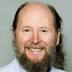}
\end{minipage}
\end{figure}

\begin{algorithm}
\caption{TD (1984)}
\begin{enumerate}
\item $\delta_t = r_t + \gamma {\phi'_t}^T \theta_t - \phi_t^T \theta_t$
\item $\theta_{t+1} = \theta_t + \beta_t \delta_t$
\end{enumerate}
\end{algorithm}
\vspace{1in}
\begin{algorithm}
\caption{GTD2-MP (2014)}
\begin{enumerate}
\item ${w_{t+\frac{1}{2}}}={{w_{t}}+{\beta_{t}}({\delta_{t}}-\phi_{t}^{T}{w_{t}}){\phi_{t}}},\\ \;{\theta_{t+\frac{1}{2}}}={{\rm{pro}}{{\rm{x}}_{{\alpha _t}h}}}\left({{\theta_{t}}+{\alpha_{t}}({\phi_{t}}-\gamma{\phi'_{t}})(\phi_{t}^{T}{w_{t}})}\right)$
\item ${\delta_{t+\frac{1}{2}}}={r_{t}}+\gamma{\phi'_{t}}^{T}{\theta_{t+\frac{1}{2}}}-\phi_{t}^{T}{\theta_{t+\frac{1}{2}}}$
\item $\begin{array}{l}
{w_{t+1}}={w_{t}}+{\beta_{t}}({\delta_{t+\frac{1}{2}}}-\phi_{t}^{T}{w_{t+\frac{1}{2}}}){\phi_{t}}
{\;,\;}\\
{\theta_{t+1}}={{\rm{pro}}{{\rm{x}}_{{\alpha _t}h}}}\left({{\theta_{t}}+{\alpha_{t}}({\phi_{t}}-\gamma{\phi'_{t}})(\phi_{t}^{T}{w_{t+\frac{1}{2}}})}\right)
\end{array}
$\end{enumerate}
\end{algorithm}

\end{dedication}

\cleardoublepage \pagenumbering{roman}

\tableofcontents

\clearpage

\setcounter{page}{0}
\pagenumbering{arabic}

\newtheorem{theorem}{Theorem}[chapter]
\newtheorem{definition}{Definition}[chapter]

\chapter{Introduction}
\label{intro}

In this chapter, we lay out the elements of our novel framework for reinforcement learning \cite{sutton-barto:book}, based on doing temporal difference learning not in the primal space, but in a {\em dual space} defined by a so-called {\em mirror map}. We show how this technical device holds the fundamental key to solving a whole host of unresolved issues in reinforcement learning, from designing stable and reliable off-policy algorithms, to making algorithms achieve safety guarantees, and finally to making them scalable in high dimensions.  This new vision of reinforcement learning developed by us over the past few years yields mathematically rigorous solutions to longstanding important questions in the field, which have remained unresolved for almost three decades. We introduce the main concepts in this chapter, from {\em proximal operators} to the {\em mirror descent} and the {\em extragradient method} and its non-Euclidean generalization, the {\em mirror-prox} method. We introduce a powerful decomposition strategy based on {\em operator splitting}, exploiting deep properties of monotone operators in Hilbert spaces. This technical device, as we show later, is  fundamental in designing ``true" stochastic gradient methods for reinforcement learning, as it helps to decompose the complex product of terms that occur in recent work on gradient temporal difference learning. We provide examples of the benefits of our framework, showing each of the four key pieces of our solution: the improved performance of our new off-policy temporal difference methods over previous gradient TD methods, like TDC and GTD2  \cite{Sutton09fastgradient-descent};  how we are able to generalize natural gradient actor critic methods using mirror maps, and achieve safety guarantees to control learning in complex robots; and finally, elements of our saddle point reformulation of temporal difference learning. The goal of this chapter is to lay out the sweeping power of our primal dual framework for reinforcement learning. The details of our approach, including technical proofs, algorithms, and experimental validations are relegated to future chapters.

\section{Elements of the Overall Framework}

\subsection{Primal Dual Mirror Maps}

In this section, we provide a succinct explanation of the overall framework, leaving many technical details to future chapters. Central to the proposed framework is the notion of {\em mirror maps}, which facilitates doing temporal learning updates not just in the usual primal space, but also in a dual space. More precisely, $\Phi: {\cal D} \rightarrow \mathbb{R}$ for some domain ${\cal D}$ is a {\em mirror map} if it is strongly convex, differentiable, and the gradient of $\Phi$ has the range $\mathbb{R}^n$ (i.e., takes on all possible vector values).  Instead of doing gradient updates in the primal space, we do gradient updates in the dual space, which correspond to:

\[ \nabla \Phi(y) = \nabla \Phi(x) - \alpha \nabla f(x) \]

The step size or learning rate $\alpha$ is a tunable parameter. To get back to the primal space, we use the conjugate mapping $\nabla \Phi^*$, which can be shown to also correspond to the inverse mapping $(\nabla \Phi)^{-1}$, where the conjugate of a function $f(x)$ is defined as

\[ f^*(y) = \sup_x \left( \langle x, y \rangle - f(x) \right). \]

Here $\langle x, y \rangle = x^T y$, the standard inner product on $\mathbb{R}^n$. When $f(x)$ is differentiable and smooth, the conjugate function $f^*(y)$ achieves the maximum value at $x^* = \nabla f(x)$. This is a special instance of the ``Legendre" transform \cite{boyd}. To achieve ``safety" guarantees in reinforcement learning, such as ensuring a robot learning a task never moves into dangerous values of the parameter space, we need to ensure that when domain constraints are not violated. We use Bregman divergences \cite{bregman} to ensure that safety constraints are adhered to, where the projection is defined as:

\[ \Pi^\Phi_{\cal X}(y) = \mbox{argmin}_{{\cal X} \cap {\cal D}} D_\Phi(x,y). \]

A distance generating function $\Phi(x)$ is defined as a strongly convex function which is differentiable. Given such a function $\Phi$, the Bregman divergence associated with it is defined as:

\begin{equation}
\label{bregmand1}
D_\Phi(x,y) = \Phi(x)  - \Phi(y) - \langle \nabla \Phi(y), x - y \rangle
\end{equation}

Intuitively, the Bregman divergence measures the difference between the value of a strongly convex function $\Phi(x)$ and the estimate derived from the first-order Taylor series expansion at $\Phi(y)$. Many widely used distance measures turn out to be special cases of Bregman divergences, such as Euclidean distance (where $\Phi(x) = \frac{1}{2} \| x\|^2$ ) and Kullback Liebler divergence (where $\Phi(x) = \sum_i x_i \log_2 x_i$, the negative entropy function). In general, Bregman divergences are non-symmetric, but projections onto a convex set with respect to a Bregman divergence is well-defined.

\subsection{Mirror Descent, Extragradient, and Mirror Prox Methods}

The framework of {\em mirror descent} \cite{nemirovski-yudin:book,Beck:2003p2359} plays a central role in our framework, which includes not just the original mirror descent method, but also the mirror-prox method \cite{nemirovski2005prox}, which generalizes the {\em extragradient} method to non-Euclidean geometries \cite{extragradient:1976}. Figure~\ref{mdfig} illustrates the mirror descent method, and Figure~\ref{egfig} illustrates the extragradient method.

\begin{figure}[ht]
\centering
\begin{minipage}[t]{0.7\textwidth}
\includegraphics[height=0.35\textheight,width=4in]{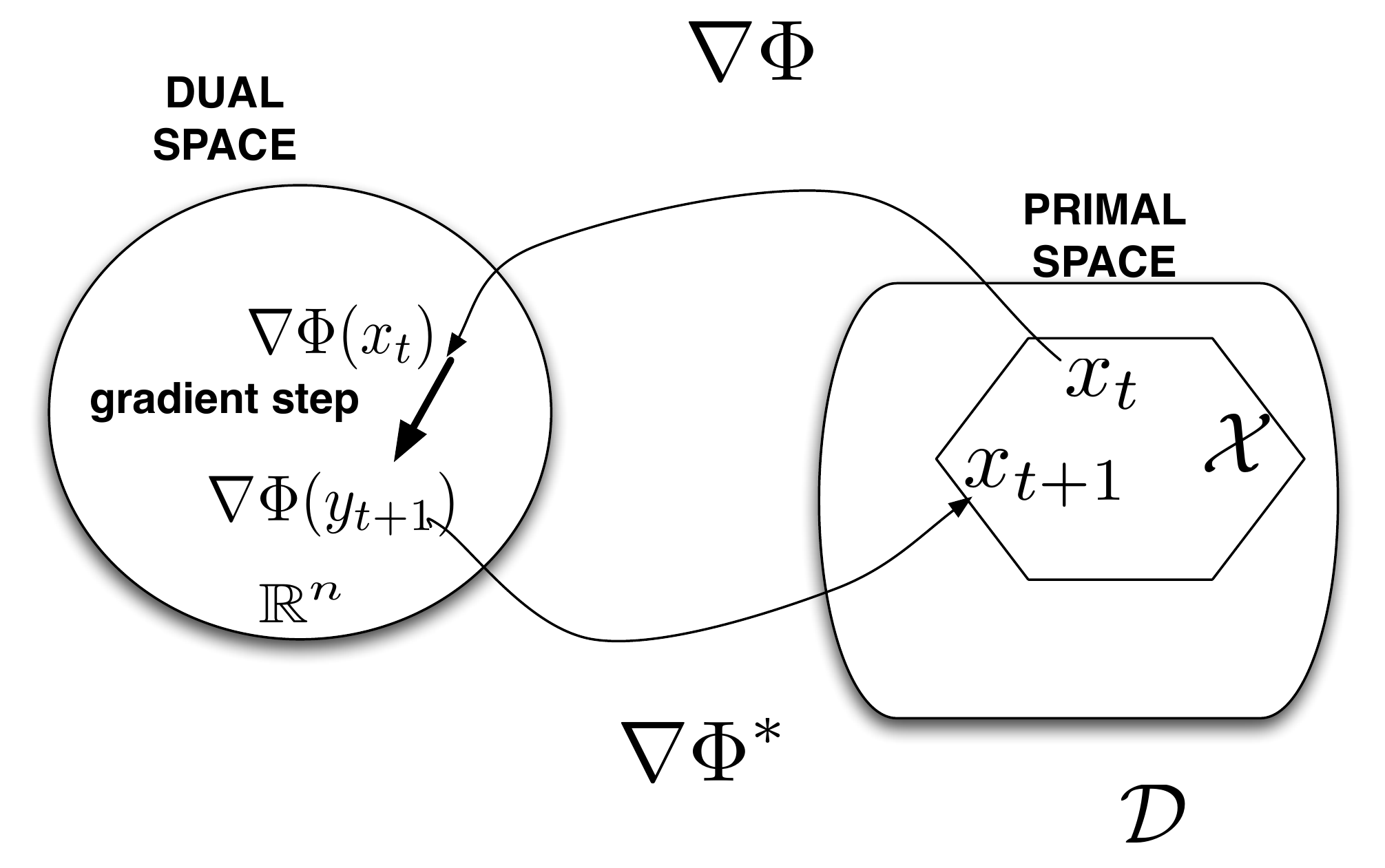}
\end{minipage}
\caption{The mirror descent method. This figure is adapted from \cite{bubeck}.}
\label{mdfig}
\end{figure}

The extragradient method was developed to solve {\em variational inequalities} (VIs), a beautiful generalization of optimization.  Variational inequalities, in the infinite-dimensional setting,  were originally proposed by Hartman and Stampacchia \cite{hartman-stampacchia:acta} in the mid-1960s in the context of solving partial differential equations in mechanics. Finite-dimensional VIs rose in popularity in the 1980s partly as a result of work by Dafermos \cite{dafermos}, who showed that the traffic network equilibrium problem  could be formulated as a finite-dimensional VI. This advance inspired much follow-on research, showing that a variety of equilibrium problems in economics, game theory, sequential decision-making etc. could also be formulated as finite-dimensional VIs -- the books by Nagurney \cite{nagurney:vibook} and Facchinei and Pang \cite{facchinei-pang:vi} provide  a detailed introduction to the theory and applications of finite-dimensional VIs. While we leave the full explication of the VI approach to reinforcement learning to a subsequent monograph, we discuss in the last chapter a few intriguing aspects of this framework that is now the subject of another investigation by our group. A VI(F,K) is specified by a vector field F and a feasible set K. Solving a VI means finding an element $x^*$ within the feasible set K where the vector field $F(x^*)$ is pointed inwards and makes an acute angle with all vectors $x - x^*$. Equivalently, $-F(x^*)$ belongs in the normal cone of the convex feasible set K at the point $x^*$. Any optimization problem reduces to a VI, but the converse is only true for vector fields F whose Jacobians are symmetric. A more detailed discussion of VIs is beyond the scope of this paper, but a longer summary is given in Chapter~\ref{vis}.

In Figure~\ref{egfig}, the concept of extragradient is illustrated. A simple way to understand the figure is to imagine the vector field F here is defined as the gradient $\nabla f(x)$ of some function being minimized. In that case, the mapping $-F(x_k)$ points as usual in the direction of the negative gradient. However, the clever feature of extragradient is that it moves not in the direction of the negative gradient at $x_k$, but rather in the direction of the negative gradient at the point $y_k$, which is the projection of the original gradient step onto the feasible set K. We will see later how this property of extragradient makes its appearance in accelerating gradient temporal difference learning algorithms, such as TDC  \cite{Sutton09fastgradient-descent}.

\begin{figure}[ht]
\centering
\begin{minipage}[t]{0.7\textwidth}
\includegraphics[height=0.25\textheight,width=4in]{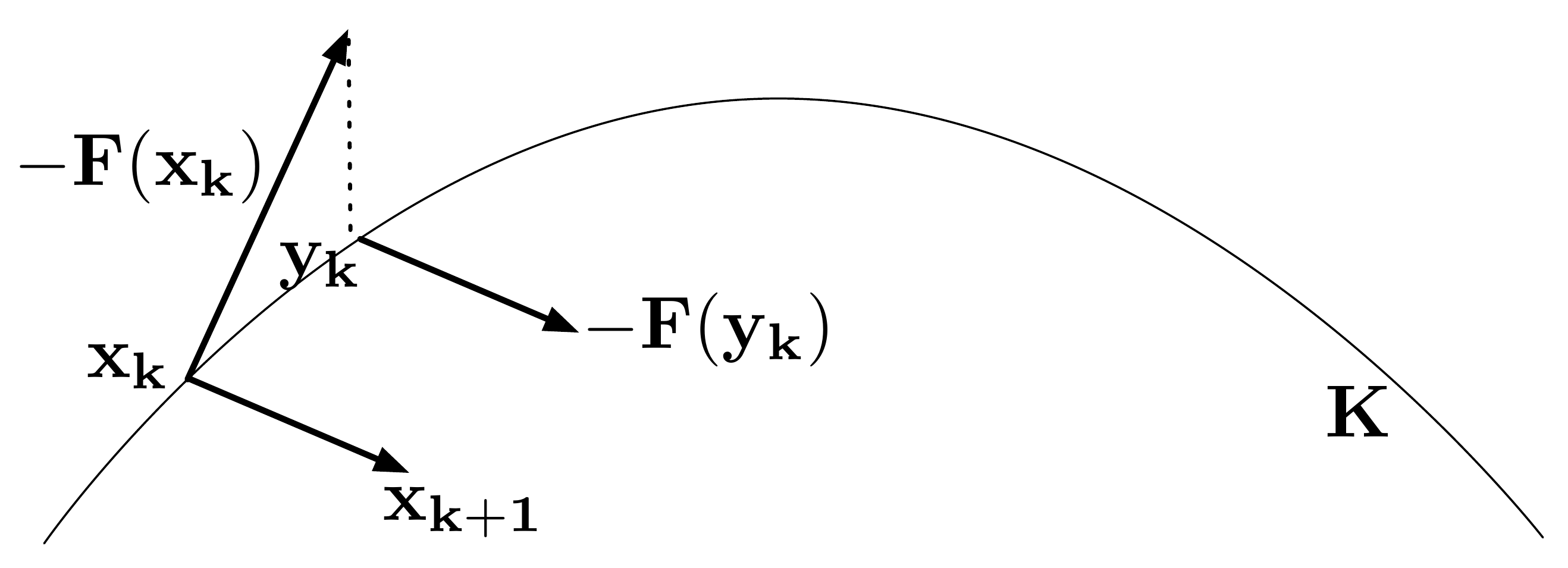}
\end{minipage}
\caption{The extragradient method.}
\label{egfig}
\end{figure}

The mirror-prox method generalizes the extragradient method to non-Euclidean geometries, analogous to the way mirror descent generalizes the regular gradient method. The mirror-prox algorithm (MP) \cite{nemirovski2005prox} is a first-order approach that is able to solve saddle-point problems at a convergence rate of $O(1/t)$.  The MP method plays a key role in our framework as our approach extensively uses the saddle point reformulation of reinforcement learning developed by us \cite{ROTD:NIPS2012}. Figure~\ref{mirrorprox} illustrates the mirror-prox method.

\begin{figure}[ht]
\centering
\begin{minipage}[t]{0.7\textwidth}
\includegraphics[height=0.35\textheight,width=4in]{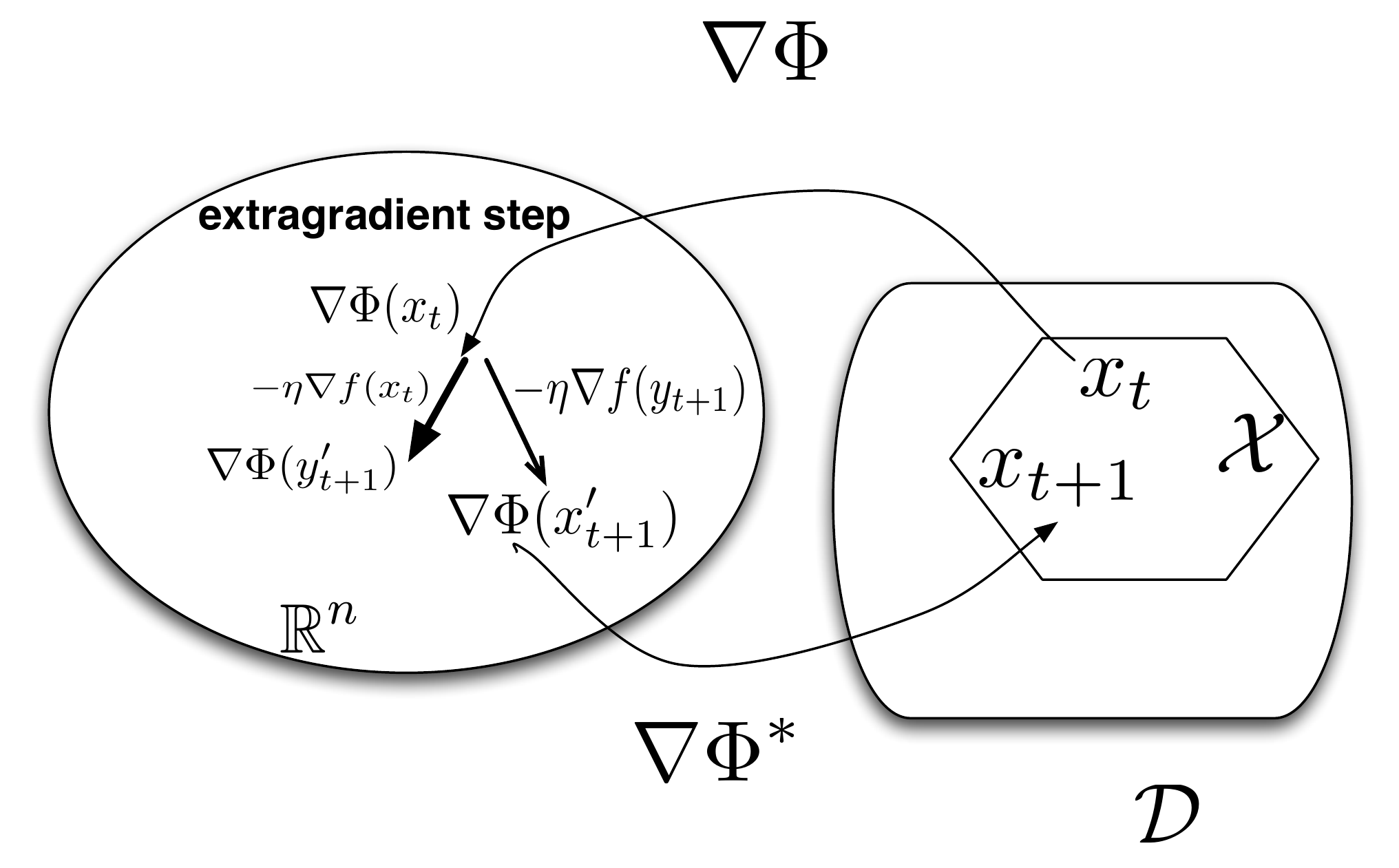}
\end{minipage}
\caption{The mirror prox method. This figure is adapted from \cite{bubeck}.}
\label{mirrorprox}
\end{figure}

\subsection{Proximal Operators}

We now review the concept of proximal mappings, and then describe its relation to the mirror descent framework. The proximal mapping associated with a convex function $h$ is defined as:

\[ \mbox{prox}_h(x) = \mbox{argmin}_{u \in X} \left( h(u) +\frac{1}{2} \| u - x \|^2 \right) \]

If $h(x) = 0$, then $\mbox{prox}_h(x) = x$, the identity function. If $h(x) = I_C(x)$, the indicator function for a convex set $C$, then $\mbox{prox}_{I_C}(x) = \Pi_C(x)$, the projector onto set $C$. For learning sparse representations, the case when $h(w) = \lambda \| w\|_1$ (the $L_1$ norm of $w$) is particularly important. In this case:
\begin{equation}
\label{l1prox}
 \mbox{prox}_h(w)_i = \begin{cases} w_i  - \lambda, & \mbox{if }  w_i > \lambda  \\ 0, & \mbox{if } |w_i| \leq \lambda  \\ w_i + \lambda, & \mbox{otherwise}  \end{cases}
 \end{equation}
An interesting observation follows from noting that the projected subgradient method  can be written equivalently using the proximal mapping as:
\[ w_{k+1} = \mbox{argmin}_{w \in X} \left( \langle w, \partial f(w_k) \rangle + \frac{1}{2 \alpha_k} \| w - w_k \|^2 \right)  \]
where $X$ is a closed convex set. An intuitive way to understand this equation is to view the first term as requiring the next iterate $w_{k+1}$ to move in the direction of the (sub) gradient of $f$ at $w_k$, whereas the second term requires that the next iterate $w_{k+1}$  not move too far away from the current iterate $w_k$.

With this introduction, we can now introduce the main concept of {\em mirror descent}, which was originally proposed by Nemirovksi and Yudin \cite{nemirovski-yudin:book}. We follow the treatment in  \cite{Beck:2003p2359} in presenting the mirror descent algorithm as a nonlinear proximal method based on a distance generator function that is a Bregman divergence \cite{bregman}. The general mirror descent procedure can thus be defined as:
\begin{equation}
\label{md}
 w_{k+1} = \mbox{argmin}_{w \in X} \left( \langle w, \partial f(w_k) \rangle + \frac{1}{\alpha_k} D_\psi(w,w_k) \right)
 \end{equation}
 The solution to this optimization problem can be stated succinctly as the following generalized gradient descent algorithm, which forms the core procedure in mirror descent:
 \begin{equation}
 \label{md-update}
 w_{k+1} = \nabla \psi^* \left( \nabla \psi(w_k) - \alpha_k \partial f(w_k) \right)
 \end{equation}
 %
 An intuitive way to understand the mirror descent procedure specified in Equation~\ref{md-update} is to view the gradient update in two stages: in the first step, the gradient is computed in the dual space using a set of auxiliary weights $\theta$, and subsequently the updated auxilary weights are mapped back into the primal space $w$. Mirror descent is a powerful first-order optimization method that is in some cases ``optimal" in that it leads to low regret. One of the earliest and most successful applications of mirror descent is Positron Emission Tomography (PET) imaging, which involves minimizing a convex function over the unit simplex $X$. It is shown in  \cite{BenTal:2001p2356} that the mirror descent procedure specified in Equation~\ref{md-update}  with the Bregman divergence defined by the {\em p-norm} function \cite{gentile:mlj} can outperform regular projected subgradient method by a factor $\frac{n}{\log n}$ where $n$ is the dimensionality of the space. For high-dimensional spaces, this ratio can be quite large. We will discuss below specific choices of Bregman divergences in the target application of this framework to reinforcement learning.

\subsection{Operator Splitting Strategies}

In our framework, a key insight used to derive a true stochastic gradient method for reinforcement learning is based on the powerful concept of {\em operator splitting} \cite{operator-splitting,douglas-rachford}. Figure~\ref{operator-splitting} illustrates this concept for the {\em convex feasibility} problem, where we are given a collection of convex  sets, and have to find a point in their intersection. This problem originally motivated the development of Bregman divergences \cite{bregman}. The convex feasibility problem is an example of many real-world problems, such as 3D voxel reconstruction in brain imaging \cite{BenTal:2001p2356}, a high-dimensional problem that mirror descent was originally developed for. To find an element in the common intersection of two sets $A$ and $B$ in Figure~\ref{operator-splitting}, a standard method called {\em alternating projections} works as follows. Given an initial point $x_0$, the first step projects it to one of the two convex sets, say $A$, giving the point $\Pi_A(x_0)$. Since $A$ is convex, this is a uniquely defined point. The next step is to project the new point on the second set $B$, giving the next point $\Pi_B(\Pi_A(x_0))$. The process continues, ultimately leading to the desired point common to the two sets. Operator splitting studies a generalized version of this problem, where the projection problem is replaced by the proximal operator problem, as described above. Many different operator splitting strategies have been developed, such as Douglas Rachford splitting \cite{douglas-rachford}, which is a generalization of widely used distributed optimization methods like Alternating Direction Method of Multipliers \cite{boyd:admm}. We will see later that using a sophisticated type of operator splitting strategy, we can address the problem of off-policy temporal difference learning.

\begin{figure}[ht]
\centering
\begin{minipage}[t]{0.7\textwidth}
\includegraphics[height=0.35\textheight,width=4in]{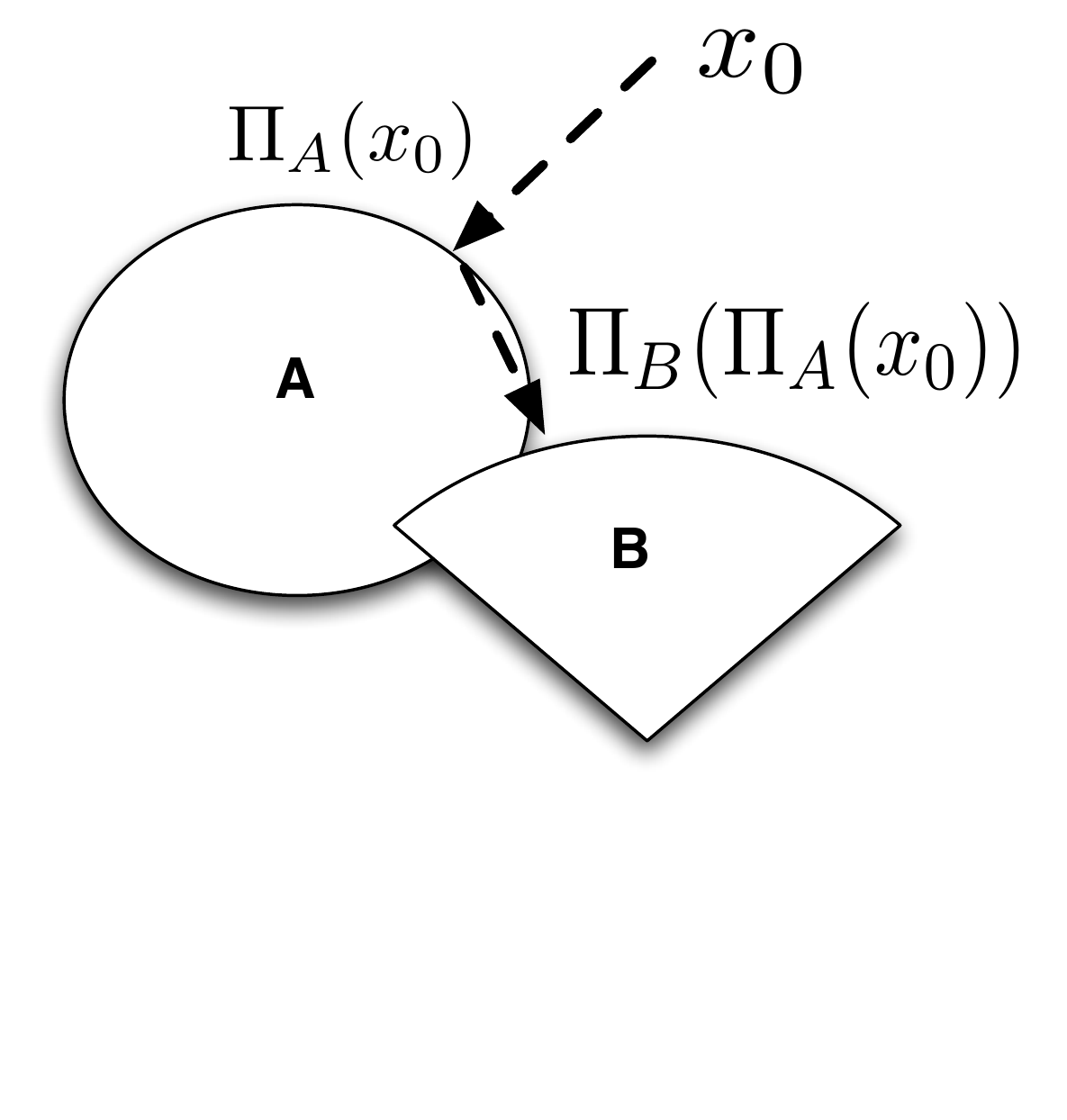}
\end{minipage}
\caption{Operator splitting strategy  for the convex feasibility problem.}
\label{operator-splitting}
\end{figure}

\section{Illustrating the Solution}

Now that we have described the broad elements of our framework, we give a few select examples of the tangible solutions that emerge to the problem of designing safe, reliable, and stable reinforcement learning algorithms. We pick three cases: how to design a ``safe" reinforcement learning method; how to design a ``true" stochastic gradient reinforcement learning method; and finally, how to design a ``robust" reinforcement learning method that does not overfit its training experience.

\section{Safe Reinforcement Learning}

\begin{figure}
\centering
\includegraphics[width=.5\columnwidth]{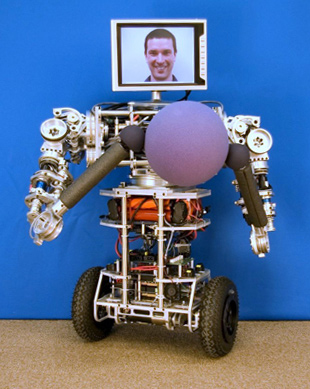}
\caption{The uBot-5 is a 11 degree of freedom mobile manipulator developed at the Laboratory of Perceptual Robotics (LPR) at the University of Massachusetts, Amherst \cite{Deegan2010,Kuindersma2009}. How can we design a ``safe" reinforcement learning algorithm which is guaranteed to ensure that policy learning will not violate pre-defined constraints such that such robots will operate in dangerous regions of the control parameter space? Our framework provides a key solution, based on showing an equivalence between mirror descent and a previously well-studied but unrelated  algorithm called {\em natural gradient} \cite{Amari1998b}.}
\label{ch1fig:uBot}
\end{figure}

Figure~\ref{ch1fig:uBot} shows a complex high-degree of freedom humanoid robot. Teaching robots complex skills is a challenging problem, particularly since reinforcement learning not only may take a long time, but also because it may cause such robots to operate in dangerous regions of the parameter space. Our proposed framework solves this problem by establishing a key technical result, stated below, between mirror descent and the well-known, but previously unrelated, class of algorithms called natural gradient \cite{Amari1998b}. We develop the projected natural actor critic (PNAC) algorithm, a policy gradient method that exploits this equivalence to yield a safe  method for training complex robots using reinforcement learning. We explain the significance of the below result connecting mirror descent and natural gradient methods later in this paper when we describe a novel class of methods called projected natural actor critic (PNAC).

\begin{thm}
The natural gradient descent update at step $k$ with metric tensor $G_k \triangleq G(x_k)$:
\begin{equation}
x_{k+1} = x_k - \alpha_k G_k^{-1}\nabla f(x_k),
\label{eq:NG2}
\end{equation}
is equivalent to the mirror descent update at step $k$, with $\psi_k(x)=(\sfrac{1}{2})x^\intercal G_k x$.
\end{thm}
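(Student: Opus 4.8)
The plan is to verify the equivalence by substituting the quadratic mirror map $\psi_k(x) = \frac{1}{2} x^\intercal G_k x$ directly into the closed-form mirror descent update of Equation~\eqref{md-update} and checking that the resulting iterate coincides with the natural gradient step of Equation~\eqref{eq:NG2}. Throughout I would treat $G_k = G(x_k)$ as a fixed symmetric positive definite matrix, so the equivalence is established one step at a time; the dependence of $G_k$ on $x_k$ enters only through its value at the current iterate.

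First I would compute the two ingredients that Equation~\eqref{md-update} requires. Since $G_k$ is symmetric, $\nabla \psi_k(x) = G_k x$. For the conjugate, the supremum $\psi_k^*(y) = \sup_x ( \langle x, y\rangle - \frac{1}{2} x^\intercal G_k x )$ is attained at the stationary point $x = G_k^{-1} y$, yielding $\psi_k^*(y) = \frac{1}{2} y^\intercal G_k^{-1} y$ and hence $\nabla \psi_k^*(y) = G_k^{-1} y$; this also confirms the identity $\nabla \psi_k^* = (\nabla \psi_k)^{-1}$ noted earlier. Substituting these into Equation~\eqref{md-update} gives
\[ x_{k+1} = \nabla \psi_k^* \left( \nabla \psi_k(x_k) - \alpha_k \nabla f(x_k) \right) = G_k^{-1} \left( G_k x_k - \alpha_k \nabla f(x_k) \right) = x_k - \alpha_k G_k^{-1} \nabla f(x_k), \]
which is exactly Equation~\eqref{eq:NG2}.

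As an independent cross-check, I would also derive the same result from the variational form in Equation~\eqref{md}. For the quadratic $\psi_k$, the Bregman divergence of Equation~\eqref{bregmand1} collapses to the weighted squared distance $D_{\psi_k}(x, x_k) = \frac{1}{2}(x - x_k)^\intercal G_k (x - x_k)$. Setting the gradient of $\langle x, \nabla f(x_k)\rangle + \frac{1}{\alpha_k} D_{\psi_k}(x, x_k)$ to zero gives $\nabla f(x_k) + \frac{1}{\alpha_k} G_k (x - x_k) = 0$, which solves to the same update. This route makes transparent why the metric tensor reappears as its inverse: the quadratic penalty measures displacement in the $G_k$ geometry, so the minimizer is steered by $G_k^{-1}$.

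The algebra here is routine; the only points requiring care are structural. I would need to confirm that $\psi_k$ is a legitimate mirror map in the sense defined earlier, which forces $G_k \succ 0$ so that $\psi_k$ is strongly convex and differentiable and $\nabla \psi_k$ has range $\mathbb{R}^n$ (guaranteeing $G_k^{-1}$ exists); for natural gradient methods $G_k$ is a Fisher-type information matrix and is indeed positive definite. The second subtlety, which I expect to be the main conceptual obstacle rather than a computational one, is that $\psi_k$ changes with $k$, so this is an adaptive, time-varying instance of mirror descent; the theorem should therefore be read as a per-step equivalence in which the mirror map is re-selected at each iteration from the current metric $G_k$.
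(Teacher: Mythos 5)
Your proposal is correct and follows essentially the same route as the paper's proof: compute $\nabla \psi_k(x) = G_k x$, derive the conjugate $\psi_k^*(y) = (\sfrac{1}{2})y^\intercal G_k^{-1} y$ via its stationary point, and substitute $\nabla\psi_k^*(y)=G_k^{-1}y$ into the mirror descent update to recover the natural gradient step. The additional cross-check via the Bregman-divergence variational form and the remarks on positive definiteness and the adaptive, time-varying proximal function are consistent with (and in the latter case explicitly echoed by) the paper's own discussion.
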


\section{True Stochastic Gradient Reinforcement Learning}

First-order temporal difference  learning is a widely used class of techniques
in reinforcement learning. Although many more sophisticated methods have been developed over the past three decades,
such as least-squares based
temporal difference approaches, including  LSTD \cite{bradtke-barto:LSTD},
LSPE \cite{bertsekas-nedic} and LSPI \cite{lagoudakis:jmlr}, first-order temporal difference
learning algorithms may scale more gracefully to high dimensional problems.
Unfortunately, the initial class of TD methods was known to converge only when samples are drawn
``on-policy". This motivated the development of the gradient TD (GTD)  family of methods \cite{FastGradient:2009}.  A crucial step in the development of our framework was the development of a novel saddle-point framework for sparse regularized GTD \cite{ROTD:NIPS2012}. However, there have been several unresolved questions
regarding the current off-policy TD algorithms.
(1) The first is the convergence rate of these algorithms. Although
these algorithms are motivated from the gradient of an objective function
such as  mean-squared projected Bellman error (MSPBE) and NEU  \cite{FastGradient:2009}, they are not true stochastic gradient methods
with respect to these objective functions, as pointed out in \cite{szepesvari2010algorithms}, which make the convergence rate and error bound analysis difficult, although asymptotic analysis
has been carried out using the ODE approach. (2) The second concern is
regarding acceleration. It is believed that TDC performs the best so far of
the GTD family of algorithms. One may intuitively ask if there are any gradient TD
algorithms that can outperform TDC. (3) The third concern is regarding compactness
of the feasible set $\theta$. The GTD family of algorithms all assume
that the feasible set $\theta$ is unbounded, and if the feasible
set $\theta$ is compact, there is no theoretical analysis and convergence
guarantee. (4) The fourth question is on regularization: although
the saddle point framework proposed in \cite{ROTD:NIPS2012} provides an online regularization framework
for the GTD family of algorithms, termed as RO-TD, it is based on the
inverse problem formulation and is thus not quite explicit. One further
question is whether there is a more straightforward algorithm, e.g,
the regularization is directly based on the MSPBE and NEU objective
functions.

Biased sampling is a well-known problem in reinforcement learning.
Biased sampling is caused by the stochasticity of the policy wherein
there are multiple possible  successor states from the current state
where the agent is. If it is a deterministic policy, then there will
be no biased sampling problem. Biased sampling is often caused
by the product of the TD errors, or the product of TD error and the
gradient of TD error w.r.t the model parameter $\theta$. There are
two ways to avoid the biased sampling problem, which can be categorized
into double sampling methods and two-time-scale stochastic approximation
methods.

In this paper, we  propose a novel approach to TD algorithm design in reinforcement learning, based on
introducing the  {\em proximal splitting}  framework \cite{PROXSPLITTING2011}. We show that the GTD family of algorithms are true stochastic gradient descent (SGD) methods, thus making their convergence
rate analysis available. New accelerated off-policy algorithms are
proposed and their comparative study with RO-TD is carried out to show
the effectiveness of the proposed algorithms. We also show that primal-dual
splitting is a unified first-order optimization framework to solve the
biased sampling problem. Figure~\ref{ch1fig:star} compares the performance of our newly designed off-policy methods compared to previous methods, like TDC and GTD2 on the
classic 5-state Baird counterexample. Note the significant improvement of TDC-MP over TDC: the latter converges much more slowly, and has much higher variance. This result is validated not only by experiments, but also by a detailed theoretical analysis of sample convergence, which goes beyond the previous asymptotic convergence analysis of off-policy methods.

\begin{figure}
\centering
\begin{minipage}{1.15\textwidth}
\includegraphics[width= .45\textwidth, height=2.25in]{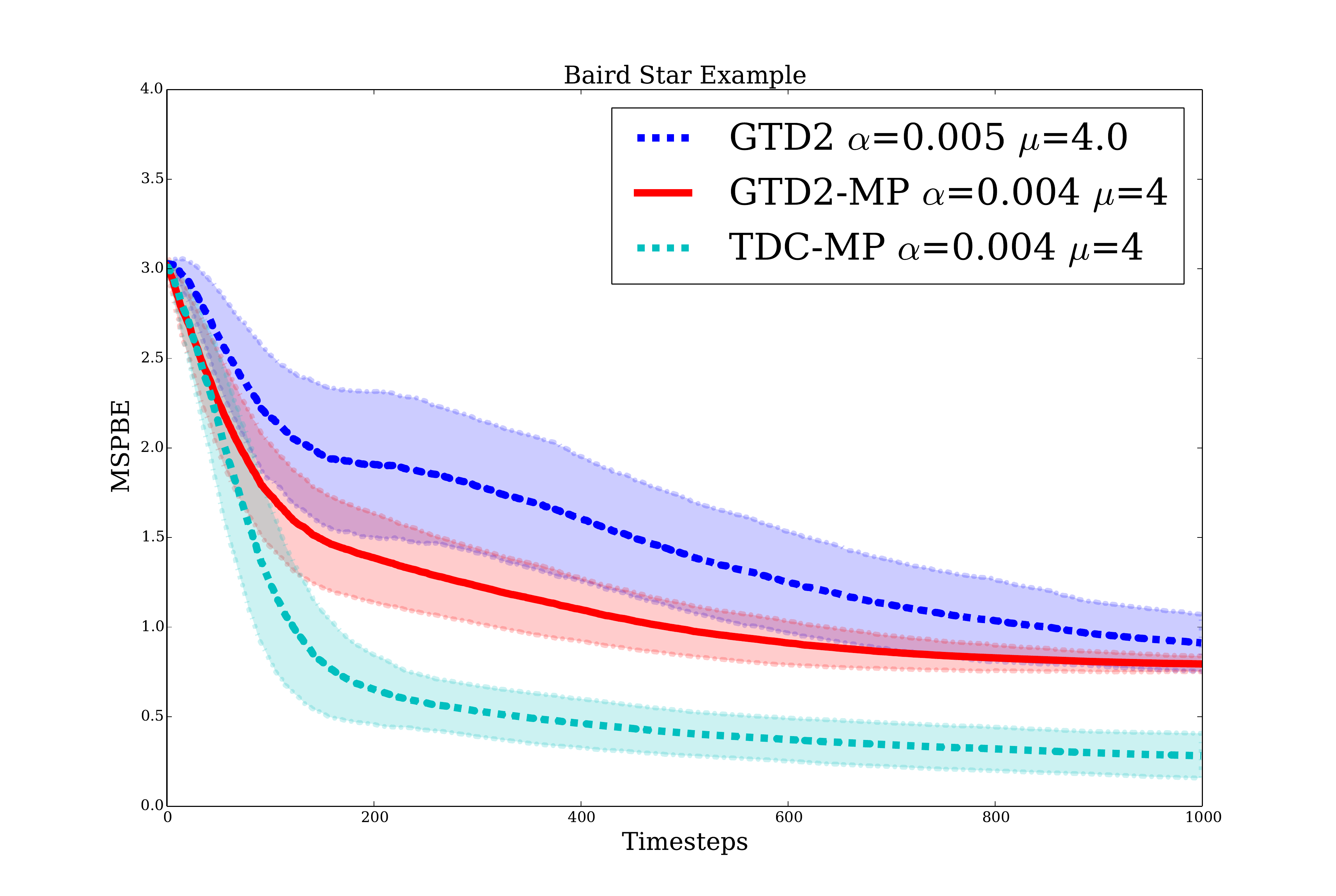}
\includegraphics[width= .45\textwidth, height=2.25in]{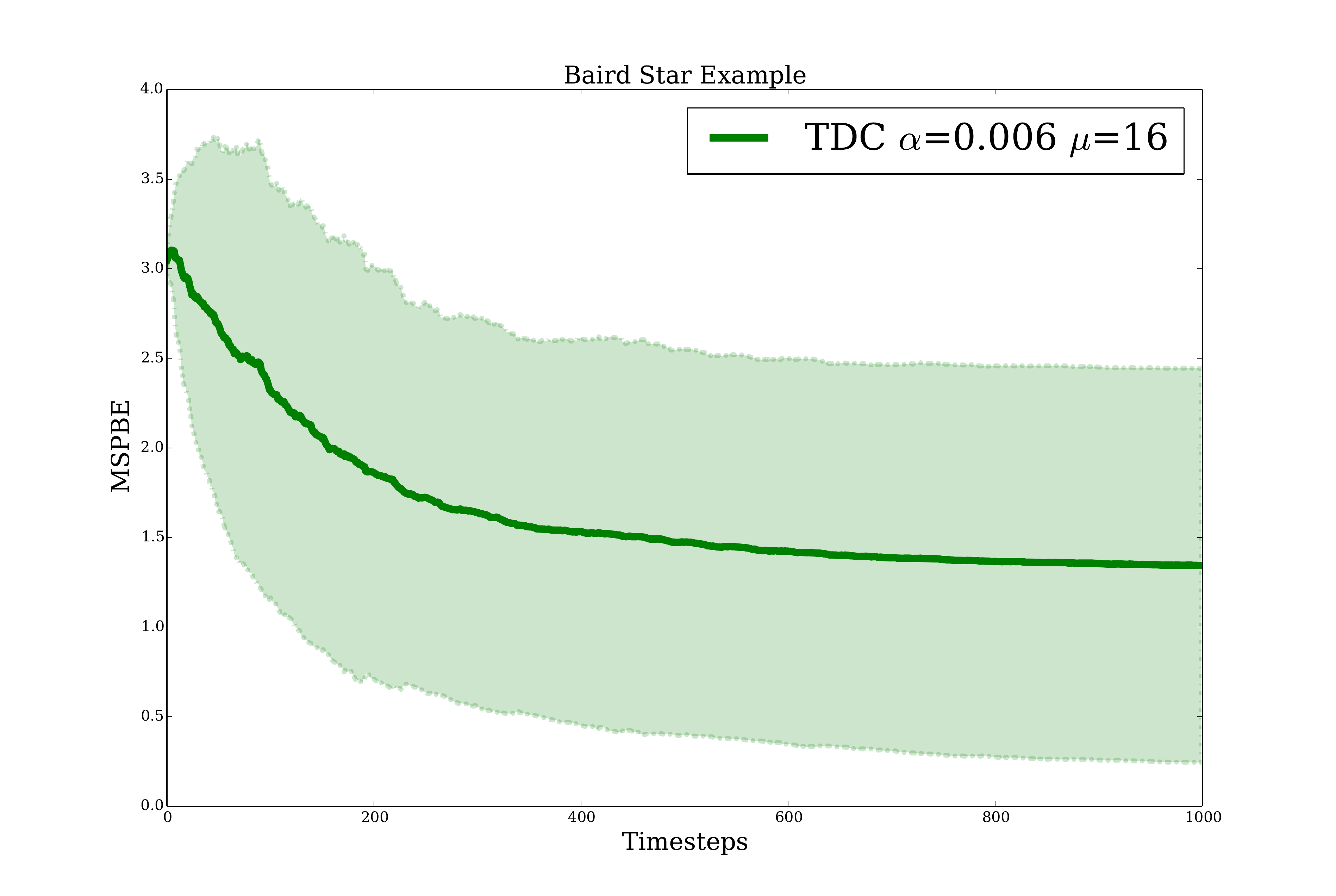}
\end{minipage}
\caption{Off-Policy Convergence Comparison. Our proposed methods, TDC-MP and GTD2-MP, appear to significantly outperform previous methods, like TDC and GTD2 on a simple
benchmark MDP.}
\label{ch1fig:star}
\end{figure}

\section{Sparse Reinforcement Learning using Mirror Descent}

How can we design reinforcement learning algorithms that are robust to overfitting? In this paper we explore a new framework for (on-policy convergent)
TD learning algorithms based on mirror descent and related
algorithms. Mirror descent can be viewed as an enhanced gradient method,
particularly suited to minimization of convex functions in high-dimensional
spaces. Unlike traditional temporal difference learning methods, mirror descent temporal difference learning undertakes
updates of weights in both the dual space and primal space,
which are linked together using a Legendre transform. Mirror descent
can be viewed as a proximal algorithm where the distance-generating
function used is a Bregman divergence. We will present a new class of {\em proximal-gradient}
based temporal-difference (TD) methods based on different
Bregman divergences, which are more powerful than regular TD learning.
Examples of Bregman divergences that are studied include $p$-norm
functions, and Mahalanobis distance based on the covariance of sample
gradients. A new family of sparse mirror-descent reinforcement learning
methods are proposed, which are able to find sparse fixed-point of
an $l_{1}$-regularized Bellman equation at significantly less computational
cost than previous methods based on second-order matrix methods. Figure~\ref{ch1mcar-experiment} illustrates a sample result, showing how the mirror descent variant of
temporal difference learning results in faster convergence, and much lower variance (not shown) on the classic mountain car task \cite{sutton-barto:book}.

\begin{figure}[tbh]
\centering
\begin{minipage}[t]{0.8\textwidth}
 \includegraphics[height=2.5in, width=4in]{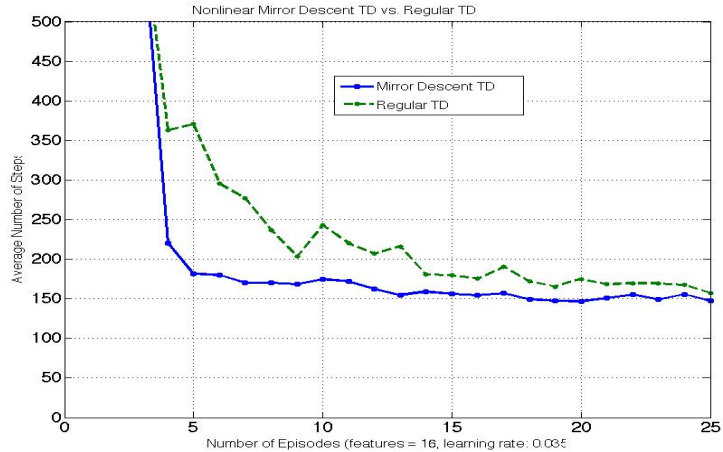}
\end{minipage}
\caption{Comparing mirror-descent TD using the p-norm link function with $16$ tunable Fourier bases with regular TD for the mountain car task. }
\label{ch1mcar-experiment}
\end{figure}


\section{Summary}

We provided a brief overview of our proposed primal-dual framework for reinforcement learning.  The fundamentally new idea underlying the approach is the systematic use of mirror maps to
carry out temporal difference updates, not in the original primal space, but rather in a {\em dual} space. This technical device, as we will show in subsequent chapters, provides for a number
of significant advantages. By choosing the mirror map carefully, we can generalize popular methods like natural gradient based actor-critic methods, and provide safety guarantees. We
can design more robust temporal difference learning methods that are less prone to overfitting the experience of an agent. Finally, we can exploit proximal mappings to design a rich variety
of true stochastic gradient methods. These advantages, when combined, provide a compelling case for the fundamental correctness of our approach. However, much remains to be done in more fully
validating the proposed framework on large complex real-world applications, as well as doing a deeper theoretical analysis of our proposed approach. These extensions will be the subject of ongoing
research by us in the years ahead.

\chapter{Background}\label{chapter:Background}

In this chapter we introduce relevant background material that form the two cornerstones of this paper: reinforcement learning and first-order stochastic composite optimization. The Markov decision process (MDP) model, value function approximation and some basics of reinforcement learning are also introduced. For stochastic composite optimization, we first introduce the problem formulation, and then introduce some tools such as proximal gradient method, mirror descent, etc.

\section{Reinforcement Learning}

\subsection{MDP}

The learning environment for decision-making is generally modeled
by the well-known \textbf{\emph{Markov Decision Process}}\cite{puterman} $M=(S,A,P,R,\gamma)$,
which is derived from a Markov chain.
\begin{defn}
(Markov Chain)\label{def:mc}: A \emph{Markov Chain} is a stochastic process defined as $M=(S,P)$. At each time step $t=1,2,3,\cdots$,
the agent is in a state ${s_{t}}\in S$, and the state transition
probability is given by the state transition kernel $P:S\times S\to\mathbb{{R}}$
satisfying $||P|{|_{\infty}}=1$, where $P({s_{t}}|{s_{t-1}})$ is
the state-transition probability from state $s_{t-1}$ at time step
$t-1$ to the state $s_{t}$ at time step $s_{t}$.
\end{defn}
A Markov decision process (MDPs) is comprised of a set of states $S$,
a set of (possibly state-dependent) actions $A$ ($A_{s}$), a dynamical
system model comprised of the transition probabilities $P_{ss'}^{a}$
specifying the probability of transition to state $s'$ from state
$s$ under action $a$, and a reward model $R$.
\begin{defn}
(Markov Decision Process)\label{def:mdp}\cite{puterman}: A Markov
Decision Process is a tuple $(S,A,P,R,\gamma)$ where $S$ is a finite
set of states, $A$ is a finite set of actions, $P:S\times A\times S\to[0,1]$
is the transition kernel, where $P(s,a,s')$ is the probability of
transmission from state $s$ to state $s'$ given action $a$, and
reward $r:S\times A\to{\mathbb{R}{}^{+}}$ is a reward function, $0\leq\gamma<1$
is a discount factor.
\end{defn}

\subsection{Basics of Reinforcement Learning}

A policy $\pi:S\rightarrow A$ is a deterministic (stochastic) mapping
from states to actions.
\begin{defn}
\noindent\label{def:pol}(Policy): A deterministic stationary policy
$\pi:S\to A$ assigns an action to each state of the Markov decision
process. A stochastic policy $\pi:S\times A\to[0,1]$.
\end{defn}
Value functions are used to compare and evaluate the performance of
policies.
\begin{defn}
\noindent\label{def:value}(Value Function): A value function w.r.t
a policy $\pi$ termed as ${V^{\pi}}:S\to\mathbb{R}$ assigns each state the
\emph{expected sum of discounted} rewards
\end{defn}
\begin{equation}
V^{\pi}=\mathbb{E}\left[{\sum\limits _{i=1}^{t}{{\gamma^{i-1}}{r_{i}}}}\right]
\end{equation}

The goal of reinforcement learning is to find a (near-optimal) policy
that maximizes the value function. $V^{\pi}$ is a fixed-point of
the Bellman equation
\begin{equation}
{V^{\pi}}({s_{t}})=\mathbb{E}\left[{r({s_{t}},{\pi(s_{t})})+\gamma{V^{\pi}}({s_{t+1}})}\right]\label{eq:bellman_eq}
\end{equation}

Equation (\ref{eq:bellman_eq}) can be written in a concise form by
introducing the Bellman operator $T^{\pi}$ w.r.t a policy $\pi$
and denoting the reward vector as $R^{\pi}\in{\mathbb{R}^{n}}$ where ${R^{\pi}_{i}}=\mathbb{E}[r({s_{i}},\pi(s_{i}))]$.

\begin{equation}
V^{\pi}=T^{\pi}(V^{\pi})=R^{\pi}+\gamma P^{\pi}V^{\pi}\label{eq:bellman_op}
\end{equation}

Any optimal policy $\pi^{*}$ defines the unique optimal value function
$V^{*}$ that satisfies the nonlinear system of equations:
\begin{equation}
V^{^{*}}(s)=\max_{a}\sum_{s'}P_{ss'}^{a}\left(R_{ss'}^{a}+\gamma V^{*}(s')\right)\label{eq:Vopt}
\end{equation}

\subsection{Value Function Approximation}

The most popular and widely used RL method is temporal difference
(TD) learning \cite{sutton:td}. TD learning is a stochastic approximation
approach to solving Equation (\ref{eq:Vopt}). The {\em state-action
value} $Q^{*}(s,a)$ represents a convenient reformulation of the
value function, defined as the long-term value of performing $a$
first, and then acting optimally according to $V^{*}$:
\begin{equation}
Q^{*}(s,a)=\mathbb{E}\left(r_{t+1}+\gamma\max_{a'}Q^{*}(s_{t+1},a')|s_{t}=s,a_{t}=a\right)\label{qvalues}
\end{equation}
where $r_{t+1}$ is the actual reward received at the next time step,
and $s_{t+1}$ is the state resulting from executing action $a$ in
state $s_{t}$. The (optimal) action value formulation is convenient
because it can be approximately solved by a temporal-difference (TD)
learning technique called Q-learning \cite{watkins:phd}. The simplest
TD method, called TD($0$), estimates the value function associated
with the fixed policy using a normal stochastic gradient iteration,
where $\delta_{t}$ is called temporal difference error:
\begin{equation}
\begin{array}{l}
{V_{t+1}}({s_{t}})={V_{t}}({s_{t}})+{\alpha_{t}}{\delta_{t}}\\
{\delta_{t}}={r_{t}}+\gamma{V_{t}}({s_{t+1}})-{V_{t}}({s_{t}})
\end{array}
\end{equation}

TD($0$) converges to the optimal value function $V^{\pi}$ for policy
$\pi$ as long as the samples are ``on-policy'', namely following
the stochastic Markov chain associated with the policy; and the learning
rate $\alpha_{t}$ is decayed according to the Robbins-Monro conditions
in stochastic approximation theory: $\sum_{t}\alpha_{t}=\infty,\sum_{t}\alpha_{t}^{2}<\infty$
\cite{ndp:book}. When the set of states $S$ is large, it is often
necessary to approximate the value function $V$ using a set of handcrafted
basis functions (e.g., polynomials, radial basis functions, wavelets
etc.) or automatically generated basis functions \cite{mahadevan:mlfnt}.
In linear value function approximation, the value function is assumed
to lie in the linear spanning space of the basis function matrix $\Phi$
of dimension $|S|\times d$, where it is assumed that $d\ll|S|$.
Hence,
\begin{equation}
V^{\pi}\approx V_{\theta}=\Phi\theta
\end{equation}
The equivalent TD($0$) algorithm for linear function approximated
value functions is given as:
\begin{equation}
\begin{array}{l}
{\theta _{t + 1}} = {\theta _t} + {\alpha _t}{\delta _t}\phi ({s_t})\\
{\delta _t} = {r_t} + \gamma \phi {({s_{t + 1}})^T}{\theta _t} - \phi {({s_t})^T}{\theta _t}
\end{array}
\label{td0}
\end{equation}

\section{Stochastic Composite Optimization}

\subsection{Stochastic Composite Optimization Formulation}

Stochastic optimization explores the use of first-order gradient methods
for solving convex optimization problems.
%
%
We first give some definitions before moving on to introduce
stochastic composite optimization.
\begin{defn}
\noindent (Lipschitz-continuous Gradient)\label{def:lipschitz}: The
gradient of a closed convex function $f(x)$ is $L$-Lipschitz continuous
if $\exists{L},||\nabla f(x)-\nabla f(y)||\le{L}||x-y||,\forall x,y\in X$.
\end{defn}

\begin{defn}
\noindent (Strong Convexity): A convex function is $\mu-$strongly
convex if $\exists{\mu}$, $\frac{\mu}{2}||x-y|{|^{2}}\le f(y)-f(x)-\left\langle {\nabla f(x),y-x}\right\rangle ,\forall x,y\in X$.
\end{defn}
\noindent \textbf{Remark}: If $f(x)$ is both with $L$-Lipschitz
continuous gradient and $\mu$-strongly convex, then we have $\forall x,y\in X$,
\[
\frac{\mu}{2}||x-y|{|^{2}}\le f(y)-f(x)-\left\langle {\nabla f(x),y-x}\right\rangle \le\frac{L}{2}||x-y|{|^{2}}
\]

\begin{defn}
(Stochastic Subgradient) \label{def:sg}: The stochastic subgradient
for closed convex function $f(x)$ at $x$ is defined as $g(x,\xi_{t})$
satisfying ${\mathbb{E}}[g(x,{\xi_{t}})]=\nabla f(x)\in\partial f(x)$.
Further, we assume that the variance is bounded $\exists\sigma>0$
such that
\begin{equation}
\forall x\in X,{\mathbb{E}}[||g(x,{\xi_{t}})-\nabla f(x)||_{*}^{2}]\le{\sigma^{2}}\label{eq:sgvariance}
\end{equation}

\end{defn}
Here we define the problem of Stochastic Composite Optimization (SCO)\cite{lan2012optimal}:
\begin{defn}
(Stochastic Composite Optimization): A stochastic composite optimization
problem $\mathcal{F}(L,M,\mu,\sigma):\Psi(x)$ on a closed convex
set $X$ is defined as

\begin{equation}
{\min_{x\in X}}\Psi(x)\mathop=\limits ^{def}f(x)+h(x)
\label{sumfg}
\end{equation}
$f(x)$ is a convex function with $L$-Lipschitz continuous gradient and $h(x)$ is a convex Lipschitz continuous function such that
\begin{equation}
|h(x) - h(y)| \le M||x - y||,\forall x,y \in X
\end{equation}
$g(x,\xi_{t})$ is the stochastic subgradient of $\Psi(x)$ defined above with variance bound $\sigma$.
Such $\Psi(x)$ is termed as a $\mathcal{F}(L,M,\mu,\sigma)$ problem.
\end{defn}
%

\subsection{Proximal Gradient Method and Mirror Descent}

Before we move on to introduce mirror descent, we first introduce
some definitions and notations.
%

\begin{defn}
\noindent (Distance-generating Function)\cite{MID:2003}: A distance-generating function
$\psi(x)$ is defined as a continuously differentiable $\mu$-strongly
convex function.
$\psi^{*}$ is the Legendre transform of $\psi$,
which is defined as ${\psi^{*}}(y)=\mathop{\sup}\limits _{x\in X}\left({\left\langle {x,y}\right\rangle -\psi(x)}\right)$.
\end{defn}

\begin{defn}
\noindent (Bregman Divergence)\cite{MID:2003}: Given distance-generating function
$\psi$, the Bregman divergence induced by $\psi$ is defined as:
\begin{equation}
D_{\psi}(x,y)=\psi(x)-\psi(y)-\langle\nabla\psi(y),x-y\rangle
\label{bregmand}
\end{equation}

Legendre transform and Bregman divergence have the following properties
\begin{itemize}
\item $\nabla{\psi^{*}}={(\nabla\psi)^{-1}}$
\item ${D_{\psi}}(u,v)={D_{{\psi^{*}}}}(\nabla\psi(u),\nabla\psi(v))$
\item $\nabla{D_{\psi}}(u,v)=\nabla\psi(u)-\nabla\psi(v)$
\end{itemize}

An interesting choice of the link function $\psi(\cdot)$ is the $(q-1)$-strongly convex function $\psi(\theta)=\frac{1}{2}\|\theta\|_{q}^{2}$,
and ${\psi^{*}}(\tilde{\theta})=\frac{1}{2}||\tilde{\theta}||_{p}^{2}$.
Here, $\|\theta\|_{q}=\left(\sum_{j}|\theta_{j}|^{q}\right)^{\frac{1}{q}}$,
and $p$ and $q$ are conjugate numbers such that $\frac{1}{p}+\frac{1}{q}=1$
\cite{gentile2003robustness}. $\theta$ and $\tilde{\theta}$ are
conjugate variables in primal space and dual space, respectively .
\begin{eqnarray}
\mathop{\nabla\psi}\limits _{\theta\to\tilde{\theta}}{(\theta)_{j}} & = & \frac{{{\rm {sign}}({\theta_{j}})|{\theta_{j}}{|^{q-1}}}}{{||\theta||_{q}^{q-2}}}\nonumber \\
\;{\mathop{\;\nabla\psi}\limits _{\tilde{\theta}\to\theta}}^{*}{(\tilde{\theta})_{j}} & = & \frac{{{\rm {sign}}({{\tilde{\theta}}_{j}})|{{\tilde{\theta}}_{j}}{|^{p-1}}}}{{||\tilde{\theta}||_{p}^{p-2}}}
\end{eqnarray}
Also it is worth noting that when $p=q=2$, the Legendre transform
is the identity mapping.
\end{defn}
We now introduce the concept of \emph{proximal mapping}, and then
describe the mirror descent framework. The proximal mapping associated
with a convex function $h(x)$ is defined as:
\begin{equation}
prox_{h}(x)=\arg\mathop{\min}\limits _{u\in X}(h(u)+\frac{1}{2}{\left\Vert {u-x}\right\Vert ^{2}})\label{eq:0}
\end{equation}
In the case of $h(x)=\rho{\left\Vert x\right\Vert _{1}}(\rho>0)$,
which is particularly important for sparse feature selection, the
proximal operator turns out to be the soft-thresholding operator ${S_{\rho}}(\cdot)$,
which is an \emph{entry-wise} shrinkage operator that moves a point
towards zero, i.e.,
\begin{equation}
pro{x_{h}}{(x)_{i}}={S_{\rho}}{(x)_{i}}={\rm {sign}}({x_{i}})\max({|x_{i}-\rho|},0)\label{softthresholding}
\end{equation}
where $i$ is the index, and $\rho$ is a threshold. With this background,
we now introduce the proximal gradient method. At each iteration, the
optimization sub-problem of Equation (\ref{sumfg}) can be rewritten
as
\begin{equation}
{x_{t+1}}=\arg\mathop{\min}\limits _{u\in X}(h(u)+\langle\nabla{f_{t}},u\rangle+\frac{1}{{2{\alpha_{t}}}}{\left\Vert {u-x_{t}}\right\Vert ^{2}})\label{eq:proxgradient}
\end{equation}
If computing $prox_{h}$ is not expensive, then computation of Equation
(\ref{sumfg}) is of the following formulation, which is called the \emph{proximal
gradient method}
\begin{equation}
{x_{t+1}}=pro{x_{{\alpha_{t}}h}}\left({{x_{t}}-{\alpha_{t}}\nabla f({x_{t}})}\right)\label{eq:linearproximal}
\end{equation}
where ${\alpha_{t}}>0$ is stepsize, constant or determined by line
search. The mirror descent \cite{MID:2003} algorithm is a generalization
of classic gradient descent, which has led to developments of new
more powerful machine learning methods for classification and regression.
Mirror descent can be viewed as an enhanced gradient method, particularly
suited to minimization of convex functions in high-dimensional spaces.
Unlike traditional gradient methods, mirror descent undertakes gradient
updates of weights in the dual space, which is linked together with
the primal space using a Legendre transform. Mirror descent can be
viewed as a proximal algorithm where the distance-generating function
used is a Bregman divergence w.r.t the distance-generating function
$\psi$, and thus the optimization problem is
\begin{equation}
prox_{h}(x)=\arg\mathop{\min}\limits _{u\in X}(h(u)+D_{\psi}(u,x))\label{eq:0breg}
\end{equation}
The solution to this optimization problem of Equation (\ref{eq:0breg})
forms the core procedure of \emph{mirror descent} as a generalization
of Equation (\ref{eq:proxgradient})
\begin{equation}
{x_{t+1}}=\arg\mathop{\min}\limits _{u\in X}(h(u)+\langle\nabla{f_{t}},u\rangle+\frac{1}{{\alpha_{t}}}{D_{\psi}}(u,{x_{t}}))\label{eq:proxgradientmid}
\end{equation}
which is a nonlinear extension of Equation(\ref{eq:linearproximal})
\begin{equation}
{x_{t+1}}=\nabla{\psi^{*}}\left({pro{x_{{\alpha_{t}}h}}\left({\nabla\psi({x_{t}})-{\alpha_{t}}\nabla f({x_{t}})}\right)}\right)\label{eq:MIDSA}
\end{equation}
Mirror descent has become the cornerstone of many online $l_{1}$
regularization approaches such as in \cite{ShalevShwartz:jmlr}, \cite{RDA:2009}
and \cite{COMID:2010}.

\subsection{Dual Averaging}

Regularized dual averaging (RDA) \cite{RDA:2009} is a variant of
Dual averaging (DA) with ``simple'' regularizers, such as $l_{1}$
regularization. DA method is strongly related to cutting-plane methods.
Cutting-plane methods formulate a polyhedral lower bound model of
the objective function where each gradient from past iterations contributes
a supporting hyperplane w.r.t its corresponding previous iteration,
which is often expensive to compute. The DA method approximates this
lower bound model with an approximate (possibly not supporting) lower
bound hyperplane with the averaging of all the past gradients \cite{gaojianfeng:2013}.

We now explain RDA from the proximal gradient perspective. Thus far,
the proximal gradient methods we have described in Equation (\ref{eq:MIDSA})
adjust the weights to lie in the direction of the current gradient
$\nabla f_{t}$. Regularized dual averaging methods (RDA) uses a (weighted)
averaging of gradients, which explain their name. Compared with Equation
(\ref{eq:MIDSA}), the main difference is the average (sub)gradient
$\nabla{{\bar{f}}_{t}}$ is used, where $\nabla{\bar{f}_{t}}=\frac{1}{t}\sum\limits _{i=1}^{t}{\nabla{f_{i}}}$.
The equivalent space-efficient recursive representation is
\begin{equation}
\nabla{{\bar{f}}_{t}}=\frac{{t-1}}{t}\nabla{{\bar{f}}_{t-1}}+\frac{1}{t}\nabla{f_{t}}\label{barft-1}
\end{equation}
The generalized mirror-descent proximal gradient formulation of RDA
iteratively solves the following optimization problem at each step:

\begin{equation}
{x_{t+1}}=\arg\mathop{\min}\limits _{x\in X}\left\{ {\left\langle {x,\nabla{{\bar{f}}_{t}}}\right\rangle +h(x)+\frac{1}{{\alpha_{t}}}{D_{\psi}}(x)}\right\} \label{eq:RDAproximal-1}
\end{equation}

Note that different from Equation (\ref{eq:proxgradientmid}), besides
the averaging gradient ${\nabla{{\bar{f}}_{t}}}$ is used instead
of ${\nabla{{f}_{t}}}$, a global origin-centered stabilizer ${{D_{\psi}}(x)}$
is used. RDA with local stabilizer can be seen in \cite{mcmahan2011follow}.
There are several advantages of RDA over other competing methods in
regression and classification problems. The first is the sparsity
of solution when the penalty term is $h(x)=\rho||x||_{1}$. Compared
with other first-order $l_{1}$ regularization algorithms of the mirror-descent
type, including truncated gradient method \cite{Lanford:2008} and
SMIDAS \cite{SMIDAS:2009}, RDA tends to produce sparser solutions
in that the RDA method is more aggressive on sparsity than many other
competing approaches. Moreover, many optimization problems can be
formulated as composite optimization, e.g., a smooth objective component
in conjunction with a global non-smooth regularization function. It
is worth noting that problems with non-smooth regularization functions
often lead to solutions that lie on a low-dimensional supporting data
manifold, and regularized dual averaging is capable of identifying
this manifold, and thus bringing the potential benefit of accelerating
convergence rate by searching on the low-dimensional manifold after
it is identified, as suggested in \cite{lee2012manifold}. Moreover, the finite iteration behavior
of RDA is much better than SGD in practice.

\subsection{Extragradient}

The extragradient method was first proposed by Korpelevich\cite{extragradient:1976} as a relaxation
of ordinary gradient descent to solve variational
inequality (VI) problems. Conventional ordinary gradient descent can
be used to solve VI problems only if some strict restrictions
such as strong monotonicity of the operator or compactness of the
feasible set are satisfied. The extragradient method was proposed to solve
VIs to relax the aforementioned strict restrictions. The
essence of extragradient methods is that instead of moving along the
steepest gradient descent direction w.r.t the initial point in each iteration, two steps,
i.e., a extrapolation step and a gradient descent step, are taken. In
the extrapolation step, a step is made along the steepest gradient descent
direction of the initial point, resulting in an intermediate point
which is used to compute the gradient. Then the gradient descent step
is made \emph{from} the initial point in the direction of the gradient
w.r.t the intermediate point. The extragradient take steps as follows
\begin{equation}
\begin{array}{l}
{x_{t+\frac{1}{2}}}={\Pi_{X}}\left({{x_{t}}-\alpha_{t}\nabla f({x_{t}})}\right)\\
{x_{t+1}}={\Pi_{X}}\left({{x_{t}}-\alpha_{t}\nabla f({x_{t+\frac{1}{2}}})}\right)
\end{array}\label{eq:eg}
\end{equation}

$\Pi_{X}(x)=\mbox{argmin}_{y\in X}\|x-y\|^{2}$ is the
projection onto the convex set $X$, and $\alpha_{t}$ is a stepsize.
Convergence of the iterations of Equation (\ref{eq:eg}) is guaranteed
under the constraints $0<{\alpha_{t}}<\frac{1}{{\sqrt{2}L}}$\cite{nemirovski2005prox}, where
$L$ is the Lipschitz constant for $\nabla f({x})$.

\subsection{Accelerated Gradient}

Nesterov's seminal work on accelerated gradient (AC) enables deterministic
smooth convex optimization to reach its optimal convergence rate $O(\frac{L}{{N^{2}}})$.
The AC method consists of three major steps: an interpolation step,
a proximal gradient step and a weighted averaging step. During each
iteration,

\begin{eqnarray}
{y_{t}} & = & {\alpha_{t}}{x_{t-1}}+(1-{\alpha_{t}}){z_{t-1}}\nonumber \\
{x_{t}} & = & \arg\mathop{\min}\limits _{x}\left\{ {\left\langle {x,\nabla f({y_{t}})}\right\rangle +h(x)+\frac{1}{{\beta_{t}}}{D_{\psi}}(x,{x_{t-1}})}\right\} \nonumber \\
{z_{t}} & = & {\alpha_{t}}{x_{t}}+(1-{\alpha_{t}}){z_{t-1}}
\end{eqnarray}

It is worth noting that in the proximal gradient step, the stabilizer
makes $x_{t}$ start from $x_{t-1}$, and go along the gradient
descent direction of $\nabla f({y_{t})}$, which is quite similar
to extragradient. The essence of Nesterov's accelerated gradient
method is to carefully select the prox-center for proximal gradient
step, and the selection of two stepsize sequences $\{{\alpha_{t}},{\beta_{t}}\}$
where $\alpha_{t}$ is for interpolation and averaging, $\beta_{t}$
is for proximal gradient. Later work and variants of Nesterov's method
utilizing the strong convexity of the loss function with Bregman divergence
are summarized in \cite{tseng2008accelerated}. Recently, the extension
of accelerated gradient method from deterministic smooth convex optimization
to stochastic composite optimization, termed as AC-SA, is studied
in \cite{lan2012optimal}.

\section{Subdifferentials and Monotone Operators}

We introduce the important concept of a subdifferential.

\begin{definition}
The {\em subdifferential} of a convex function $f$ is defined as the set-valued mapping $\partial f$:
\[ \partial f(x) = \{v \in \mathbb{R}^n : f(z) \geq f(x) + v^T (z - x), \forall z \in \mbox{dom}(f) \]
\end{definition}
A simple example of a subdifferential is the {\em normal cone}, which is the subdifferential of the indicator function $I_K$ of a convex set $K$ (defined as $0$ within the set and $+\infty$ outside).  More formally, the normal cone  $N_K(x^*)$ at the vector $x^*$ of a convex set $K$ is defined as $N_K(x^*) = \{y \in \mathbb{R}^n | y^T(x - x^*) \leq 0, \forall x \in K \}$. Each vector $v \in \partial f(x)$ is referred to as the {\em subgradient} of $f$ at $x$.

An important property of closed proper convex functions is that their subdifferentials induce a relation on $\mathbb{R}^n$ called a {\em maximal monotone operator} \cite{operator-splitting,rockafellar}.
\begin{definition}
A relation $F$ on $\mathbb{R}^n$ is monotone if
\[ (u - v)^T (x - y) \geq 0 \ \mbox{for all} \ (x, u), (y, v) \in F \]
F is maximal monotone is there is no monotone operator that properly contains it.
\end{definition}
The subdifferential $\partial f$ of a convex function $f$ is a canonical example of a maximal monotone operator. A very general way to formulate optimization problems is {\em monotone inclusion}:
\begin{definition}
Given a monotone operator $F$, the monotone inclusion problem is to find a vector $x$ such that $0 \in F(x)$. For example, given a (subdifferentiable) convex function $f$, finding a vector $x^*$ that minimizes $f$ is equivalent to solving the monotone inclusion problem $0 \in \partial f(x^*)$.
\end{definition}

\section{Convex-concave Saddle-Point First Order Algorithms}
\label{ch2sec:saddle}

A key  novel contribution of our paper is a convex-concave saddle-point formulation for reinforcement learning.
A convex-concave saddle-point problem is formulated as follows. Let
$x\in X,y\in Y$,  where $X,Y$ are both nonempty closed convex sets,  and $f(x):X\to\mathbb{R}$
be a convex function. If there exists a function $\varphi(\cdot,\cdot)$
such that $f(x)$ can be represented as $f(x): = {\sup _{y \in Y}}\varphi (x,y)$,
then the pair $(\varphi,Y)$ is referred as the saddle-point representation
of $f$. The optimization problem
of minimizing $f$ over $X$ is converted into an equivalent convex-concave
saddle-point problem
$SadVal = {\inf _{x \in X}}{\sup _{y \in Y}}\varphi (x,y)$
of $\varphi$ on $X\times Y$. If $f$ is non-smooth
yet convex and well structured, which is not suitable for many existing
optimization approaches requiring smoothness, its saddle-point representation
$\varphi$ is often smooth and convex. The convex-concave saddle-point
problems are, therefore, usually better suited for first-order methods \cite{sra2011optimization}. A comprehensive overview on extending convex minimization to convex-concave saddle-point problems with unified variational inequalities is presented in \cite{ben2005non}.
As an example, consider $f(x) = ||Ax - b|{|_m}$ which admits a bilinear minimax representation
\begin{equation}
f(x): = {\left\| {Ax - b} \right\|_m} = {\max _{{{\left\| y \right\|}_n} < 1}}\left( {\left\langle {y,Ax - b} \right\rangle } \right)
\label{eq:minimax1}
\end{equation}
where $m,n$ are conjugate numbers. Using the approach in \cite{RobustSA:2009}, Equation (\ref{eq:minimax1}) can be solved as
\begin{equation}
{x_{t + 1}} = {x_t} - {\alpha _t}\left\langle {{y_t},A} \right\rangle ,{y_{t + 1}} = {\Pi _{{{\left\| {{y_t}} \right\|}_n} \le 1}}({y_t} + {\alpha _t}(A{x_t} - b))
\label{eq:minimaxFOM}
\end{equation}
where ${\Pi _{{{\left\| {{y_t}} \right\|}_n} \le 1}}$ is the projection operator of $y_t$ onto the unit-$l_n$ ball  ${\left\| y \right\|_n} \le {\rm{1}}$,which is defined as
\begin{equation}\label{ch1_l2proj}
{\Pi _{{{\left\| y \right\|}_n} \le 1}}y = \min (1,1/{\left\| y \right\|_n})y,n = 2,{\left( {{\Pi _{{{\left\| y \right\|}_n} \le 1}}y} \right)_i} = \min (1,\frac{1}{{|{y_i}|}}){y_i},n = \infty
\end{equation}
and ${{\Pi _{{{\left\| y \right\|}_\infty} \le 1}}y}$ is an entrywise operator.

\section{Abstraction through Proximal Operators}
A general procedure for solving the monotone inclusion problem, the {\em proximal point algorithm} \cite{rockafellar:prox}, uses the following identities:
\[ 0 \in \partial f(x) \leftrightarrow 0 \in \alpha \partial f(x) \leftrightarrow x \in (I  + \alpha \partial (x)) \leftrightarrow x  = (I + \alpha \partial f)^{-1}(x)  \]
Here, $\alpha > 0$ is any real number. The proximal point algorithm is based on the last fixed point identity, and consists of the following iteration:
\[ x_{k+1} \leftarrow (I + \alpha_k \partial f )^{-1} (x_k) \]
Interestingly, the proximal point method involves the computation of the so-called {\em resolvent} of a relation, defined as follows:
\begin{definition}
The resolvent of a relation F is given as the relation $R_F = (I + \lambda F)^{-1}$, where $\lambda > 0$.
\end{definition}
In the case where the relation $R = \partial f$ of some convex function $f$,  the resolvent can be shown to be the {\em proximal mapping} \cite{moreau:prox}, a crucially important abstraction of the concept of projection, a cornerstone of constrained optimization.
\begin{definition}
The proximal mapping of a vector $v$ with respect to a convex function $f$ is defined as the minimization problem:
\[ \mbox{prox}_f(v) = \mbox{argmin}_{x \in K} (f(x) + \| v - x\|^2_2 ) \]
\end{definition}
In the case where $f(x) = I_K(x)$, the indicator function for a convex set $K$, the proximal mapping reduces to the projection $\Pi_K$. While the proximal point algorithm is general, it is not very effective for problems in high-dimensional machine learning that involve minimizing a {\em sum} of two or more functions, one or more of which may not be differentiable.  A key extension of the proximal point algorithm is through a general decomposition principle called operator splitting, reviewed below.
%
%
\section{Decomposition through Operator Splitting}
Operator splitting \cite{operator-splitting,douglas-rachford} is a generic approach to decomposing complex optimization and variational inequality problems into simpler ones that involve computing the resolvents of individual relations, rather than sums or other compositions of relations. For example, given a monotone inclusion problem of the form:
\[ 0 \in A(x) + B(x) \]
for two relations $A$ and $B$, how can we find the solution $x^*$ without computing the resolvent $(I + \lambda (A + B))^{-1}$, which may be complicated, but rather only compute the resolvents of $A$ and $B$ individually?  There are several classes of operator splitting schemes. We will primarily focus on the {\em Douglas Rachford} algorithm \cite{douglas-rachford} specified in Figure~\ref{splitting}, because it leads to a widely used distributed optimization method called Alternating Direction Method of Multipliers (ADMM) \cite{boyd:admm}. The Douglas Rachford method is based on the ``damped iteration" given by:
\[ z_{k+1} = \frac{1}{2} (I + C_A C_B) (z_k) \]
where $C_A = 2 R_A + I$ and $C_B = 2 R_B + I$ are the ``reflection" or {\em Cayley} operators associated with the relations $A$ and $B$. Note that the Cayley operator is defined in terms of the resolvent, so this achieves the necessary decomposition.
\begin{figure} [th]
\centering
\fbox{
\begin{minipage}[t]{0.4\textwidth}
\begin{algorithm}[H]
\caption{Douglas Rachford method.}

{\bf INPUT:} Given $A(x), B(X)$ and a scalar $\lambda > 0$.

\begin{algorithmic}[1]

\STATE Set $k=0$ and initial vector $z_k = 0$.

\REPEAT

\STATE  Set $x_{k + \frac{1}{2}} \leftarrow R_B(z_k)$

\STATE Set $z_{k+\frac{1}{2}} \leftarrow 2 x_{k + \frac{1}{2}} - z_k$

\STATE Set $x_{k+1} \leftarrow R_A(z_{k + \frac{1}{2}})$

\STATE Set $z_{k+1} \leftarrow z_k + x_{k+1} - x_{k+ \frac{1}{2}}$

\UNTIL{$z_{k  + 1} < \epsilon$ }

\STATE Return $x_{k+1}$

\end{algorithmic}
\end{algorithm}
\end{minipage}
\begin{minipage}[t]{0.55\textwidth}
\begin{algorithm}[H]
\caption{Alternating Direction Method of Multipliers.}

{\bf INPUT:} Given sub-differentiable convex functions $f(x), g(x)$ and a scalar $\lambda > 0$.

\begin{algorithmic}[1]

\STATE Set $k=0$ and initial vector $z_k = 0$.

\REPEAT

\STATE  Set $x_{k + \frac{1}{2}} \leftarrow \mbox{argmin}_x (f(x) + \frac{1}{2 \lambda} \|x - z_k \|^2_2)$

\STATE Set $z_{k+\frac{1}{2}} \leftarrow 2 x_{k + \frac{1}{2}} - z_k$

\STATE Set $x_{k+1} \leftarrow \mbox{argmin}_x (g(x) + \frac{1}{2 \lambda} \|x - x_{k + \frac{1}{2}} \|^2_2)$

\STATE Set $z_{k+1} \leftarrow z_k + x_{k+1} - x_{k+ \frac{1}{2}}$

\UNTIL{$z_{k  + 1} < \epsilon$ }

\STATE Return $x_{k+1}$
\end{algorithmic}
\end{algorithm}
\end{minipage}}
\caption{Operator splitting is a generic framework for decomposing a composite objective function into simpler components.}
\label{splitting}
\end{figure}
When $A = \partial f$ and $B = \partial g$, two convex functions, the Douglas Rachford algorithm becomes the well-known Alternating Direction Method of Multipliers (ADMM) method, as described in Figure~\ref{splitting}, where the resolvent of $A$ and $B$ turn into proximal minimization steps. The ADMM algorithm has been extensively studied in optimization; a detailed review is available in the tutorial paper by Boyd and colleagues \cite{boyd:admm}, covering both its theoretical properties, operator splitting origins, and applications to high-dimensional data mining. ADMMs have also recently been studied for spectroscopic data, in particular {\em hyperspectral unmixing} \cite{admm:hyper}.

\subsection{Forward Backwards Splitting}

In this section we will give a brief overview of proximal splitting
algorithms \cite{PROXSPLITTING2011}. The two key ingredients of proximal
splitting are proximal operators and operator splitting. Proximal
methods \cite{rockafellar1976PROXIMALmonotone,parikh2013proximal},
which are widely used in machine learning, signal processing, and stochastic
optimization, provide a general framework for large-scale optimization.
The {\em proximal mapping} associated with a convex function $h$ is defined
as:
\begin{equation}
{\rm{prox}}_{h}(x)=\arg\mathop{\min}\limits _{u}(h(u)+\frac{1}{2}{\left\Vert {u-x}\right\Vert ^{2}})\label{eq:prox}
\end{equation}
Operator splitting is widely used to reduce the computational complexity
of many optimization problems, resulting in algorithms such as
sequential non-iterative approach (SNIA), Strang splitting, and sequential
iterative approach (SIA). Proximal splitting is a technique that combines
proximal operators and operator splitting, and deals with
problems where the proximal operator is difficult to compute at first,
yet is easier to compute after decomposition. The very basic scenario
is Forward-Backward Splitting (FOBOS) \cite{FOBOS:2009}
\begin{equation}
\mathop{\min}\limits _{\theta}\left({\Psi(\theta)=f(\theta)+h(\theta)}\right)
\label{ch1eq:fobos}
\end{equation}
where $f(x)$ is a convex, continuously differentiable function with
$L$-Lipschitz-continuous bounded gradients, i.e. $\forall x,y,||\nabla f(x)-\nabla f(y)||\le L||x-y||$,
and $h(\theta)$ is a convex (possibly not smooth) function. FOBOS
solves this problem via the following proximal gradient method
\begin{equation}
{\theta _{t + 1}} = {\rm{pro}}{{\rm{x}}_{{\alpha _t}h}}({\theta _t} - {\alpha _t}\nabla f({\theta _t}))
\label{eq:fobos-prox}
\end{equation}
An extension of FOBOS is when the objective function is separable,
i.e.,
\begin{equation}
\mathop{\min}\limits _{\theta}\sum\limits _{i=1}^{m}{{f_{i}}(\theta)}\label{eq:admm-probform}
\end{equation}
where computing ${\rm{pro}}{{\rm{x}}_{\sum\limits_{i = 1}^m {{f_i}} }}(\cdot)$
is difficult, yet for each $i$, ${\rm{pro}}{{\rm{x}}_{{f_i}}}(\cdot)$ is easy to
compute. To solve this problem, Douglas-Rachford splitting \cite{PROXSPLITTING2011}
and Alternating Direction of Multiple Multipliers (ADMM) can be used.
Recently, ADMM has  been used proposed for sparse RL \cite{ZHIWEI2014}.

\subsection{Nonlinear Primal Problem Formulation}

In this paper we will investigate a scenario of proximal splitting that is
different from the problem formulation in Section (\ref{eq:admm-probform}),
namely the nonlinear primal form
\begin{equation}
\mathop{\min}\limits _{\theta}\left(\Psi(\theta)={F(K(\theta))+h(\theta)}\right)\label{eq:primal}
\end{equation}
where $F(\cdot)$ is a lower-semicontinuous (l.s.c) nonlinear convex
function, $K$ is a linear operator, the induced
norm is $||K||$. In the following, we will denote $F(K(\theta))$
as $F\circ K(\theta)$. The proximal operator of this problem is
\begin{equation}
{\theta _{t + 1}} = \arg \mathop {\min }\limits_\theta  \{ \Psi (\theta ) + \frac{1}{{2{\alpha _t}}}||\theta  - {\theta _t}||_2^2\}  = {\rm{pro}}{{\rm{x}}_{{\alpha _t}}}_{\left( {F \circ K + h} \right)}({\theta _t})
\end{equation}
In many cases, although ${{\rm{pro}}{{\rm{x}}_{{\alpha _t}F}}}$ and ${{\rm{pro}}{{\rm{x}}_{{\alpha _t}K}}}$
are easy to compute, ${{\rm{pro}}{{\rm{x}}_{{\alpha _t}F\circ K}}}$ is often difficult
to compute. For the NEU case, we have
\begin{equation}
\begin{array}{l}
K(\theta)=\mathbb{E}[{\phi_{t}}{\delta_{t}}]={\Phi^{T}}\Xi(T{V_{\theta}}-{V_{\theta}})={\Phi^{T}}\Xi(R+\gamma{\Phi^{'}}\theta-\Phi\theta),{\rm {}}F(\cdot)=\frac{1}{2}||\cdot||_{2}^{2}\end{array}\label{eq:neu-fk}
\end{equation}
%
It is straightforward to verify that ${{\rm{pro}}{{\rm{x}}_{{\alpha _t}F}}},{{\rm{pro}}{{\rm{x}}_{{\alpha _t}K}}}$
are easy to compute, but ${{\rm{pro}}{{\rm{x}}_{{\alpha _t}F\circ K}}}$
is not easy to compute since it involves the biased sampling problem
as indicated in Equation (\ref{eq:neu-grad}). To solve this problem,
we transform the problems formulation to facilitate operator splitting,
i.e., which only uses ${{\rm{pro}}{{\rm{x}}_{{\alpha _t}F}}},{{\rm{pro}}{{\rm{x}}_{{\alpha _t}K},}}{{\rm{pro}}{{\rm{x}}_{{\alpha _t}h}}}$
and avoids computing ${{\rm{pro}}{{\rm{x}}_{{\alpha _t}F\circ K}}}$ directly. We
will use the primal-dual splitting framework to this end.

\subsection{Primal-Dual Splitting}
\label{pd-splitting}

The corresponding primal-dual formulation \cite{BOOK2011PROXSPLIT,PROXSPLITTING2011,POCK2011SADDLE}
of Section (\ref{eq:primal}) is
\begin{equation}
\mathop{\min}\limits _{\theta\in X}\mathop{\max}\limits _{y\in Y}\left({L(\theta,y)=\left\langle {K(\theta),y}\right\rangle -{F^{*}}(y)}+h(\theta)\right)
\label{eq:primal-dual}
\end{equation}
where $F^{*}(\cdot)$ is the Legendre transform of the convex nonlinear
function $F(\cdot)$, which is defined as $\ensuremath{{F^{*}}(y)={\sup_{x\in X}}(\langle x,y \rangle -F(x))}$.
The proximal splitting update per iteration is written as
\begin{equation}
\begin{array}{l}
{y_{t + 1}} = \arg \mathop {\min }\limits_{y \in Y} \left\langle { - {K_t}({\theta _t}),y} \right\rangle  + {F^*}(y) + \frac{1}{{2{\alpha _t}}}||y - {y_t}|{|^2}\\
{{\theta_{t+1}}=\arg\mathop{\min}\limits _{\theta\in X}\left\langle {{K_{t}}({\theta}),y_{t}}\right\rangle +h(\theta)+\frac{1}{{2{\alpha_{t}}}}||\theta-{\theta_{t}}|{|^{2}}}
\end{array}
\end{equation}
Thus we have the general update rule as
\begin{equation}
\begin{array}{*{20}{l}}
{{y_{t + 1}} = {y_t} + {\alpha _t}{K_t}({\theta _t}) - {\alpha _t}\nabla F_t^*(y)
{\;,\;}
{\theta _{t + 1}} = {\rm{pro}}{{\rm{x}}_{{\alpha _t}h}}({\theta _t} - {\alpha _t}\nabla {K_t}({\theta _t}){y_t})}
\end{array}
\end{equation}
However, in stochastic learning setting, we do not have knowledge of the exact ${K_t}({\theta _t})$, $\nabla F_t^*(y)$ and $\nabla {K_t}({\theta _t}){y_t}$, whereas a stochastic oracle $\mathcal{SO}$ is able to provide unbiased estimation of them.

\section{Natural Gradient Methods}
\label{sec:NG}

Consider the problem of minimizing a differentiable function $f:\mathbb R^n \to \mathbb R$. The standard gradient descent approach is to select an initial $x_0 \in \mathbb R^n$, compute the direction of steepest descent, $-\nabla f(x_0)$, and then move some amount in that direction (scaled by a stepsize parameter, $\alpha_0$). This process is then repeated indefinitely: $x_{k+1} = x_k - \alpha_k \nabla f(x_k)$, where $\{\alpha_k\}$ is a stepsize schedule and $k \in \{1,\hdots\}$. Gradient descent has been criticized for its low asymptotic rate of convergence. Natural gradients are a quasi-Newton approach to improving the convergence rate of gradient descent.

When computing the direction of steepest descent, gradient descent assumes that the vector $x_k$ resides in Euclidean space. However, in several settings it is more appropriate to assume that $x_k$ resides in a Riemannian space with metric tensor $G(x_k)$, which is an $n \times n$ positive definite matrix that may vary with $x_k$ \cite{Amari1998b}. In this case, the direction of steepest descent is called the \emph{natural gradient} and is given by $-G(x_k)^{-1}\nabla f(x_k)$ \cite{Amari1998}. In certain cases, (which include our policy search application), following the natural gradient is asymptotically Fisher-efficient \cite{Amari1998b}.

\section{Summary}
We provided a brief overview of some background material in reinforcement learning and optimization in this chapter. The subsequent chapters contain further elaboration of this material as it is required.
The overall goal of our work is to bring reinforcement learning into the main fabric of modern stochastic optimization theory. As we show in subsequent chapters, accomplishing this goal gives us access
to many advanced algorithms and analytical tools. It is worth noting that we make little use of classical stochastic approximation theory, which has traditionally been used to analyze reinforcement learning
methods (as discussed in detail in books such as \cite{ndp:book}). Classical stochastic approximation theory provides only asymptotic convergence bounds, for the most part. We are interested, however, in
getting tighter sample complexity bounds, which stochastic optimization provides.

\chapter{Sparse Temporal Difference Learning in Primal Dual Spaces}
\label{md-td-chapter}

In this chapter we explore a new framework for (on-policy convergent)
TD learning algorithm based on \textit{mirror descent} and related
algorithms.\footnote{This chapter is based on the paper ``Sparse Q-learning with Mirror Descent" published in UAI 2012.}
 Mirror descent can be viewed as an enhanced gradient method,
particularly suited to minimization of convex functions in high-dimensional
spaces. Unlike traditional gradient methods, mirror descent undertakes
gradient updates of weights in both the dual space and primal space,
which are linked together using a Legendre transform. Mirror descent
can be viewed as a proximal algorithm where the distance-generating
function used is a Bregman divergence. A new class of {\em proximal-gradient}
based temporal-difference (TD) methods are presented based on different
Bregman divergences, which are more powerful than regular TD learning.
Examples of Bregman divergences that are studied include $p$-norm
functions, and Mahalanobis distance based on the covariance of sample
gradients. A new family of sparse mirror-descent reinforcement learning
methods are proposed, which are able to find sparse fixed-point of
an $l_{1}$-regularized Bellman equation at significantly less computational
cost than previous methods based on second-order matrix methods.

\section{Problem Formulation}

The problem formulation in this chapter is based on the Lasso-TD objective
defined as follows, 
which is used in LARS-TD and LCP-TD.  We first define $l_{1}$-regularized Projection, and then give the definition of Lasso-TD objective function.

\begin{defn}
\cite{LASSOTD:2011} ($l_{1}$-regularized Projection): ${{\Pi}_{{l_{1}}}}$ is the $l_{1}$-regularized
projection defined as:
\[{\Pi_{{l_{1}}}}y=\Phi\theta, \theta=\arg{\min_{w}}{\left\Vert {y-\Phi w}\right\Vert ^{2}}+\rho{\left\Vert w\right\Vert _{1}}\]
which is a non-expansive mapping w.r.t weighted $l_{2}$ norm, as proven in \cite{LASSOTD:2011}.
\end{defn}

\begin{lemma}
\cite{LASSOTD:2011}: ${\Pi_{\rho}}$ is a non-expansive mapping such
that
\begin{equation}
\forall x,y\in{R^{d}},||{\Pi_{\rho}}x-{\Pi_{\rho}}y|{|^{2}}\le||x-y|{|^{2}}-||x-y-({\Pi_{\rho}}x-{\Pi_{\rho}}y)|{|^{2}}\label{eq:proj_rho}
\end{equation}
\end{lemma}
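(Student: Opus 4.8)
The plan is to observe that the claimed inequality \eqref{eq:proj_rho} is nothing other than the statement that $\Pi_\rho$ is \emph{firmly} non-expansive, and then to prove firm non-expansiveness directly from the variational characterization of the Lasso projection. To see the reduction, set $a = \Pi_\rho x - \Pi_\rho y$ and $b = x - y$ and expand $\|b - a\|^2 = \|b\|^2 - 2\langle a, b\rangle + \|a\|^2$; the right-hand side of \eqref{eq:proj_rho} then collapses to $2\langle a, b\rangle - \|a\|^2$, so the stated bound is equivalent to
\[ \|\Pi_\rho x - \Pi_\rho y\|^2 \le \langle \Pi_\rho x - \Pi_\rho y,\, x - y\rangle. \]
Thus it suffices to establish this inner-product estimate, after which the displayed form follows purely by the algebraic identity above, and ordinary non-expansiveness $\|\Pi_\rho x - \Pi_\rho y\| \le \|x - y\|$ drops out at once because the subtracted term in \eqref{eq:proj_rho} is nonnegative.

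To prove the firm estimate I would use the first-order (subdifferential) optimality condition for the minimizers defining $\Pi_\rho$. Let $\theta_x$ and $\theta_y$ minimize $\|x - \Phi w\|^2 + \rho\|w\|_1$ and $\|y - \Phi w\|^2 + \rho\|w\|_1$ respectively, so that $\Pi_\rho x = \Phi\theta_x$ and $\Pi_\rho y = \Phi\theta_y$. Optimality gives, for suitable selections $g_x, g_y$,
\[ 2\Phi^T(x - \Phi\theta_x) = g_x \in \rho\,\partial\|\theta_x\|_1, \qquad 2\Phi^T(y - \Phi\theta_y) = g_y \in \rho\,\partial\|\theta_y\|_1. \]
Since $\partial\|\cdot\|_1$ is the subdifferential of a convex function it is a monotone operator, so $\langle g_x - g_y,\, \theta_x - \theta_y\rangle \ge 0$. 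Substituting the expressions for $g_x, g_y$ and moving $\Phi^T$ across the inner product via the adjoint identity $\langle \Phi^T c, d\rangle = \langle c, \Phi d\rangle$ yields
\[ \bigl\langle (x - \Phi\theta_x) - (y - \Phi\theta_y),\, \Phi\theta_x - \Phi\theta_y \bigr\rangle \ge 0. \]

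Writing $u = \Phi\theta_x = \Pi_\rho x$ and $v = \Phi\theta_y = \Pi_\rho y$, this last inequality reads $\langle (x - y) - (u - v),\, u - v\rangle \ge 0$, i.e. $\langle x - y,\, u - v\rangle \ge \|u - v\|^2$, which is exactly the firm non-expansiveness estimate; combined with the first paragraph this gives \eqref{eq:proj_rho}. The main obstacle I anticipate is the correct bookkeeping around the linear operator $\Phi$: the Lasso minimizer $\theta$ need not be unique, so the whole argument must be carried out in terms of the fitted vector $\Phi\theta$ (which \emph{is} unique, by strict convexity of the squared-error term along the solution set), and the monotonicity inequality must be transported through $\Phi^T$ by the adjoint identity rather than being applied to $\theta$ directly. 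A secondary technical point is justifying the variational condition for the nonsmooth objective, which is precisely where the subdifferential of $\|\cdot\|_1$ — rather than a gradient — is indispensable. An alternative but less self-contained route would note that $\Pi_\rho = I - \nabla h$ for the convex Moreau-type envelope $h(y) = \min_w \tfrac12\|y - \Phi w\|^2 + \tfrac{\rho}{2}\|w\|_1$ and invoke the Baillon--Haddad theorem, but the direct monotonicity argument above avoids that additional machinery.
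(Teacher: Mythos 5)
Your proposal is correct. Note first that the paper itself does not prove this lemma at all: it is imported verbatim from the cited Lasso-TD reference, so there is no in-paper argument to compare against. Your reduction of \eqref{eq:proj_rho} to firm non-expansiveness is exact --- with $a = \Pi_\rho x - \Pi_\rho y$ and $b = x-y$ the right-hand side is $2\langle a,b\rangle - \|a\|^2$, so the claim is equivalent to $\|a\|^2 \le \langle a,b\rangle$ --- and your derivation of that estimate from the subdifferential optimality conditions $2\Phi^T(x-\Phi\theta_x) \in \rho\,\partial\|\theta_x\|_1$, the monotonicity of $\partial\|\cdot\|_1$, and the adjoint identity is the standard and correct route; your handling of non-uniqueness of $\theta$ (working with the fitted vector $\Phi\theta$, which is unique by strict convexity of the data-fit term along the solution set) is also the right precaution. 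The one point worth flagging is that the paper's surrounding text asserts non-expansiveness \emph{with respect to the weighted $l_2$ norm} induced by the sampling distribution $\Xi$, whereas you carry the argument out in the unweighted Euclidean inner product; the fix is cosmetic --- replace the data-fit term by $\|y-\Phi w\|_\Xi^2$, so the optimality condition becomes $2\Phi^T\Xi(y-\Phi\theta_y)\in\rho\,\partial\|\theta_y\|_1$, and run the identical monotonicity argument in the $\Xi$-inner product --- but you should state it explicitly so that the norm in \eqref{eq:proj_rho} matches the one used elsewhere in the convergence analysis. Compared with your alternative suggestion via the Baillon--Haddad theorem, the direct monotonicity argument you give is both more elementary and self-contained, and is the preferable one to record.
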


\begin{defn}
\cite{LASSOTD:2011} (Lasso-TD) Lasso-TD is a fixed-point
equation w.r.t $l_{1}$ regularization with
parameter $\rho$, which is defined as
\begin{equation}
\begin{array}{l}
\theta=f(\theta)={\rm {argmi}}{{\rm {n}}_{u\in{R^{d}}}}\left({||T\Phi\theta-\Phi u{||{}^{2}}+\rho||u{||_{1}}}\right)\\
{\rm {=argmi}}{{\rm {n}}_{u\in{R^{d}}}}\left({||{R^{\pi}}+\gamma{P^{\pi}}\Phi\theta-\Phi u{||{}^{2}}+\rho||u{||{}_{1}}}\right)
\end{array}\label{eq:lassotd}
\end{equation}
\end{defn}

The properties of Lasso-TD is discussed in detail in \cite{LASSOTD:2011}.
Note that the above $l_{1}$ regularized fixed-point is not a convex
optimization problem but a fixed-point problem. Several prevailing
sparse RL methods use Lasso-TD as the objective function, such as SparseTD\cite{mahadevan:uai2005},
LARS-TD\cite{Kolter09LARSTD} and LCP-TD\cite{JeffLcp:nips2010}.
The advantage of LARS-TD comes from LARS in that it computes a homotopy path of solutions with
different regularization parameters, and thus offers a rich solution
family. The major drawback comes from LARS, too. To maintain the LARS
criteria wherein each active variable has the same correlation with
the residual, variables may be added and dropped several times, which
is computationally expensive. In fact, the computational complexity
per iteration is $O(Ndk^{2})$ where $k$ is the cardinality of the active
feature set. Secondly, LARS-TD requires the $A$ matrix to be a $P$-matrix(a
square matrix which does not necessarily to be symmetric, but all
the principal minors are positive), which poses extra limitation on
applications. The author of LARS-TD claims that this seems never to
be a problem in practice, and given on-policy sampling condition or
given large enough ridge regression term, $P$-matrix condition can
be guaranteed. LCP-TD {[}12{]} formulates LASSO-TD as a linear complementarity
problem (LCP), which can be solved by a variety of available LCP solvers.

We then derive the major step by formulating the
problem as a forward-backward splitting problem (FOBOS) as in \cite{FOBOS:2009},
\begin{equation}
\begin{array}{l}
{\theta{}_{t+\frac{1}{2}}}={\theta_{t}}-{\alpha_{t}}{g_{t}}\\
{\theta_{t+1}}=\arg\mathop{\min}\limits _{\theta}\left\{ {\frac{1}{2}||{\theta}-{\theta_{t+\frac{1}{2}}}||_{2}^{2}+\alpha_{t}h(\theta)}\right\}
\end{array}\label{eq:fobos}
\end{equation}

This is equivalent to the formulation of proximal gradient method
\begin{equation}
{\theta_{t+1}}=\arg\mathop{\min}\limits _{\theta}\left\{ {\left\langle {{g_{t}},\theta}\right\rangle +h(\theta)+\frac{1}{2\alpha_{t}}||\theta-{\theta_{t}}||_{2}^{2}}\right\} \label{eq:fobos2}
\end{equation}

Likewise, we could formulate the sparse TD algorithm as
\begin{equation}
\begin{array}{l}
{\theta_{t+\frac{1}{2}}}={\theta_{t}}-\frac{{\alpha_{t}}}{2}\nabla{\rm {MSE(}}\theta{\rm {)}}\\
{\theta_{t+1}}=\arg\mathop{\min}\limits _{\theta}\left\{ {\frac{1}{2}||{\theta_{t}}-{\theta_{t+\frac{1}{2}}}||_{2}^{2}+\alpha_{t}h(\theta)}\right\}
\end{array}
\end{equation}

And this can be formulated as
\begin{equation}
{\theta_{t+1}}=\arg\mathop{\min}\limits _{\theta}\left\{ {\left\langle {\frac{1}{2}\nabla{\rm {MSE(}}\theta{\rm {)}},\theta}\right\rangle +h(\theta)+\frac{1}{2\alpha_{t}}||\theta-{\theta_{t}}||_{2}^{2}}\right\}
\end{equation}

\section{Mirror Descent RL}
%

\begin{algorithm}
\label{mdtd-algm1}
\caption{Adaptive Mirror Descent TD($\lambda$)}

Let $\pi$  be some fixed policy for an MDP M, and $s_0$ be the initial state. Let $\Phi$ be some fixed or automatically generated basis.

%
%
%
%
%
%
%
%
%
\begin{algorithmic}[1]
\REPEAT

\STATE  Do action $\pi(s_t)$ and observe next state $s_{t+1}$ and reward $r_t$.

\STATE Update the eligibility trace $e_t \leftarrow e_t + \lambda \gamma \phi(s_t)$

\STATE Update the dual weights $\theta_t$ for a linear function approximator:
\[ \theta_{t+1} =  \nabla \psi_t(w_t) + \alpha_t (r_t + \gamma \phi(s_{t+1})^T w_t - \phi(s_t)^T w_t) e_t  \]
where $\psi$ is a distance generating function.

\STATE Set $w_{t+1} = \nabla \psi_t^*(\theta_{t+1})$ where  $\psi^*$ is the Legendre transform of $\psi$.

\STATE Set $t \leftarrow t+1$.

\UNTIL{{\bf done}}.

Return $\hat{V}^\pi \approx \Phi w_t $ as the value function associated with policy $\pi$ for MDP $M$.

\end{algorithmic}

\end{algorithm}

 Algorithm 1 describes the proposed mirror-descent TD($\lambda$) method.\footnote{All the algorithms described extend to the action-value case where $\phi(s)$ is replaced by $\phi(s,a)$. }  Unlike regular TD, the weights are updated using the TD error  in the dual space by mapping the primal weights $w$ using a gradient of a strongly convex function $\psi$. Subsequently, the updated dual weights are converted back into the primal space using the gradient of the Legendre transform of $\psi$, namely $\nabla \psi^*$. Algorithm 1 specifies the mirror descent  TD($\lambda$) algorithm wherein each weight $w_i$ is associated with an eligibility trace $e(i)$. For $\lambda = 0$, this is just the features of the current state $\phi(s_t)$, but for nonzero $\lambda$, this corresponds to a decayed set of features proportional to the recency of state visitations. Note that the distance generating function $\psi_t$ is a function of time.

\subsection{Choice of Bregman Divergence}\label{bd}
We now discuss various choices for the distance generating function in Algorithm 1. In the simplest case, suppose $\psi(w)  =  \frac{1}{2} \| w \|^2_2$, the Euclidean length of $w$.  In this case, it is easy to see that mirror descent TD($\lambda$) corresponds to regular TD($\lambda$), since the gradients $\nabla \psi$ and $\nabla \psi^*$ correspond to the identity function.
A much more interesting choice of $\psi$ is $\psi(w) = \frac{1}{2} \| w \|^2_q$,  and its conjugate Legendre transform  $\psi^*(w) = \frac{1}{2} \| w \|^2_p$. Here, $\|w\|_q = \left( \sum_j |w_j|^q \right)^{\frac{1}{q}}$, and $p$ and $q$ are conjugate numbers such that $\frac{1}{p} + \frac{1}{q} = 1$.
This $\psi(w)$ leads to the p-norm link function $\theta = f(w)$ where $f: \mathbb{R}^d \rightarrow \mathbb{R}^d$  \cite{gentile:mlj}:
\begin{equation}
\label{pnorm}
f_j(w) = \frac{\mbox{sign}(w_j) |w_j|^{q-1}}{\| w \|^{q-2}_q}, \ \ f^{-1}_j(\theta) = \frac{\mbox{sign}(\theta_j) |\theta_j|^{p-1}}{\| \theta \|^{p-2}_p}
\end{equation}
The p-norm function has been extensively studied in the literature on online learning \cite{gentile:mlj}, and it is well-known that for large $p$, the corresponding classification or regression method behaves like a multiplicative method (e.g., the p-norm regression method for large $p$ behaves like an exponentiated gradient method (EG) \cite{kivinen:95,littlestone:mlj}).

Another distance generating function is the negative entropy function $\psi(w) = \sum_i w_i \log w_i$, which leads to the entropic mirror descent algorithm \cite{Beck:2003p2359}. Interestingly, this special case has been previously explored  \cite{sutton-precup:egtd} as the exponentiated-gradient TD method, although the connection to mirror descent and Bregman divergences were not made in this previous study, and EG does not generate sparse solutions \cite{ShalevShwartz:jmlr}. We discuss EG methods vs. p-norm methods in Section~\ref{eg-vs-pnorm}.

\subsection{Sparse Learning with Mirror Descent TD}\label{l1mdtd}
%

\begin{algorithm}
\caption{Sparse Mirror Descent TD($\lambda$)}


\begin{algorithmic}[1]
\REPEAT

\STATE  Do action $\pi(s_t)$ and observe next state $s_{t+1}$ and reward $r_t$.

\STATE Update the eligibility trace $e_t \leftarrow e_t + \lambda \gamma \phi(s_t)$

\STATE Update the dual weights $\theta_t$:
\[ \tilde{\theta}_{t+1} = \nabla \psi_t(w_t) + \alpha_t \left( r_t + \gamma \phi(s_{t+1})^T w_t - \phi(s_t)^T w_t \right) e_t  \]
(e.g.,  $\psi(w) = \frac{1}{2} \| w \|^2_q$ is the p-norm link function).

\STATE Truncate weights:
\[ \forall j, \ \ \theta^{t+1}_j = \mbox{sign}(\tilde{\theta}^{t+1}_j) \max(0, |\tilde{\theta}^{t+1}_j| - \alpha_t \beta) \]

\STATE $w_{t+1} = \nabla \psi_t^*(\theta_{t+1})$ (e.g.,  $\psi^*(\theta) = \frac{1}{2} \|\theta\|^2_p$  and $p$ and $q$ are dual norms such that $\frac{1}{p} + \frac{1}{q} = 1$).

\STATE Set $t \leftarrow t+1$.

\UNTIL{{\bf done}}.

Return $\hat{V}^\pi \approx \Phi w_t $ as the $l_1$ penalized sparse value function associated with policy $\pi$ for MDP $M$.

\end{algorithmic}
\end{algorithm}

Algorithm 2 describes a modification to obtain sparse value functions resulting in a sparse mirror-descent TD($\lambda)$ algorithm. The main difference is that the dual weights $\theta$ are truncated according to Equation~\ref{l1prox} to satisfy the $l_1$ penalty on the weights. Here, $\beta$ is a sparsity  parameter. An analogous approach was suggested in \cite{ShalevShwartz:jmlr} for $l_1$ penalized classification and regression.

\subsection{Composite Mirror Descent TD}\label{comdtd}
%
%
Another possible mirror-descent TD algorithm uses as the distance-generating function a Mahalanobis distance derived from the subgradients generated during actual trials. We base our derivation on the composite mirror-descent approach proposed in \cite{duchi:jmlr} for classification and regression. The composite mirror-descent solves the following optimization problem at each step:
\begin{equation}
\label{adaptsubgmd}
w_{t+1} = \mbox{argmin}_{x \in X} \left( \alpha_t \langle x, \partial f_t \rangle + \alpha_t \mu(x)  + D_{\psi_t}(x, w_t) \right)
\end{equation}
Here, $\mu$ serves as a fixed regularization function, such as the $l_1$ penalty, and $\psi_t$ is the time-dependent distance generating function as in mirror descent. We now describe a different Bregman divergence to be used as the distance generating function in this method.  Given a positive definite matrix $A$, the Mahalanobis norm of a vector $x$ is defined as $\|x\|_A = \sqrt{\langle x, A x \rangle}$. Let $g_t = \partial f(s_t)$ be the subgradient of the function being minimized at time $t$, and $G_t = \sum_t g_t g_t^T$ be the covariance matrix of outer products of the subgradients. It is computationally more efficient to use the diagonal matrix $H_t = \sqrt{\mbox{diag}(G_t)}$ instead of the full covariance matrix, which can be expensive to estimate. Algorithm 3 describes the adaptive subgradient mirror descent TD method.

\begin{algorithm}
\caption{Composite Mirror Descent TD($\lambda$)}


\begin{algorithmic}[1]
\REPEAT

\STATE  Do action $\pi(s_t)$ and observe next state $s_{t+1}$ and reward $r_t$.

\STATE Set TD error $\delta_t = r_t + \gamma \phi(s_{t+1})^T w_t - \phi(s_t)^T w_t$

\STATE Update the eligibility trace $e_t \leftarrow e_t + \lambda \gamma \phi(s_t)$

\STATE Compute TD update $\xi_t = \delta_t e_t$.

\STATE Update feature covariance
\[G_{t} = G_{t-1} + \phi(s_t) \phi(s_t)^T\]
\STATE Compute Mahalanobis matrix  $H_{t} = \sqrt{\mbox{diag}(G_{t})}$.

\STATE Update the  weights $w$:
\[ w_{{t+1},i} = \mbox{sign}( w_{t,i} - \frac{\alpha_t \xi_{t,i}}{H_{{t},ii}})(|w_{t,i} - \frac{\alpha_t \xi_{t,i} }{H_{{t},ii}}| - \frac{ \alpha_t \beta}{H_{{t},ii}} )\]
\STATE Set $t \leftarrow t+1$.

\UNTIL{{\bf done}}.

Return $\hat{V}^\pi \approx \Phi w_t $ as the $l_1$ penalized sparse value function associated with policy $\pi$ for MDP $M$.

\end{algorithmic}
\end{algorithm}
\section{Convergence Analysis}

\textbf{Definition 2} \cite{LASSOTD:2011}: ${{\Pi}_{{l_{1}}}}$
is the $l_{1}$-regularized projection defined as:  ${\Pi_{{l_{1}}}}y=\Phi\alpha$ such that $\alpha=\arg{\min_{w}}{\left\Vert {y-\Phi w}\right\Vert ^{2}}+\beta{\left\Vert w\right\Vert _{1}}$, which is a non-expansive mapping w.r.t weighted $l_2$ norm induced by the on-policy sample distribution setting, as proven in \cite{LASSOTD:2011}. Let the approximation error $f(y,\beta)={\left\Vert {y-{\Pi_{{l_{1}}}}y}\right\Vert ^{2}}$.

\noindent \textbf{Definition 3} (Empirical $l_{1}$-regularized projection): ${{\hat{\Pi}}_{{l_{1}}}}$
is the empirical $l_{1}$-regularized projection with a specific $l_{1}$
regularization solver, and satisfies the non-expansive mapping property.
It can be shown using a direct derivation that ${{{\hat \Pi }_{{l_1}}}\Pi T}$ is a $\gamma$-contraction mapping. Any unbiased $l_1$ solver which generates intermediate sparse solution before convergence, e.g., SMIDAS solver after $t$-th iteration, comprises an  empirical $l_{1}$-regularized projection.

\noindent \textbf{Theorem 1} The approximation error $||V - \hat V|| $ of Algorithm 2
is bounded by (ignoring  dependence on $\pi$ for simplicity):
\begin{equation}
\begin{array}{*{20}{l}}
{|| {V - \hat V} || \le \frac{1}{{1 - \gamma }} \times }\\
{\left( {\left\| {V - \Pi V} \right\| + f(\Pi V,\beta ) + (M - 1)P(0) + \left\| {{w^*}} \right\|_1^2\frac{M}{{\alpha_t N}}} \right)}
\end{array}
\label{eq:theorem}
\end{equation}
where $\hat{V}$ is the approximated value function after $N$-th iteration, i.e., $\hat{V}=\Phi{w_{N}}$,
 $M=\frac{2}{{2-4\alpha_t(p-1)e}}$, $\alpha_t$ is the stepsize, $P(0) = \frac{1}{N}\sum\limits_{i = 1}^N {\left\| {\Pi V({s_i})} \right\|_2^2} $, $s_i$ is the state of $i$-th sample,  $e = {d^{\frac{p}{2}}}$, $d$ is the number of features, and finally, $w^*$ is $l_1$-regularized projection of $\Pi V$ such that $\Phi {w^*} = {\Pi _{{l_1}}}\Pi V$.

\noindent \textbf{Proof:} In the on-policy setting, the solution given by Algorithm 2 is the fixed point
of $\hat V = {\hat \Pi _{{l_1}}}\Pi T\hat V$
 and the error decomposition is illustrated in Figure \ref{Fio:Error2}.
\begin{figure}
\centering
\includegraphics[width=2in,height=1.5in]{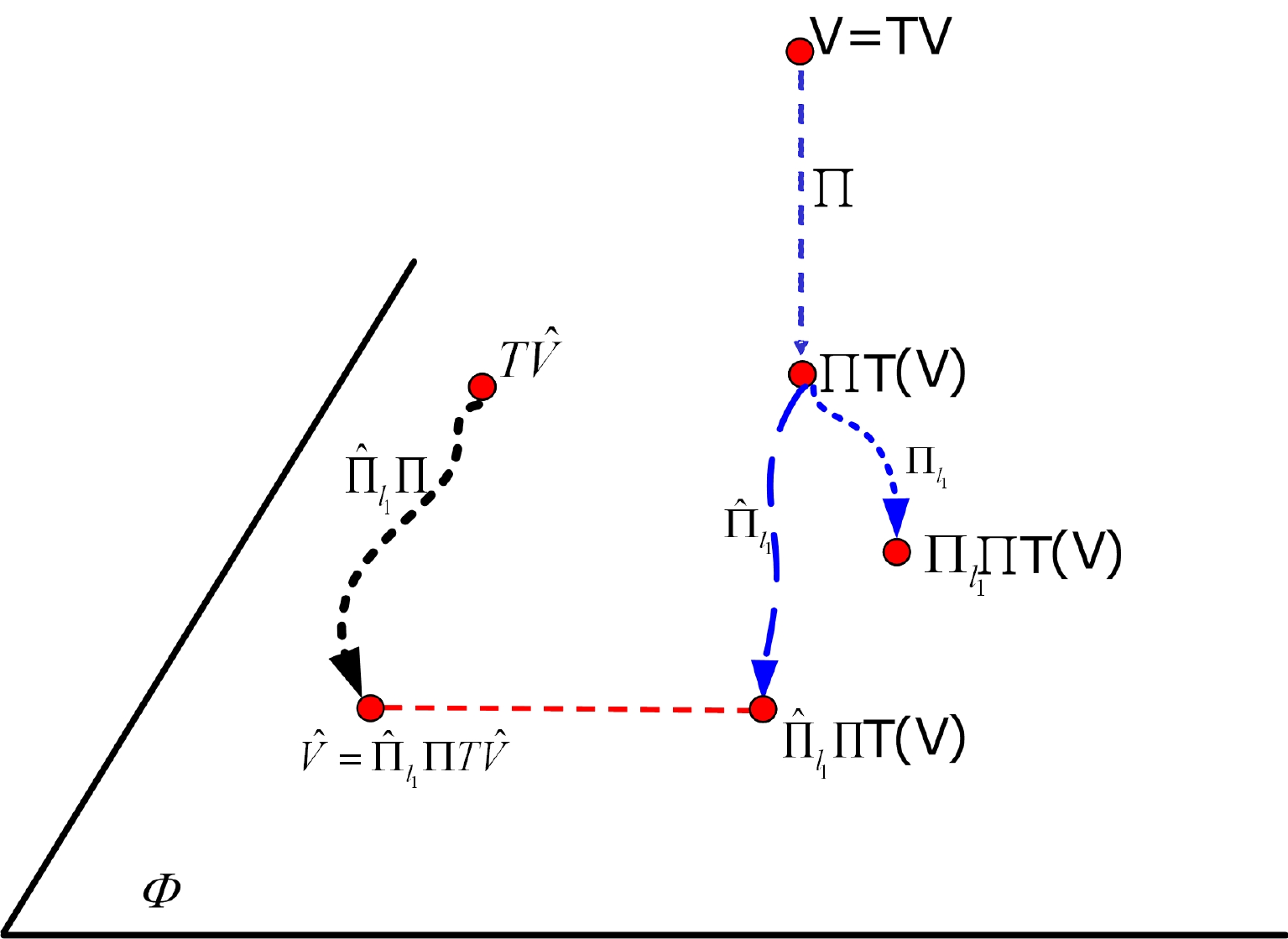}
\caption{Error Bound and Decomposition}
\label{Fio:Error2}
\end{figure}
The error can be bounded by the triangle inequality
\begin{small}
\begin{equation}
|| {V - \hat V} || = || {V - \Pi TV} || + ||\Pi TV - {\hat \Pi _{{l_1}}}\Pi TV|| +  ||{\hat \Pi _{{l_1}}}\Pi TV - \hat V||
 \label{eq:triangular}
 \end{equation}
 \end{small}
Since ${{{\hat \Pi }_{{l_1}}}\Pi T}$ is a $\gamma$-contraction mapping,
and $\hat V = {\hat \Pi _{{l_1}}}\Pi T\hat V$, we have
\begin{equation}
||{\hat \Pi _{{l_1}}}\Pi TV - \hat V|| = ||{\hat \Pi _{{l_1}}}\Pi TV - {\hat \Pi _{{l_1}}}\Pi T\hat V|| \le \gamma ||V - \hat V||
\label{eq:gamma}\end{equation}
So we have
\[(1 - \gamma )|| {V - \hat V} || \le || {V - \Pi TV} || + ||\Pi TV - {\hat \Pi _{{l_1}}}\Pi TV||\]
$\left\Vert {V-\Pi TV}\right\Vert $ depends on the expressiveness
of the basis $\Phi$, where if $V$ lies in $span(\Phi)$, this error term
is zero. $||\Pi TV - {\Pi _{{l_1}}}\hat \Pi TV||$
is further bounded by the triangle inequality
\[\begin{array}{*{20}{l}}
{||\Pi TV - {{\hat \Pi }_{{l_1}}}\Pi TV|| \le }\\
{||\Pi TV - {\Pi _{{l_1}}}\Pi TV|| + ||{\Pi _{{l_1}}}\Pi TV - {{\hat \Pi }_{{l_1}}}\Pi TV||}
\end{array}\]
where $\left\Vert {\Pi TV-{\Pi_{{l_{1}}}}\Pi TV}\right\Vert $ is controlled
by the sparsity  parameter $\beta$, i.e.,
$f(\Pi TV,\beta ) = || {\Pi TV - {\Pi _{{l_1}}}\Pi TV} ||$,
where \noindent $\varepsilon  = || {{{\hat \Pi }_{{l_1}}}\Pi TV - {\Pi _{{l_1}}}\Pi TV} ||$
is the approximation error depending on the quality of the $l_{1}$ solver employed. In Algorithm 2, the $l_{1}$
solver is related to the SMIDAS $l_1$ regularized mirror-descent method for regression and classification \cite{ShalevShwartz:jmlr}. Note that for a squared loss function
$L(\left\langle {w,{x_{i}}}\right\rangle ,{y_{i}})=||\left\langle {w,{x_{i}}}\right\rangle -{y_{i}}||_{2}^{2}$, we have ${\left|{L'}\right|^{2}}\le 4L$.
Employing the result of Theorem 3 in \cite{ShalevShwartz:jmlr}, after the $N$-th iteration, the $l_{1}$ approximation error is bounded by
\[\varepsilon  \le (M - 1)P(0) + || {{w^*}} ||_1^2\frac{M}{{\alpha_t N}},M = \frac{2}{{2 - 4\alpha_t (p - 1)e}}\]
By rearranging the terms and applying $V=TV$, Equation (\ref{eq:theorem}) can be deduced.

\section{Experimental Results: Discrete  MDPs}
Figure~\ref{ogrid-results} shows that mirror-descent TD converges more quickly with far smaller Bellman errors than LARS-TD \cite{lars-td} on a discrete ``two-room" MDP \cite{pvf-jmlr1}. The basis matrix $\Phi$ was automatically generated as $50$ proto-value functions  by diagonalizing the graph Laplacian of the discrete state space connectivity graph\cite{pvf-jmlr1}. The figure also shows that Algorithm 2 (sparse mirror-descent TD) scales more gracefully than LARS-TD. Note LARS-TD is unstable for $\gamma = 0.9$.
It should be noted that the computation cost of LARS-TD is $O(Ndm^3)$, whereas that for Algorithm 2 is $O(Nd)$, where $N$ is the number of samples, $d$ is the number of basis functions, and $m$ is the number of active basis functions. If $p$ is linear or sublinear w.r.t $d$, Algorithm 2 has a significant advantage over LARS-TD.
\begin{figure}[ht]
\begin{center}
\begin{minipage}[t]{0.4\textwidth}
\includegraphics[height=0.15\textheight,width=2.8in]{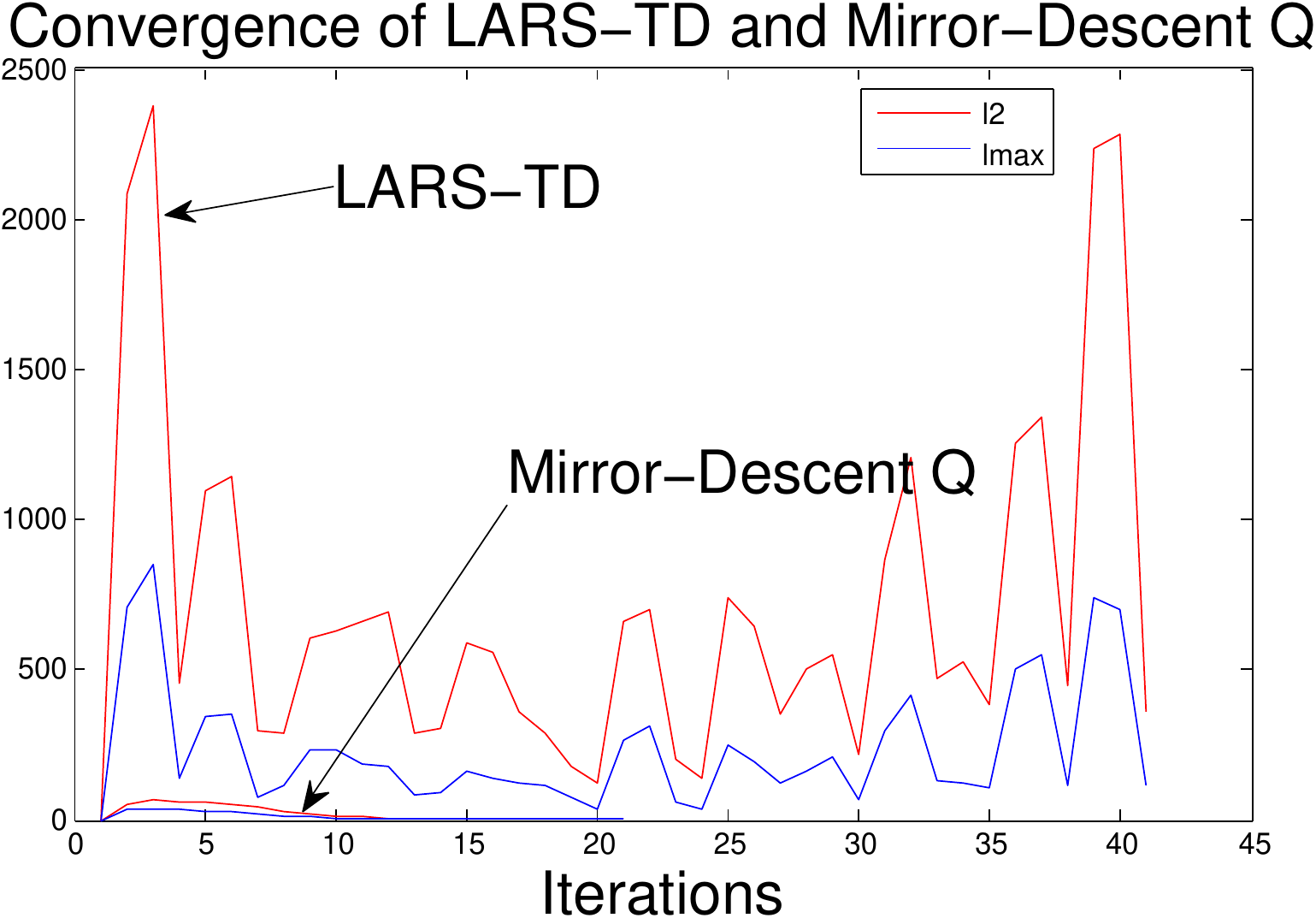}\\
\includegraphics[height=0.15\textheight,width=2.8in]{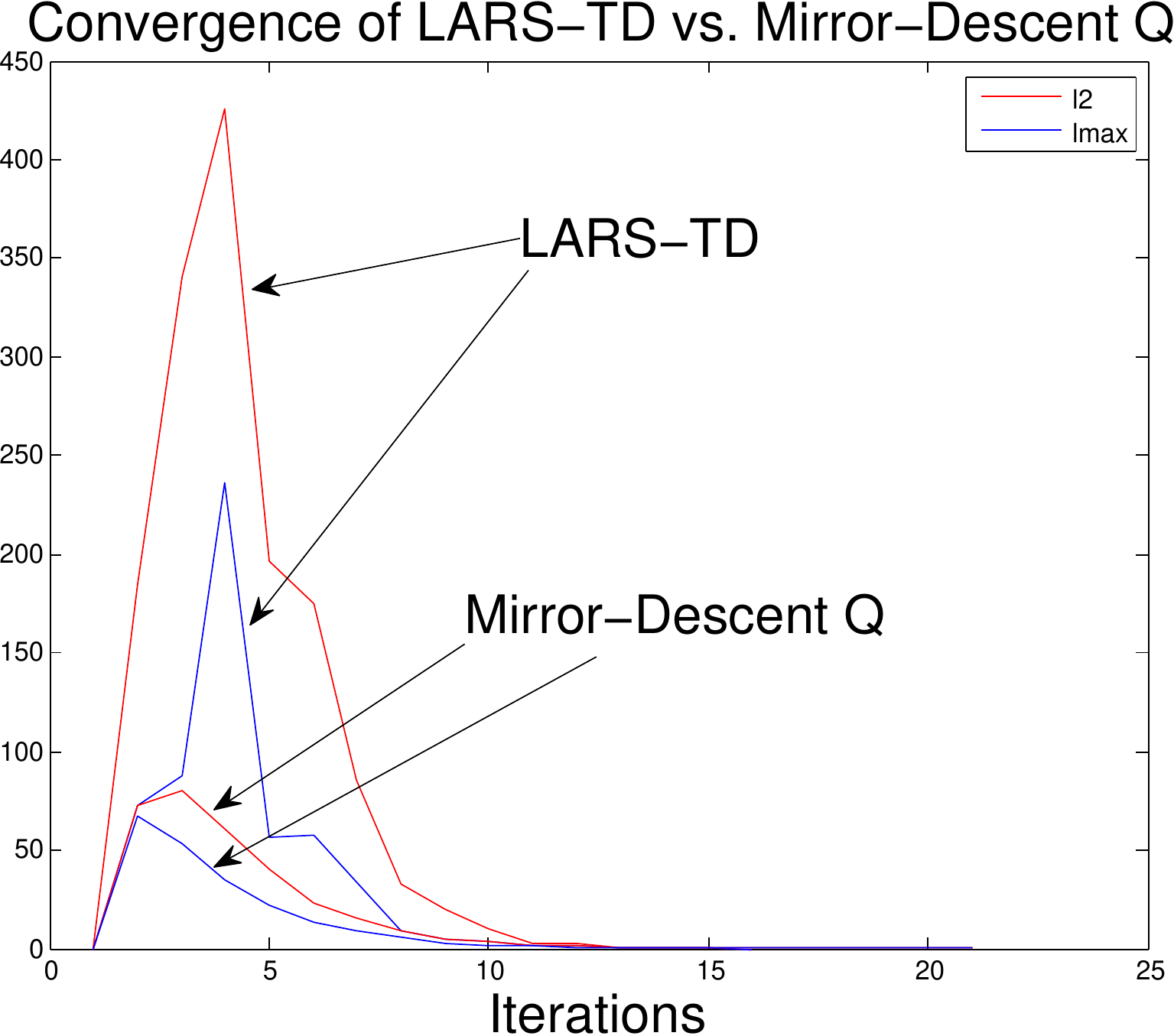}\\
\includegraphics[height=0.15\textheight,width=2.8in]{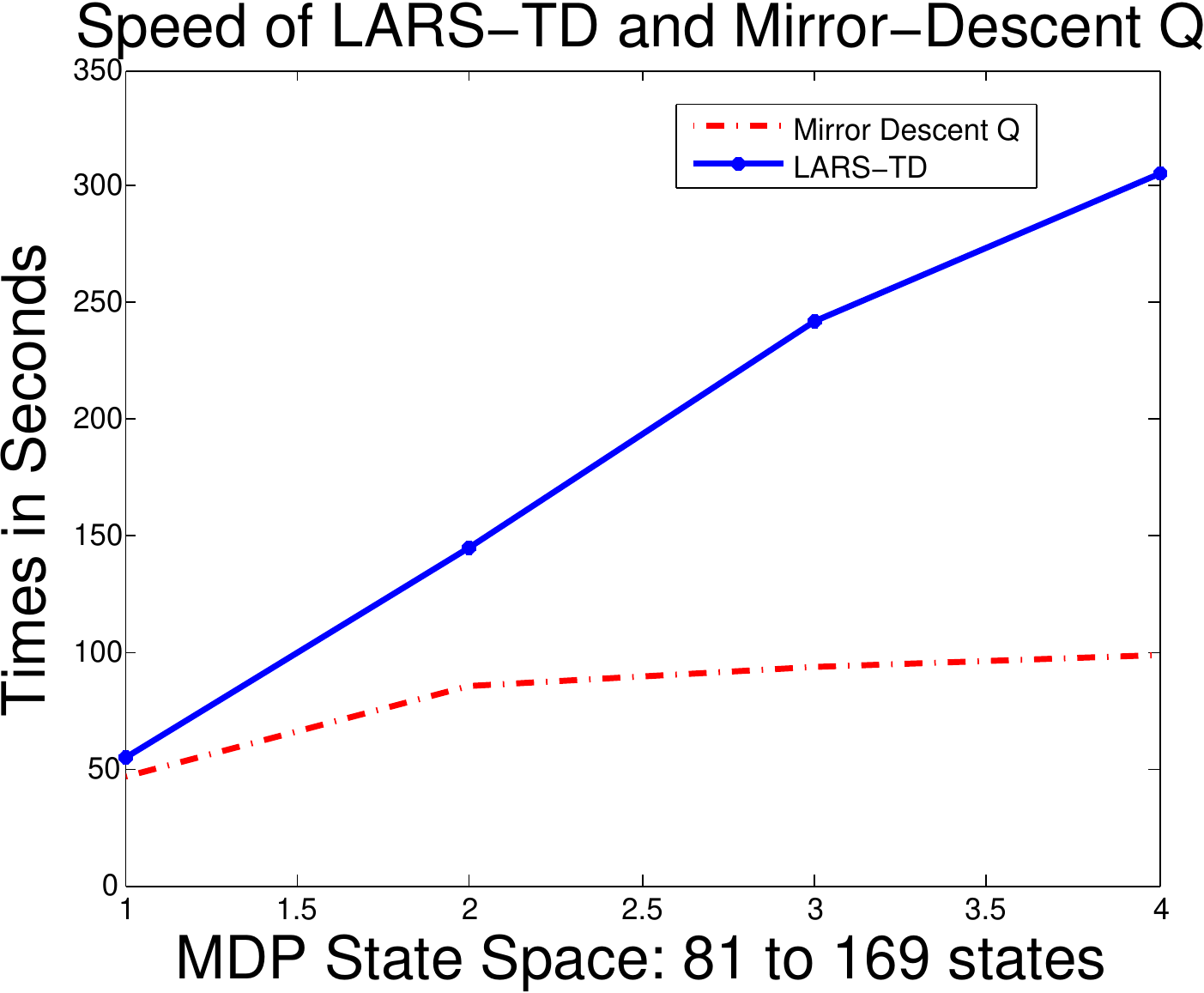}
\end{minipage}
\end{center}
\caption{Mirror-descent Q-learning converges significantly faster than LARS-TD on a ``two-room" grid world MDP for $\gamma = 0.9$ (top left) and $\gamma = 0.8$ (top right).  The y-axis measures the $l_2$ (red curve) and $l_\infty$ (blue curve) norm difference between successive weights during policy iteration. Bottom: running times for LARS-TD (blue solid) and mirror-descent Q (red dashed).  Regularization  $\beta  = 0.01$. }
\label{ogrid-results}
\end{figure}

\begin{figure}[ht]
\centering
\begin{minipage}[t]{0.6\textwidth}
 \includegraphics[width=3in]{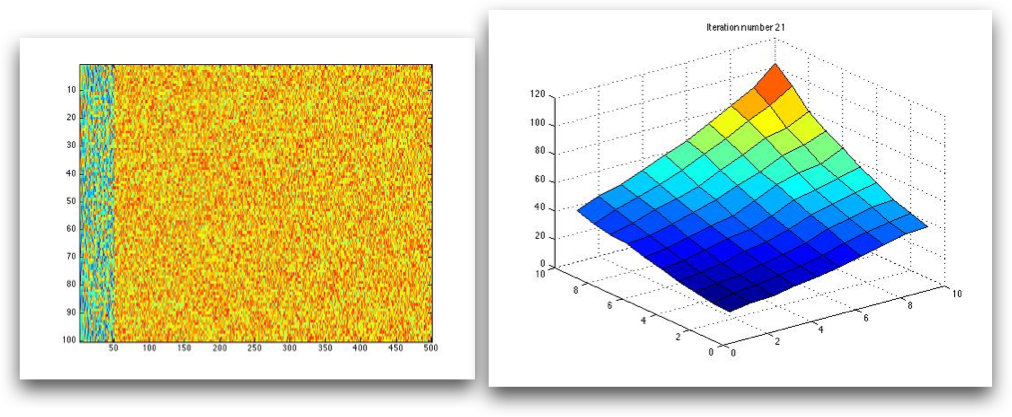}
\end{minipage}  \hspace{0.2in}
\caption{Sensitivity of sparse mirror-descent TD to noisy features in a grid-world domain. Left: basis matrix with the first 50 columns representing proto-value function bases and the remainder 450 bases representing mean-0 Gaussian noise. Right: Approximated value function using sparse mirror-descent TD.  }
\label{grid-noisypvfs}
\end{figure}

Figure~\ref{grid-noisypvfs} shows the result of another experiment conducted to test the noise immunity of Algorithm 2 using a discrete $10 \times 10$ grid world domain with the goal set at the upper left hand corner. For this problem, $50$ proto-value basis functions were automatically generated, and $450$ random Gaussian mean $0$ noise features were added. The sparse mirror descent TD algorithm was able to generate a very good approximation to the optimal value function despite the large number of irrelevant noisy features, and took a fraction of the time required by LARS-TD.

Figure~\ref{pnorm-decay} compares the performance of mirror-descent Q-learning with a fixed p-norm link function vs. a decaying p-norm link function for a $10 \times 10$ discrete grid world domain with the goal state in the upper left-hand corner. Initially, $p = O(\log d)$ where $d$ is the number of features, and subsequently $p$ is decayed to a minimum of $p=2$. Varying $p$-norm interpolates between additive and multiplicative updates. Different values of $p$ yield an interpolation between the truncated gradient method \cite{Lanford:2008} and SMIDAS \cite{SMIDAS:2009}. 

\begin{figure}[ht]
\begin{center}
\begin{minipage}[t]{0.4\textwidth}
\includegraphics[totalheight=0.13\textheight,width=2.8in]{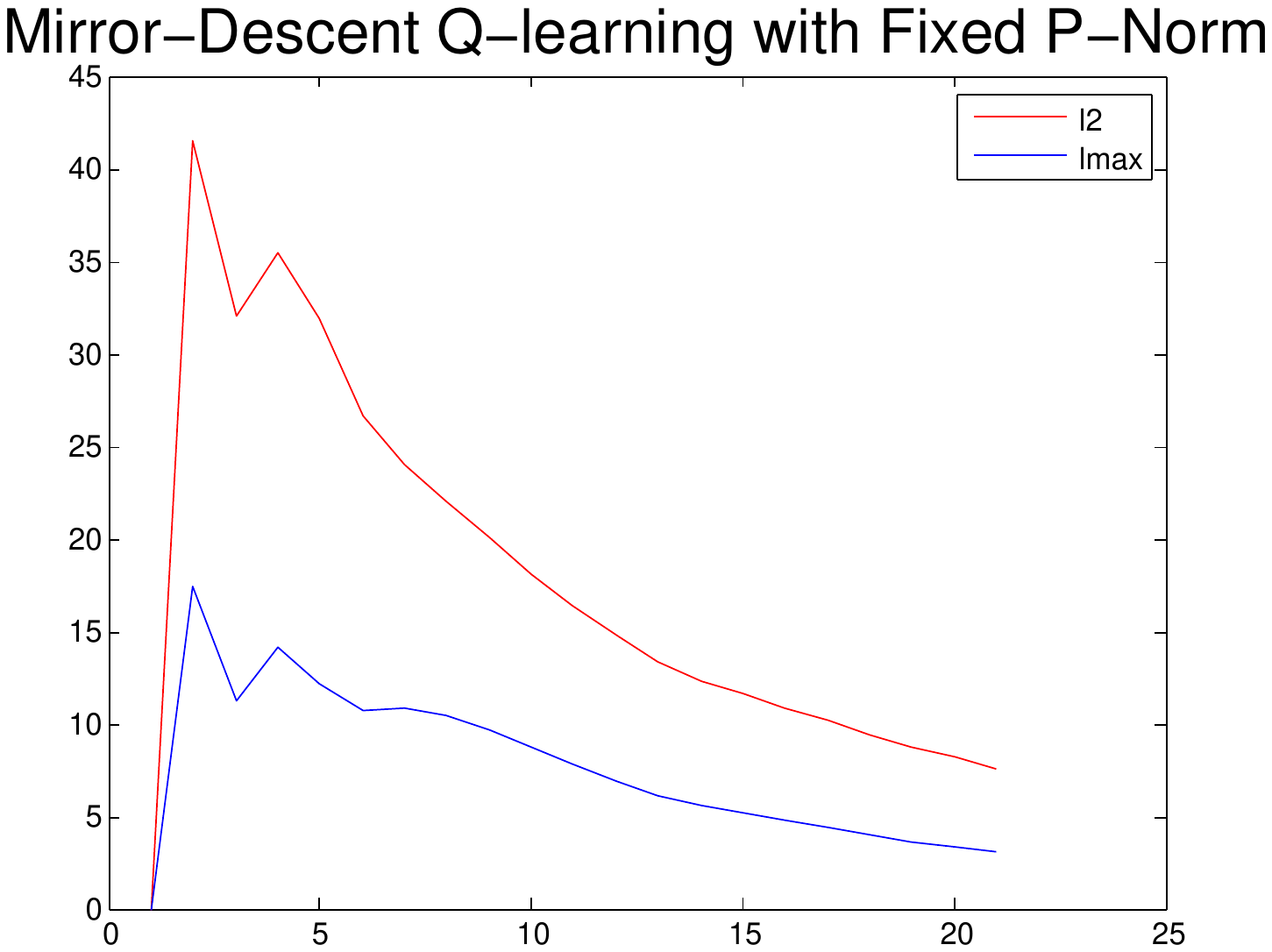}\\
\includegraphics[totalheight=0.13\textheight,width=2.8in]{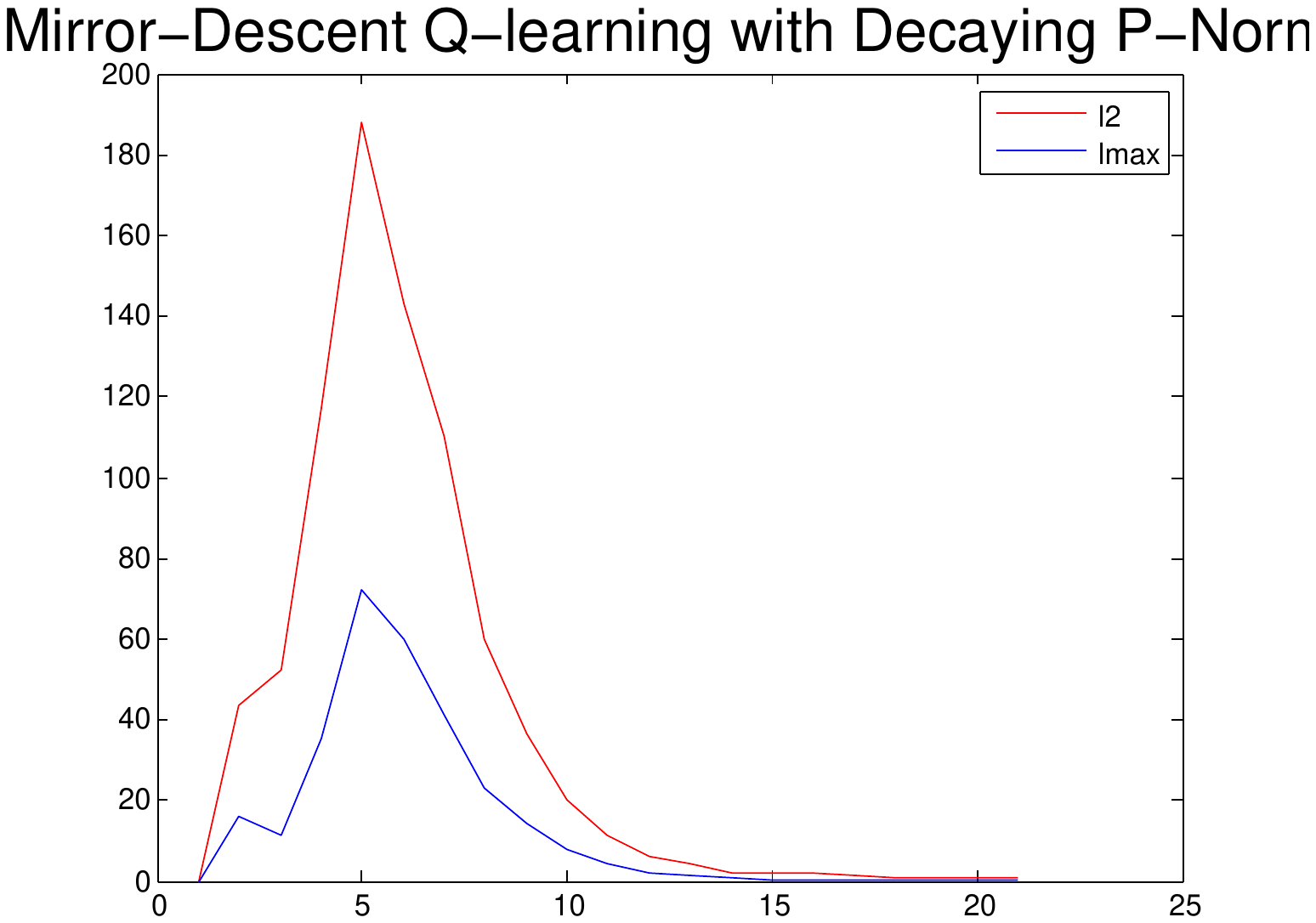}
\end{minipage}
\end{center}
\caption{Left: convergence of mirror-descent Q-learning with a fixed p-norm link function. Right: decaying p-norm link function. }
\label{pnorm-decay}
\end{figure}
\begin{figure}[t]
\begin{minipage}[t]{0.2\textwidth}
\includegraphics[totalheight=0.13\textheight, width=2in]{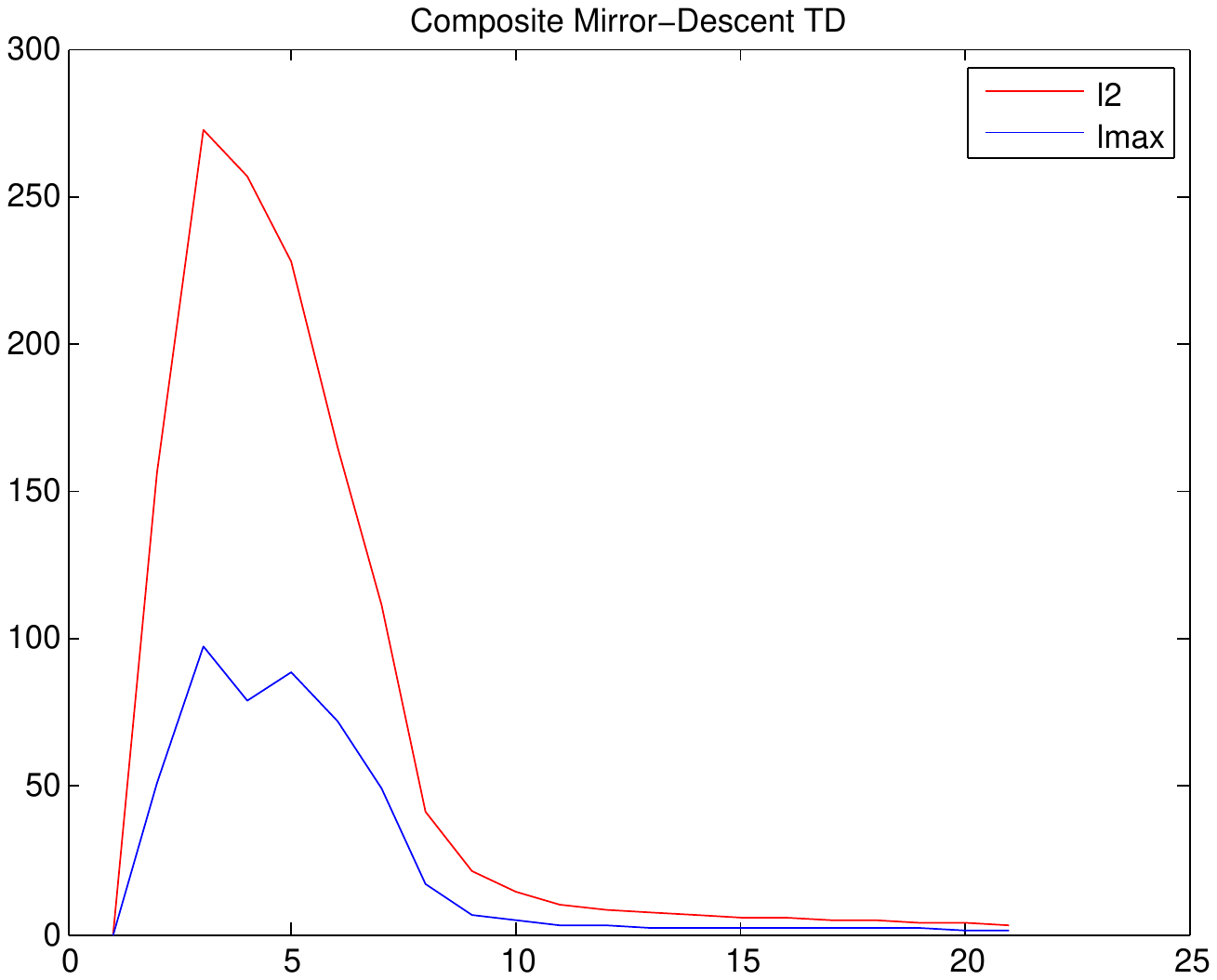}
\end{minipage}
\begin{minipage}[t]{0.2\textwidth}
 \includegraphics[totalheight=0.13\textheight, width=2in]{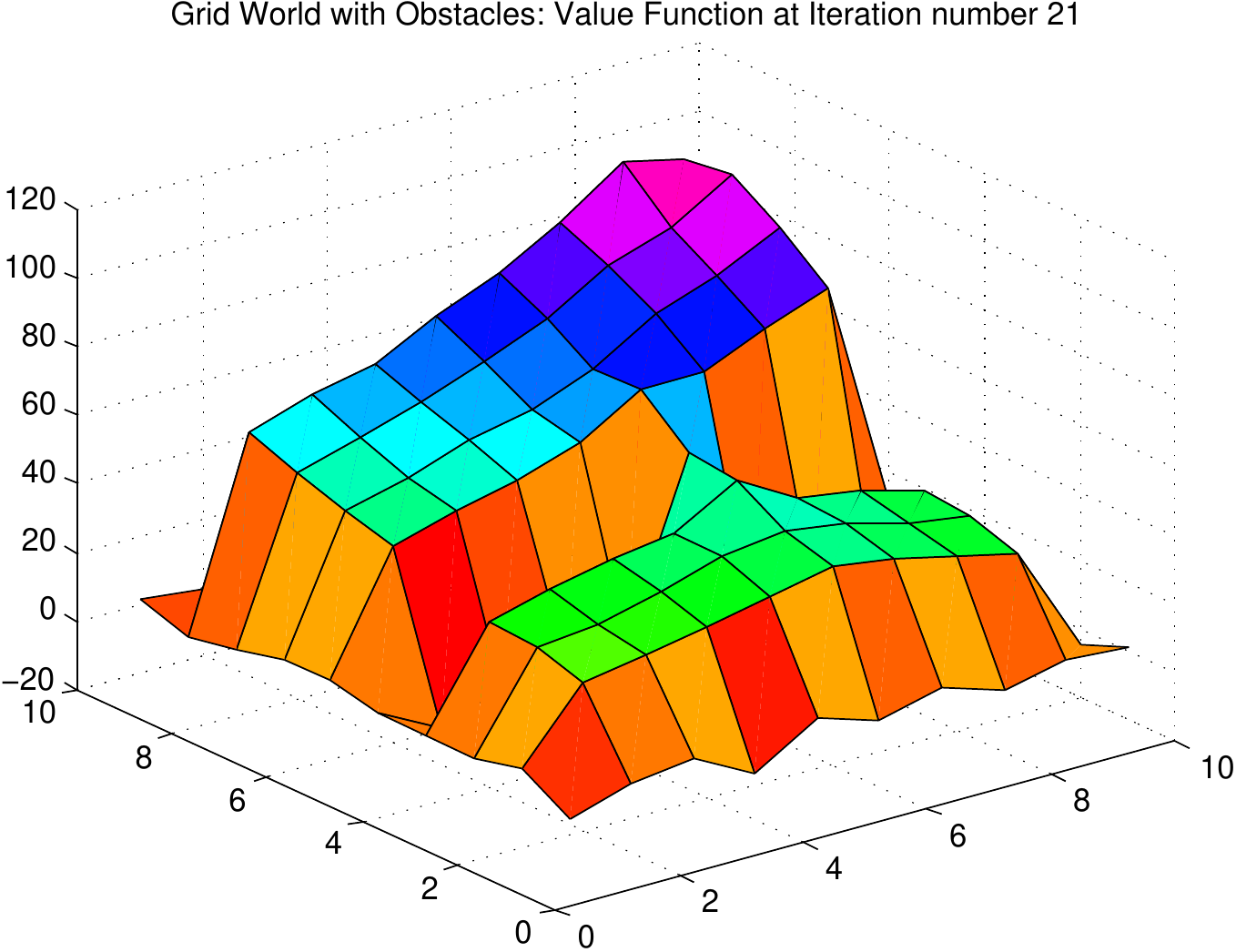}
\end{minipage}
\caption{Left: Convergence of composite mirror-descent Q-learning on  two-room gridworld domain. Right: Approximated value function, using $50$ proto-value function bases.}
\label{comd-ogrid-results}
\end{figure}

Figure~\ref{comd-ogrid-results} illustrates the performance of Algorithm 3 on the two-room discrete grid world navigation task.

\section{Experimental Results: Continuous  MDPs}\label{contmdp}
Figure~\ref{mcar-experiment} compares the performance of Q-learning vs.  mirror-descent Q-learning for the mountain car task, which converges more quickly to a better solution with much lower variance. Figure~\ref{acrobot-experiment}  shows that mirror-descent Q-learning with learned diffusion wavelet bases converges quickly on the $4$-dimensional Acrobot task. We found in our experiments that LARS-TD did not converge within $20$ episodes (its curve, not shown in Figure~\ref{mcar-experiment},  would be flat on the vertical axis at $1000$ steps).
\begin{figure}[tbh]
\begin{center}
\begin{minipage}[t]{0.4\textwidth}
 \includegraphics[height=1.5in, width=2in]{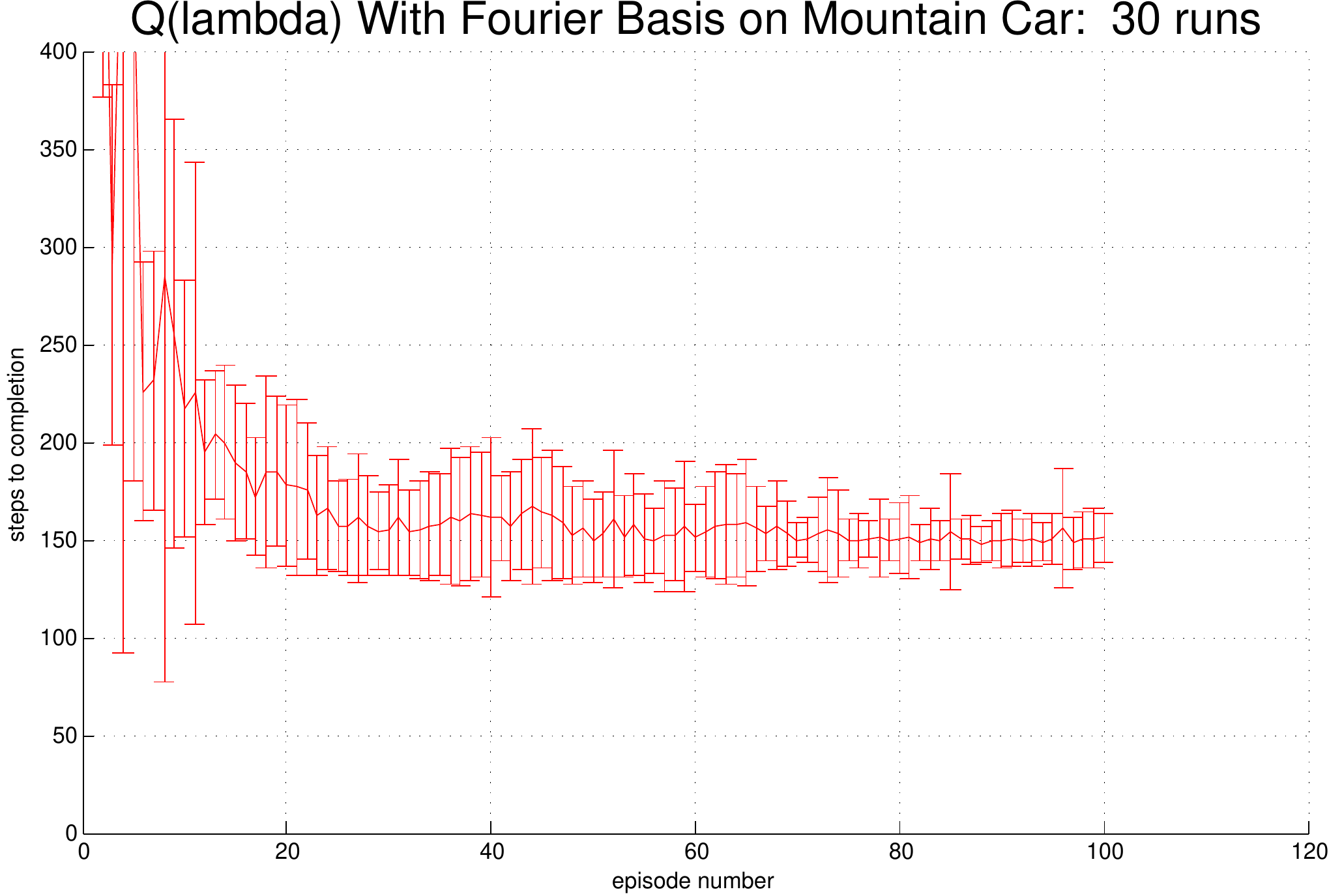}
\end{minipage} \hspace{.1in}
\begin{minipage}[t]{0.4\textwidth}
 \includegraphics[height=1.5in, width=2.5in]{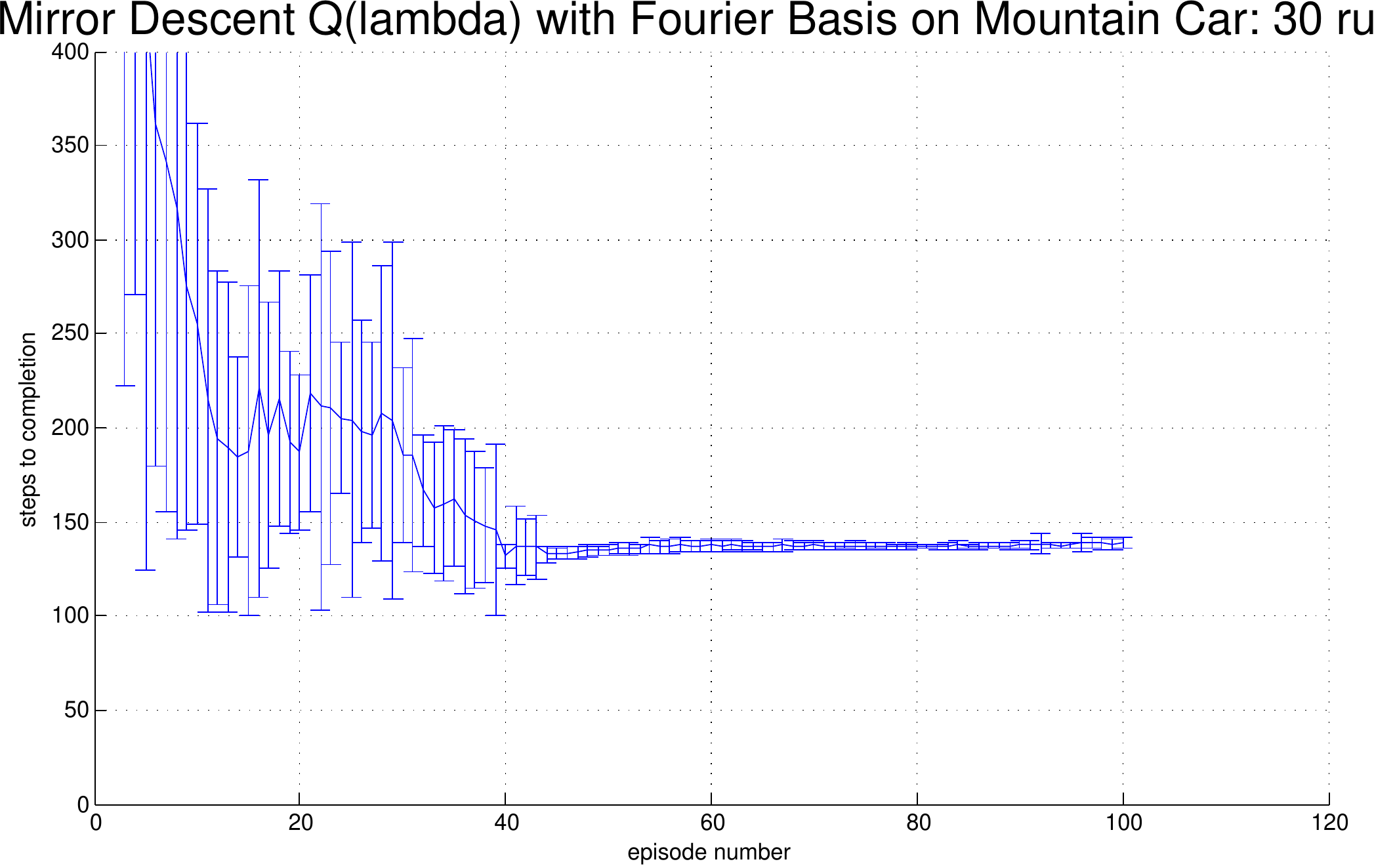}
\end{minipage}
\end{center}
\caption{Top: Q-learning; Bottom: mirror-descent Q-learning with p-norm link function, both with $25$ fixed Fourier bases \cite{konidaris2008value} for the mountain car task. }
\label{mcar-experiment}
\end{figure}
\begin{figure}[tbh]
\begin{center}
\begin{minipage}[t]{0.45\textwidth}
\includegraphics[height=1.35in, width=2in]{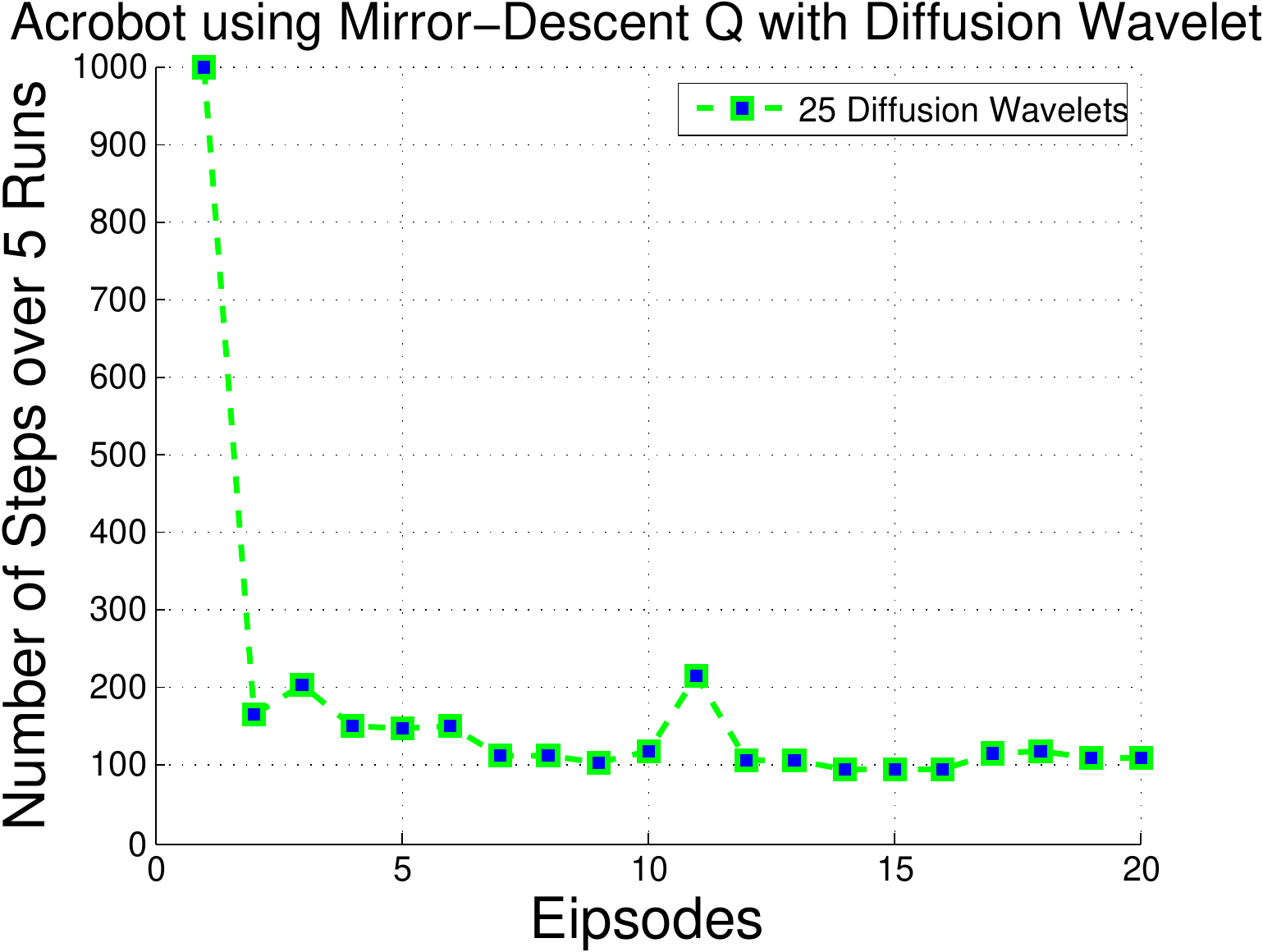}
\end{minipage}
\end{center}
\caption{Mirror-descent Q-learning on the Acrobot task using automatically generated diffusion wavelet bases  averaged over $5$ trials. }
\label{acrobot-experiment}
\end{figure}
Finally, we tested the mirror-descent approach on a more complex $8$-dimensional continuous MDP. The triple-link inverted pendulum \cite{si2001online}  is a highly nonlinear time-variant under-actuated system, which is a standard benchmark testbed in the control community. We base our simulation using the  system parameters described in  \cite{si2001online}, except that the action space is discretized because the algorithms described here are restricted to policies with discrete actions. There are three actions, namely $\{ 0,5{\rm{Newton}}, - 5{\rm{Newton}}\} $. The state space is $8$-dimensional, consisting of the angles made to the horizontal of the three links in the arm as well as their angular velocities, the position and velocity of the cart used to balance the pendulum.  The goal is to learn a policy that can balance the system with the  minimum number of episodes. A run is successful if it balances the inverted pendulum for the specified number of steps within  $300$ episodes, resulting in a reward of $0$. Otherwise, this run is considered as a failure and yields  a negative reward $-1$. The first action is chosen randomly to push the pendulum away from initial state. Two experiments were conducted on the triple-link pendulum domain with $20$ runs
for each experiment. As Table 1 shows, Mirror Descent Q-learning is able to learn the policy  with fewer episodes and  usually with reduced variance compared with  regular Q-learning.

The experiment settings are Experiment 1: Zero initial state and the system receives a reward $1$ if it is able to balance 10,000 steps.
Experiment 2: Zero initial state and the system receives a reward $1$ if it is able
to balance 100,000 steps.
Table 1 shows the comparison result between regular Q-learning and Mirror Descent
Q-learning.
\begin{table}
\centering
\begin{tabular}{|c|c|c|}
\hline
\# of Episodes\textbackslash{}Experiment & 1 & 2\tabularnewline
\hline
Q-learning                 & $6.1 \pm 5.67$    & $15.4 \pm 11.33$  \tabularnewline
\hline
Mirror Descent Q-learning & $5.7 \pm 9.70$     & $11.8 \pm 6.86$    \tabularnewline
\hline
\end{tabular}
\caption{Results on Triple-Link Inverted Pendulum Task.}
\label{table1}
\end{table}

\section{Comparison of Link Functions}\label{eg-vs-pnorm}
The  two most widely used link functions in mirror descent are the $p$-norm link function \cite{Beck:2003p2359} and the relative entropy function for exponentiated gradient (EG) \cite{kivinen:95}. Both of these link functions offer a multiplicative update rule compared with regular additive gradient methods. The differences between these two are discussed here. Firstly, the loss function for  EG is the relative entropy whereas that of the $p$-norm link function is the square $l_2$-norm function. Second and more importantly,  EG does not produce sparse solutions since it must maintain the weights away from zero, or else its potential (the relative entropy)  becomes unbounded at the boundary.

Another advantage of $p$-norm link functions over EG is that the $p$-norm link function offers a flexible interpolation between additive and multiplicative gradient updates. It has been shown that when the features are dense and the optimal coefficients $\theta^*$ are sparse, EG converges faster than the regular additive gradient methods \cite{kivinen:95}. However, according to our experience, a significant drawback of EG is the overflow of the coefficients due to the exponential operator. To prevent overflow, the most commonly used technique is rescaling: the weights are re-normalized to sum to a constant. However, it seems that this approach does not always work.  It has been pointed out  \cite{sutton-precup:egtd} that in the EG-Sarsa algorithm,  rescaling can fail, and replacing eligible traces instead of regular additive eligible traces is used to prevent overflow. EG-Sarsa usually poses restrictions on the basis as well. Thanks to the flexible interpolation capability between multiplicative and additive gradient updates, the $p$-norm link function is more robust and applicable to various basis functions, such as polynomial, radial basis function (RBF), Fourier basis \cite{konidaris2008value}, proto-value functions (PVFs), etc.

\section{Summary}

We proposed a novel framework for reinforcement learning using mirror-descent online convex optimization. Mirror Descent Q-learning demonstrates the following advantage over regular Q learning: faster convergence rate and reduced variance due to larger stepsizes with theoretical convergence guarantees \cite{nemirovski:siam}. Compared with existing sparse reinforcement learning algorithms such as LARS-TD, Algorithm 2 has lower sample complexity and lower computation cost, advantages accrued from the first-order mirror descent framework combined with proximal mapping \cite{ShalevShwartz:jmlr}.
There are many promising future research topics along this direction.
We are currently exploring a mirror-descent fast-gradient RL method, which is both convergent off-policy and quicker than fast gradient TD methods such as GTD and TDC  \cite{Sutton09fastgradient-descent}. To scale to large MDPs, we are investigating hierarchical mirror-descent RL methods, in particular extending SMDP Q-learning. We are also undertaking a  more detailed theoretical analysis of the mirror-descent RL framework, building on existing analysis of mirror-descent methods \cite{duchi:jmlr,ShalevShwartz:jmlr}. Two types of theoretical investigations are being explored: regret bounds of mirror-descent TD methods, extending previous results \cite{warmuth:mlj} and  convergence analysis combining robust stochastic approximation \cite{nemirovski:siam} and RL theory \cite{ndp:book,borkar:book}.

\chapter{Regularized Off-Policy Temporal Difference Learning}\label{chapter:Regularized-Off-Policy-TD}

In the last chapter we proposed an on-policy convergent sparse TD learning algorithm.
Although TD converges when samples are drawn ``on-policy''
by sampling from the Markov chain underlying a policy in a Markov
decision process (MDP), it can be shown to be divergent when samples
are drawn ``off-policy''.

In this chapter, the off-policy TD learning problem is formulated
from the stochastic optimization perspective. \footnote{This chapter is based on the paper "Regularized Off-Policy TD-Learning" published in NIPS 2012.} A novel objective function
is proposed based on the linear equation formulation of the TDC algorithm.
The optimization problem underlying off-policy TD methods, such as
TDC, is reformulated as a convex-concave saddle-point stochastic approximation
problem, which is both convex and incrementally solvable. A detailed
theoretical and experimental study of the RO-TD algorithm is presented.

\section{Introduction}
\subsection{Off-Policy Reinforcement Learning}

Off-policy learning refers to learning about one way of behaving,
called the \emph{target policy}, from sample sets that are generated
by another policy of choosing actions, which is called the \emph{behavior
policy}, or exploratory policy. As pointed out
in \cite{maei2011gradient}, the target policy is often a deterministic
policy that approximates the optimal policy, and the behavior policy
is often stochastic, exploring all possible actions in each state
as part of finding the optimal policy. Learning the target policy
from the samples generated by the behavior policy allows a greater
variety of exploration strategies to be used. It also enables learning
from training data generated by unrelated controllers, including manual
human control, and from previously collected data. Another reason
for interest in off-policy learning is that it enables learning about
multiple target policies (e.g., optimal policies for multiple sub-goals)
from a single exploratory policy generated by a single behavior policy,
which triggered an interesting research area termed as ``parallel
reinforcement learning''.
Besides, off-policy methods are of
wider applications since they are able to learn while executing an
exploratory policy, learn from demonstrations, and learn multiple
tasks in parallel \cite{OFFACTOR:2012}. Sutton et al. \cite{FastGradient:2009}
introduced convergent off-policy temporal difference learning algorithms,
such as TDC, whose computation time scales linearly with the number
of samples and the number of features. Recently, a linear off-policy
actor-critic algorithm based on the same framework was proposed in
\cite{OFFACTOR:2012}.

\subsection{Convex-concave Saddle-point First-order Algorithms \label{sec:saddle}}

The key novel contribution of this chapter is a convex-concave saddle-point
formulation for regularized off-policy TD learning. A convex-concave
saddle-point problem is formulated as follows. Let $x\in X,y\in Y$,
where $X,Y$ are both nonempty bounded closed convex sets, and $f(x):X\to\mathbb{R}$
be a convex function. If there exists a function $\varphi(\cdot,\cdot)$
such that $f(x)$ can be represented as $f(x):={\sup_{y\in Y}}\varphi(x,y)$,
then the pair $(\varphi,Y)$ is referred as the saddle-point representation
of $f$. The optimization problem of minimizing $f$ over $X$ is
converted into an equivalent convex-concave saddle-point problem $SadVal={\inf_{x\in X}}{\sup_{y\in Y}}\varphi(x,y)$
of $\varphi$ on $X\times Y$. If $f$ is non-smooth yet convex and
well structured, which is not suitable for many existing optimization
approaches requiring smoothness, its saddle-point representation $\varphi$
is often smooth and convex. Thus, convex-concave saddle-point problems
are, therefore, usually better suited for first-order methods \cite{sra2011optimization}.
A comprehensive overview on extending convex minimization to convex-concave
saddle-point problems with unified variational inequalities is presented
in \cite{ben2005non}. 
As an example, consider $f(x)=||Ax-b|{|_{m}}$ which admits a bilinear
minimax representation
\begin{equation}
f(x):={\left\Vert {Ax-b}\right\Vert _{m}}={\max_{{{\left\Vert y\right\Vert }_{n}}\le1}}{y^{T}}(Ax-b)\label{eq:minimax}
\end{equation}
where $m,n$ are conjugate numbers. Using the approach in \cite{RobustSA:2009},
Equation (\ref{eq:minimax}) can be solved as
\begin{equation}
{x_{t+1}}={x_{t}}-{\alpha_{t}}{A^{T}}{y_{t}},{y_{t+1}}={\Pi_{n}}({y_{t}}+{\alpha_{t}}(A{x_{t}}-b))\label{eq:minimaxFOM1}
\end{equation}
where $\Pi_{n}$ is the projection operator of $y$ onto the unit
$l_{n}$-ball ${\left\Vert y\right\Vert _{n}}\le{\rm {1}}$,which
is defined as
\begin{equation}
{\Pi_{n}}(y)=\min(1,1/{\left\Vert y\right\Vert _{n}})y,n=2,3,\cdots,{\Pi_{\infty}}{\rm {}}({y_{i}})=\min(1,1/|{y_{i}}|){y_{i}}\label{l2proj}
\end{equation}
and $\Pi_{\infty}$ is an entrywise operator.

\section{Problem Formulation}

\subsection{Objective Function Formulation}

Now let's review the concept of MSPBE. MSPBE is defined as
\begin{equation}
\begin{array}{l}
{\rm {MSPBE}}(\theta)\\
=\left\Vert {\Phi\theta-\Pi T(\Phi\theta)}\right\Vert _{\Xi}^{2}\\
={({\Phi^{T}}\Xi(T\Phi\theta-\Phi\theta))^{T}}{({\Phi^{T}}\Xi\Phi)^{-1}}{\Phi^{T}}\Xi(T\Phi\theta-\Phi\theta)\\
=\mathbb{E}{[\delta_t(\theta)\phi_t]^{T}}\mathbb{E}{[\phi_t{\phi_t^{T}}]^{-1}}\mathbb{E}[\delta_{t}(\theta)\phi_t]
\end{array}
\label{mspbe}
\end{equation}
To avoid computing the inverse matrix ${({\Phi^{T}}\Xi\Phi)^{-1}}$
and to avoid the double sampling problem \cite{sutton-barto:book}
in (\ref{mspbe}), an auxiliary variable $w$ is defined
\begin{equation}
w=\mathbb{E}{[\phi_t{\phi_t^{T}}]^{-1}}\mathbb{E}[\delta_{t}(\theta)\phi_t]={({\Phi^{T}}\Xi\Phi)^{-1}}{\Phi^{T}}\Xi(T\Phi\theta-\Phi\theta)
\label{eq:w}
\end{equation}

Thus we can have the following linear inverse problem
\begin{equation}
\mathbb{E}[\delta_{t}(\theta)\phi_t]=\mathbb{E}[\phi_t{\phi_t^{T}}]w=({\Phi^{T}}\Xi\Phi)w={\Phi^{T}}\Xi(T\Phi\theta-\Phi\theta)
\label{eq:tdc_lip1}
\end{equation}

By taking gradient w.r.t $\theta$ for optimum condition $\nabla{\rm {MSPBE}}(\theta)=0$
and utilizing Equation (\ref{eq:w}), we have
\begin{equation}
\mathbb{E}[\delta_{t}(\theta)\phi_t]=\gamma\mathbb{E}[\phi'_t{\phi_t^{T}}]w
\label{eq:tdc_lip2}
\end{equation}

Rearranging the two equality of Equation (\ref{eq:tdc_lip1},\ref{eq:tdc_lip2}),
we have the following linear system equation

\begin{equation}
\left[{\begin{array}{cc}
{\eta{\Phi^{T}}\Xi\Phi} & {\eta{\Phi^{T}}\Xi(\Phi-\gamma\Phi')}\\
{\gamma{\Phi^{'}}^{T}\Xi\Phi} & {{\Phi^{T}}\Xi(\Phi-\gamma\Phi')}
\end{array}}\right]\left[{\begin{array}{c}
w\\
\theta
\end{array}}\right]=\left[{\begin{array}{c}
{\eta{\Phi^{T}}\Xi R}\\
{{\Phi^{T}\Xi}R}
\end{array}}\right]\label{eq:eax=00003Db}
\end{equation}

The stochastic gradient version of the above equation is as follows,
where
\begin{equation}
A=\mathbb{E}[{A_{t}}],b=\mathbb{E}[{b_{t}}],x=[w;\theta]\label{eq:UNIFY}
\end{equation}

\begin{equation}
{A_{t}}=\left[{\begin{array}{cc}
{\eta{\phi_{t}}{\phi_{t}}^{T}} & {\eta{\phi_{t}}{{({\phi_{t}}-\gamma{{\phi'}_{t}})}^{T}}}\\
{\gamma{{\phi'}_{t}}{\phi_{t}}^{T}} & {{\phi_{t}}{{({\phi_{t}}-\gamma{{\phi'}_{t}})}^{T}}}
\end{array}}\right],{b_{t}}=\left[{\begin{array}{c}
{\eta{r_{t}}{\phi_{t}}}\\
{{r_{t}}{\phi_{t}}}
\end{array}}\right]\label{eq:abc}
\end{equation}

Following \cite{FastGradient:2009}, the TDC algorithm solution follows
from the linear equation $Ax=b$, where a single iteration gradient
update would be
\[
{x_{t+1}}={x_{t}}-{\alpha_{t}}({A_{t}}{x_{t}}-{b_{t}})
\]
where $x_{t}=[w_{t};\theta_{t}]$. The two time-scale gradient descent
learning method TDC \cite{FastGradient:2009} is
\begin{equation}
{\theta_{t+1}}={\theta_{t}}+{\alpha_{t}}{\delta_{t}}{\phi_{t}}-{\alpha_{t}}\gamma{\phi_{t}}^{\prime}(\phi_{t}^{T}{w_{t}}),{w_{t+1}}={w_{t}}+{\beta_{t}}({\delta_{t}}-\phi_{t}^{T}{w_{t}}){\phi_{t}}\label{eq:TDCupdate}
\end{equation}
where $-{\alpha_{t}}\gamma{\phi_{t}}^{\prime}(\phi_{t}^{T}{w_{t}})$
is the term for correction of gradient descent direction, and ${\beta_{t}}=\eta{\alpha_{t}},\eta>1$.
\begin{figure}[tbh]
\center 
\fbox{ %
\begin{tabular}{p{11.4cm}}
\begin{itemize}
\item $\Xi$ is a diagonal matrix whose entries $\xi(s)$ are given by a
positive probability distribution over states. $\Pi=\Phi{({\Phi^{T}}\Xi\Phi)^{-1}}{\Phi^{T}}\Xi$
is the weighted least-squares projection operator.
\item A square root of $A$ is a matrix $B$ satisfying $B^{2}=A$ and $B$
is denoted as ${A^{\frac{1}{2}}}$. Note that ${A^{\frac{1}{2}}}$
may not be unique.
\item $[\cdot,\cdot]$ is a row vector, and $[\cdot;\cdot]$ is a column
vector.
\item For the $t$-th sample, $\phi_{t}$ (the $t$-th row of $\Phi$),
$\phi'_{t}$ (the $t$-th row of $\Phi'$) are the feature vectors
corresponding to $s_{t},s'_{t}$, respectively. $\theta_{t}$ is the
coefficient vector for $t$-th sample in first-order TD learning methods,
and ${\delta_{t}}=({r_{t}}+\gamma\phi_{t}^{'T}{\theta_{t}})-\phi_{t}^{T}{\theta_{t}}$
is the temporal difference error. Also, ${x_{t}}=[{w_{t}};{\theta_{t}}]$,
$\alpha_{t}$ is a stepsize, ${\beta_{t}}=\eta{\alpha_{t}},\eta>0$.
\item $m,n$ are conjugate numbers if $\frac{1}{m}+\frac{1}{n}=1,m\ge1,n\ge1$.
$||x||{_{m}}={(\sum\nolimits _{j}{|{x_{j}}{|^{m}}})^{\frac{1}{m}}}$
is the $m$-norm of vector $x$.
\item $\rho$ is $l_{1}$ regularization parameter, $\lambda$ is the eligibility
trace factor, $N$ is the sample size, $d$ is the number of basis
functions, $k$ is the number of active basis functions. \end{itemize}
\tabularnewline
\end{tabular}} 
\caption{Notations and Definitions.}
\label{fig:notationmsma}
\end{figure}

There are some issues regarding the objective function, which arise
from the online convex optimization and reinforcement learning perspectives,
respectively. The first concern is that the objective function should
be convex and stochastically solvable. Note that $A,A_{t}$ are neither
PSD nor symmetric, and it is not straightforward to formulate a convex
objective function based on them. The second concern is that since
we do not have knowledge of $A$, the objective function should be
separable so that it is stochastically solvable based on $A_{t},b_{t}$.
The other concern regards the sampling condition in temporal difference
learning: double-sampling. As pointed out in \cite{sutton-barto:book},
double-sampling is a necessary condition to obtain an unbiased estimator
if the objective function is the Bellman residual or its derivatives
(such as projected Bellman residual), wherein the product of Bellman
error or projected Bellman error metrics are involved. To overcome
this sampling condition constraint, the product of TD errors should
be avoided in the computation of gradients. Consequently, based on
the linear equation formulation in (\ref{eq:UNIFY}) and the requirement
on the objective function discussed above, we propose the regularized
loss function as
\begin{equation}
L(x)={\left\Vert {Ax-b}\right\Vert _{m}}+h(x)\label{eq:lossfunc}
\end{equation}

Here we also enumerate some intuitive objective functions and give
a brief analysis on the reasons why they are not suitable for regularized
off-policy first-order TD learning. One intuitive idea is to add a
sparsity penalty on MSPBE, i.e., $L(\theta)={\rm {MSPBE(}}\theta{\rm {)+}}\rho{\left\Vert \theta\right\Vert _{1}}.$
Because of the $l_{1}$ penalty term, the solution to $\nabla L=0$
does not have an analytical form and is thus difficult to compute.
The second intuition is to use the online least squares formulation
of the linear equation $Ax=b$. However, since $A$ is not symmetric
and positive semi-definite (PSD), ${A^{\frac{1}{2}}}$ does not exist
and thus $Ax=b$ cannot be reformulated as ${\min_{x\in X}}||{{A^{\frac{1}{2}}}x-{A^{-\frac{1}{2}}}b}||_{2}^{2}$.
Another possible idea is to attempt to find an objective function
whose gradient is exactly ${A_{t}}{x_{t}}-{b_{t}}$ and thus the regularized
gradient is $pro{x_{{\alpha_{t}}h({x_{t}})}}({A_{t}}{x_{t}}-{b_{t}})$.
However, since $A_{t}$ is not symmetric, this gradient does not explicitly
correspond to any kind of optimization problem, not to mention a convex
one{%
\footnote{Note that the $A$ matrix in GTD2's linear equation representation
is symmetric, yet is not PSD, so it cannot be formulated as a convex
problem.%
}}.

\subsection{Squared Loss Formulation\label{sec:Extension}}

It is also worth noting that there exists another formulation of the
loss function different from Equation (\ref{eq:lossfunc}) with the
following convex-concave formulation as in \cite{nesterov:composite:2007,sra2011optimization},
\begin{eqnarray}
\mathop{\min}\limits _{x}\frac{1}{2}\left\Vert {Ax-b}\right\Vert _{2}^{2}+\rho{\left\Vert x\right\Vert _{1}} & = & \mathop{\max}\limits _{{{\left\Vert {{A^{T}}y}\right\Vert }_{\infty}}\le1}({b^{T}}y-\frac{\rho}{2}{y^{T}}y)\nonumber \\
 & = & \mathop{\min}\limits _{x}\mathop{\max}\limits _{{{\left\Vert u\right\Vert }_{\infty}}\le1,y}\left({{x^{T}}u+{y^{T}}(Ax-b)-\frac{\rho}{2}{y^{T}}y}\right)\label{eq:formula2}
\end{eqnarray}
\label{eq:ROTD2}

Here we give the detailed deduction of formulation in Equation (\ref{eq:formula2}).
First, using the dual norm representation, the standard LASSO problem
formulation is reformulated as

\begin{equation}
f(x)=\frac{1}{2}\left\Vert {Ax-b}\right\Vert _{2}^{2}+\rho{\left\Vert x\right\Vert _{1}}=\mathop{\max}\limits _{y,{{\left\Vert {{A^{T}}y}\right\Vert }_{\infty}}\le1}\left[{\left\langle {b/\rho,y}\right\rangle -\frac{1}{2}{y^{T}}y}\right]\label{eq:formula3}
\end{equation}

Then%
\footnote{Let $w=-y$, then we will have the same formulation as in Nemirovski's
tutorial in COLT2012.

\[
\Phi(x,w)=\left\langle {w,Ax-b}\right\rangle -\frac{1}{2}{w^{T}}w-\left\langle {x,{A^{T}}w}\right\rangle
\]
}

\[
\begin{array}{l}
\left\langle {b,y}\right\rangle -\frac{1}{2}{y^{T}}y=\left\langle {b,y}\right\rangle -\frac{1}{2}{y^{T}}y+\left\langle {x,{A^{T}}y}\right\rangle -\left\langle {y,Ax}\right\rangle \\
=\left\langle {y,b-Ax}\right\rangle -\frac{1}{2}{y^{T}}y+\left\langle {x,{A^{T}}y}\right\rangle
\end{array}
\]

which can be solved iteratively without the proximal gradient step
as follows, which serves as a counterpart of Equation (\ref{eq:RO-TD}),
\begin{eqnarray}
{x_{t+1}}={x_{t}}-{\alpha_{t}}\rho({u_{t}}+{A_{t}}^{T}{y_{t}}) & , & {y_{t+1}}={y_{t}}+\frac{{\alpha_{t}}}{\rho}({A_{t}}{x_{t}}-{b_{t}}-\rho{y_{t}})\nonumber \\
{u_{t+\frac{1}{2}}}={u_{t}}+\frac{{\alpha_{t}}}{\rho}{x_{t}} & , & {u_{t+1}}={\Pi_{\infty}}({u_{t+\frac{1}{2}}})
\end{eqnarray}

\section{Algorithm Design}

\subsection{RO-TD Algorithm Design \label{sec:stochasticsaddlepoint}}

In this section, the problem of (\ref{eq:lossfunc}) is formulated
as a convex-concave saddle-point problem, and the RO-TD algorithm
is proposed. Analogous to (\ref{eq:minimax}), the regularized loss
function can be formulated as
\begin{equation}
{\left\Vert {Ax-b}\right\Vert _{m}}+h(x)={\max_{{{\left\Vert y\right\Vert }_{n}}\le1}}{y^{T}}(Ax-b)+h(x)\label{eq:regminimax}
\end{equation}
Similar to (\ref{eq:minimaxFOM}), Equation (\ref{eq:regminimax})
can be solved via an iteration procedure as follows, where ${{x}_{t}}=\left[{{w_{t}};{\theta_{t}}}\right]$.
\begin{eqnarray}
{{x}_{t+\frac{1}{2}}}={x_{t}}-{\alpha_{t}}{A_{t}^{T}}{y_{t}} & , & {{y}_{t+\frac{1}{2}}}={y_{t}}+\alpha_{t}({A_{t}}{x_{t}}-{b_{t}})\nonumber \\
{x_{t+1}}=pro{x_{{\alpha_{t}}h}}({x_{t+\frac{1}{2}}}) & , & {y_{t+1}}={\Pi_{n}}({y_{t+\frac{1}{2}}})\label{eq:RO-TD}
\end{eqnarray}
The averaging step, which plays a crucial role in stochastic optimization
convergence, generates the \emph{approximate saddle-points} \cite{sra2011optimization,nedic2009subgradient}
\begin{equation}
{\bar{x}_{t}}={\left({\sum\nolimits _{i=0}^{t}{\alpha_{i}}}\right)^{-1}}\sum\nolimits _{i=0}^{t}{{\alpha_{i}}{x_{i}}},{\bar{y}_{t}}={\left({\sum\nolimits _{i=0}^{t}{\alpha_{i}}}\right)^{-1}}\sum\nolimits _{i=0}^{t}{{\alpha_{i}}{y_{i}}}\label{eq:averaging}
\end{equation}
Due to the computation of $A_{t}$ in (\ref{eq:RO-TD}) at each iteration,
the computation cost appears to be $O(Nd^{2})$, where $N,d$ are
defined in Figure \ref{fig:notationmsma}. However, the computation
cost is actually $O(Nd)$ with a linear algebraic trick by computing
not $A_{t}$ but $y_{t}^{T}{A_{t}},{A_{t}}{x_{t}}-{b_{t}}$. Denoting
${y_{t}}=[{y_{1,t}};{y_{2,t}}]$, where ${y_{1,t}};{y_{2,t}}$ are
column vectors of equal length, we have
\begin{equation}
y_{t}^{T}{A_{t}}=\left[{\begin{array}{cc}
{\eta\phi_{t}^{T}(y_{1,t}^{T}{\phi_{t}})+\gamma\phi_{t}^{T}(y_{2,t}^{T}\phi_{t}^{\prime})} & {{{({\phi_{t}}-\gamma\phi_{t}^{\prime})}^{T}}(\eta y_{1,t}^{T}+y_{2,t}^{T}){\phi_{t}}}\end{array}}\right]\label{eq:ytAt}
\end{equation}
${A_{t}}{x_{t}}-{b_{t}}$ can be computed according to Equation (\ref{eq:TDCupdate})
as follows:
\begin{equation}
{A_{t}}{x_{t}}-{b_{t}}=\left[{\begin{array}{c}
{-\eta({\delta_{t}}-\phi_{t}^{T}{w_{t}}){\phi_{t}}};{\gamma(\phi_{t}^{T}{w_{t}}){\phi_{t}}^{\prime}-{\delta_{t}}{\phi_{t}}}\end{array}}\right]\label{eq:Atxt-bt}
\end{equation}
Both (\ref{eq:ytAt}) and (\ref{eq:Atxt-bt}) are of linear computational
complexity. Now we are ready to present the RO-TD algorithm:

There are some design details of the algorithm to be elaborated. First,
the regularization term $h(x)$ can be any kind of convex regularization,
such as ridge regression or sparsity penalty $\rho||x||_{1}$. In
case of $h(x)=\rho||x||_{1}$, $pro{x_{{\alpha_{t}}h}}(\cdot)={S_{{\alpha_{t}}\rho}}(\cdot)$.
In real applications the sparsification requirement on $\theta$ and
auxiliary variable $w$ may be different, i.e., $h(x)={\rho_{1}}{\left\Vert \theta\right\Vert _{1}}+{\rho_{2}}{\left\Vert w\right\Vert _{1}},{\rho_{1}}\ne{\rho_{2}}$,
one can simply replace the uniform soft thresholding $S_{{\alpha_{t}}\rho}$
by two separate soft thresholding operations $S_{{\alpha_{t}}\rho_{1}},S_{{\alpha_{t}}\rho_{2}}$
and thus the third equation in (\ref{eq:RO-TD}) is replaced by the
following,
\begin{equation}
{x_{t+\frac{1}{2}}}=\left[{{w_{t+\frac{1}{2}}};{\theta_{t+\frac{1}{2}}}}\right],{\theta_{t+1}}={S_{{\alpha_{t}}{\rho_{1}}}}({\theta_{t+\frac{1}{2}}}),{w_{t+1}}={S_{{\alpha_{t}}{\rho_{2}}}}({w_{t+\frac{1}{2}}})\label{twothreshold}
\end{equation}
Another concern is the choice of conjugate numbers $(m,n)$. For ease
of computing $\Pi_{n}$, we use $(2,2)$($l_{2}$ fit), $(+\infty,1)$(uniform
fit) or $(1,+\infty)$. $m=n=2$ is used in the experiments below.

\begin{algorithm}[tph]
\caption{RO-TD}
\label{alg: RO-TD}

Let $\pi$ be some fixed policy of an MDP $M$, and let the sample
set $S=\{{s_{i}},{r_{i}},{s_{i}}'\}_{i=1}^{N}$. Let $\Phi$ be some
fixed basis.
\begin{enumerate}
\item \textbf{REPEAT}
\item Compute ${{\phi_{t}},{\phi_{t}}'}$ and TD error ${\delta_{t}}=({r_{t}}+\gamma\phi_{t}^{'T}{\theta_{t}})-\phi_{t}^{T}{\theta_{t}}$
\item Compute $y_{_{t}}^{T}{A_{t}},{A_{t}}{x_{t}}-{b_{t}}$ in Equation
(\ref{eq:ytAt}) and (\ref{eq:Atxt-bt}).
\item Compute $x_{t+1},y_{t+1}$ as in Equation (\ref{eq:RO-TD})
\item Set $t\leftarrow t+1$;
\item \textbf{UNTIL} {$t=N$};
\item Compute $\bar{x}_{N},\bar{y}_{N}$ as in Equation (\ref{eq:averaging})
with $t=N$ .\end{enumerate}
\end{algorithm}

\subsection{RO-GQ($\lambda$) Design}

GQ($\lambda$)\cite{gq:maei2010} is a generalization of the TDC algorithm
with eligibility traces and off-policy learning of temporally abstract
predictions, where the gradient update changes from Equation (\ref{eq:TDCupdate})
to
\begin{equation}
{\theta_{t+1}}={\theta_{t}}+{\alpha_{t}}[{\delta_{t}}{e_{t}}-\gamma(1-\lambda){w_{t}}^{T}{e_{t}}{\bar{\phi}_{t+1}}],{w_{t+1}}={w_{t}}+{\beta_{t}}({\delta_{t}}{e_{t}}-w_{t}^{T}{\phi_{t}}{\phi_{t}})\label{eq:gq}
\end{equation}
The central element is to extend the MSPBE function to the case where
it incorporates eligibility traces. The objective function and corresponding
linear equation component $A_{t},b_{t}$ can be written as follows:
\begin{equation}
L(\theta)=||\Phi\theta-\Pi T^{\pi\lambda}\Phi\theta||_{\Xi}^{2}\label{eq:gqObj}
\end{equation}
\begin{equation}
{A_{t}}=\left[{\begin{array}{cc}
{\eta{\phi_{t}}{\phi_{t}}^{T}} & {\eta{e_{t}}{{({\phi_{t}}-\gamma{\bar{\phi}_{t+1}})}^{T}}}\\
{\gamma(1-\lambda){{\bar{\phi}}_{t+1}}e_{t}^{T}} & {{e_{t}}{{({\phi_{t}}-\gamma{\bar{\phi}_{t+1}})}^{T}}}
\end{array}}\right],{b_{t}}=\left[{\begin{array}{c}
{\eta{r_{t}}{e_{t}}}\\
{{r_{t}}{e_{t}}}
\end{array}}\right]\label{eq:gqAb}
\end{equation}
Similar to Equation (\ref{eq:ytAt}) and (\ref{eq:Atxt-bt}), the
computation of $y_{_{t}}^{T}{A_{t}},{A_{t}}{x_{t}}-{b_{t}}$ is
\begin{eqnarray}
y_{_{t}}^{T}{A_{t}} & = & \left[{\begin{array}{cc}
{\eta\phi_{t}^{T}(y_{1,t}^{T}{\phi_{t}})+\gamma(1-\lambda)e_{t}^{T}(y_{2,t}^{T}{\bar{\phi}_{t+1}})} & {{({\phi_{t}}-\gamma{\bar{\phi}_{t+1}})^{T}}(\eta y_{1,t}^{T}+y_{2,t}^{T}){e_{t}}}\end{array}}\right]\nonumber \\
{A_{t}}{x_{t}}-{b_{t}} & = & \left[{\begin{array}{c}
{-\eta({\delta_{t}}{e_{t}}-\phi_{t}^{T}{w_{t}}{\phi_{t}})};{\gamma(1-\lambda)(e_{t}^{T}{w_{t}}){\bar{\phi}_{t+1}}-{\delta_{t}}{e_{t}}}\end{array}}\right]\label{eq:gqcase}
\end{eqnarray}
where eligibility traces $e_{t}$, and ${\bar{\phi}_{t}},T^{\pi\lambda}$
are defined in \cite{gq:maei2010}. Algorithm \ref{alg: RO-GQ}, RO-GQ($\lambda$),
extends the RO-TD algorithm to include eligibility traces.

\begin{algorithm}[tph]
\caption{RO-GQ($\lambda$)}
\label{alg: RO-GQ}

Let $\pi$ be some fixed policy of an MDP $M$. Let $\Phi$ be some
fixed basis. Starting from $s_{0}$.
\begin{enumerate}
\item \textbf{REPEAT}
\item Compute ${{\phi_{t}},{\bar{\phi}_{t+1}}}$ and TD error ${\delta_{t}}=({r_{t}}+\gamma\bar{\phi}_{t+1}^{T}{\theta_{t}})-\phi_{t}^{T}{\theta_{t}}$
\item Compute $y_{_{t}}^{T}{A_{t}},{A_{t}}{x_{t}}-{b_{t}}$ in Equation
(\ref{eq:gqcase}).
\item Compute $x_{t+1},y_{t+1}$ as in Equation (\ref{eq:RO-TD})
\item Choose action $a_{t}$, and get $s_{t+1}$
\item Set $t\leftarrow t+1$;
\item \textbf{UNTIL} $s_{t}$ is an absorbing state;
\item Compute $\bar{x}_{t},\bar{y}_{t}$ as in Equation (\ref{eq:averaging}) \end{enumerate}
\end{algorithm}

\section{Theoretical Analysis}

The theoretical analysis of RO-TD algorithm can be seen in the Appendix.

\section{Empirical Results\label{sec:Empirical-Experiments-1}}

We now demonstrate the effectiveness of the RO-TD algorithm against
other algorithms across a number of benchmark domains. LARS-TD \cite{Kolter09LARSTD},
which is a popular second-order sparse reinforcement learning algorithm,
is used as the baseline algorithm for feature selection and TDC is
used as the off-policy convergent RL baseline algorithm, respectively.

\subsection{MSPBE Minimization and Off-Policy Convergence}

\begin{figure}[tph]
\centering %
\begin{minipage}[c]{1\textwidth}%
\begin{center}
\includegraphics[width=4in,height=2in]{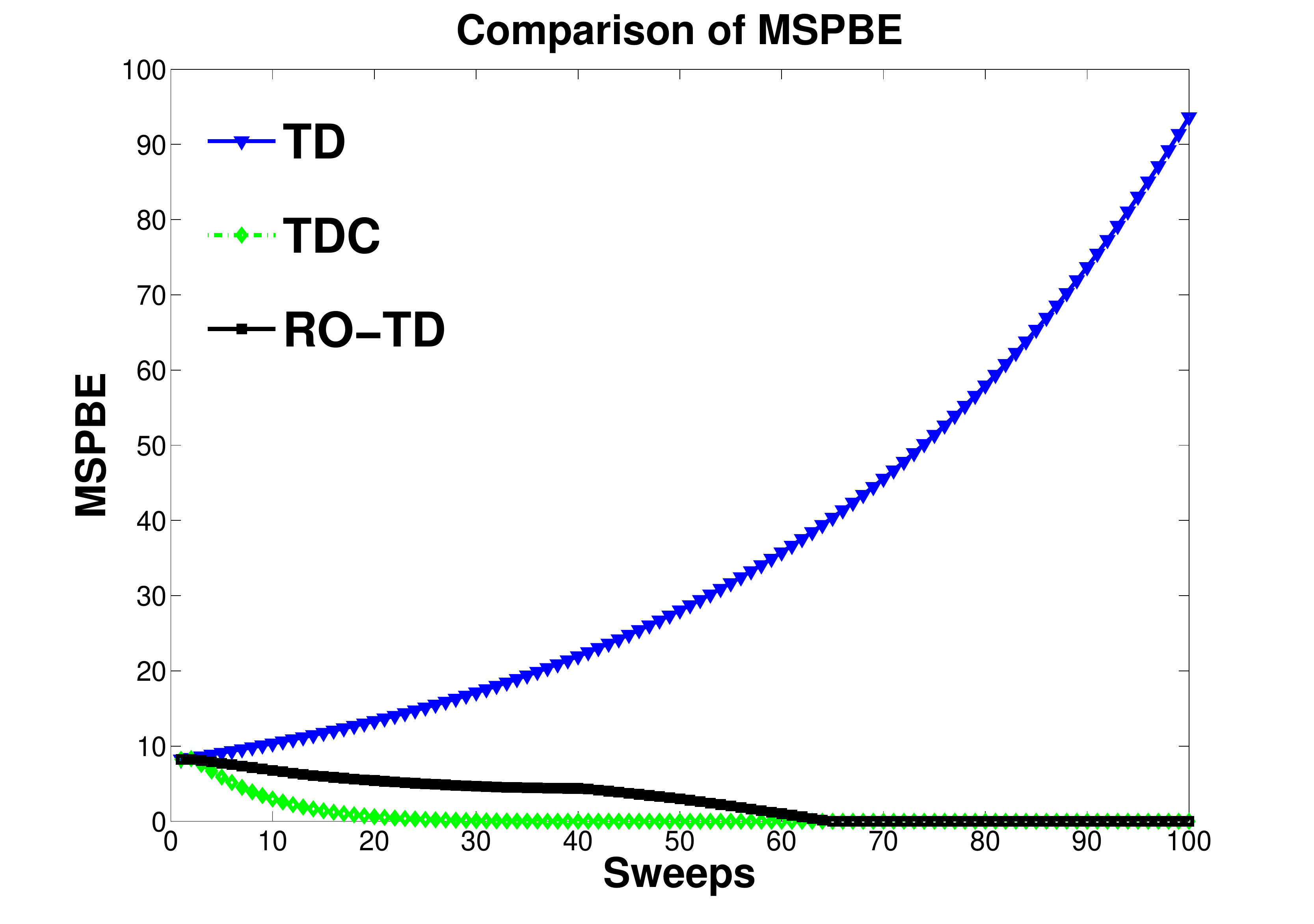}
\par\end{center}

\begin{center}
\includegraphics[width=4in,height=2in]{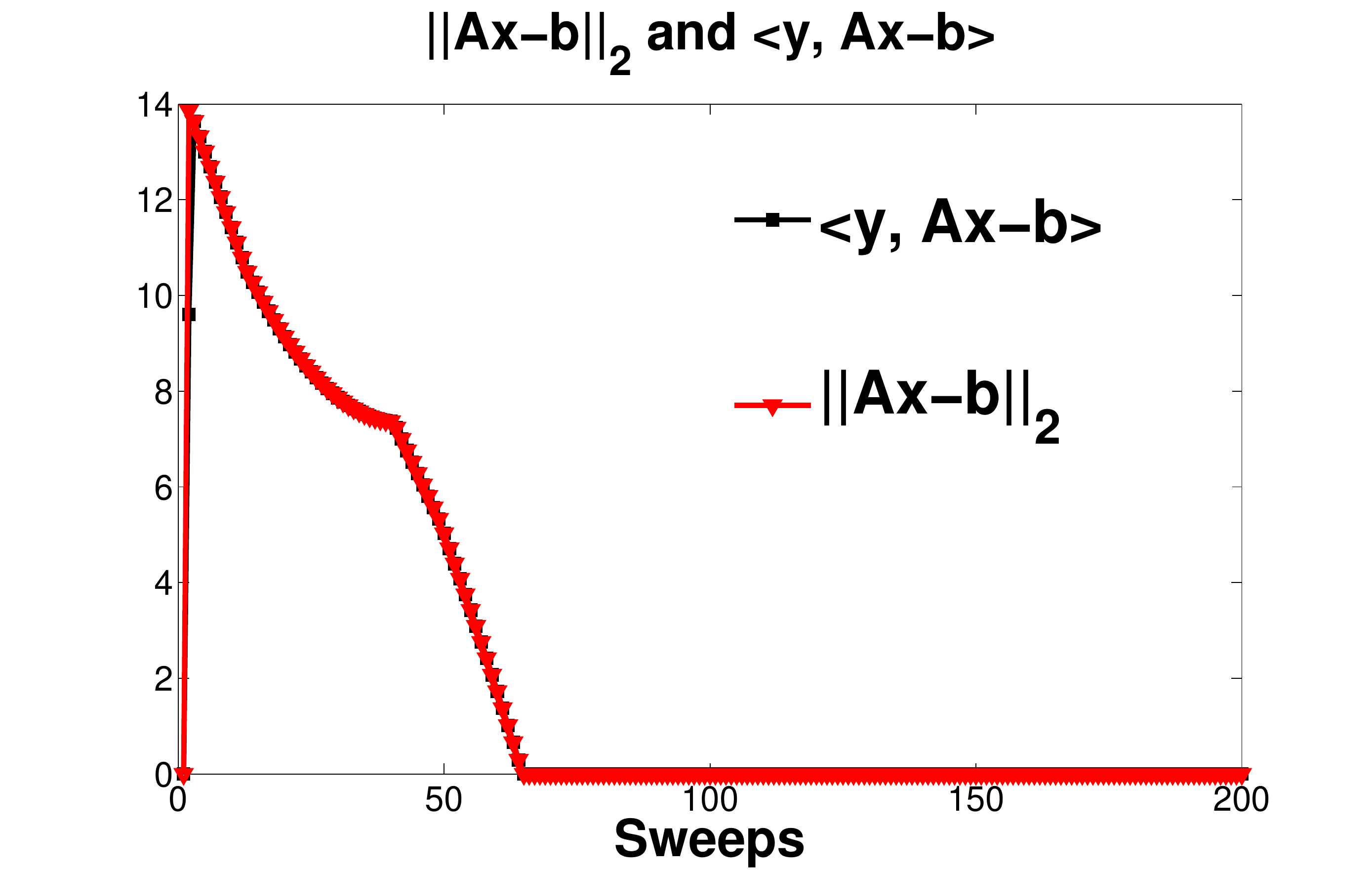}
\par\end{center}

\begin{center}
\includegraphics[width=4in,height=2in]{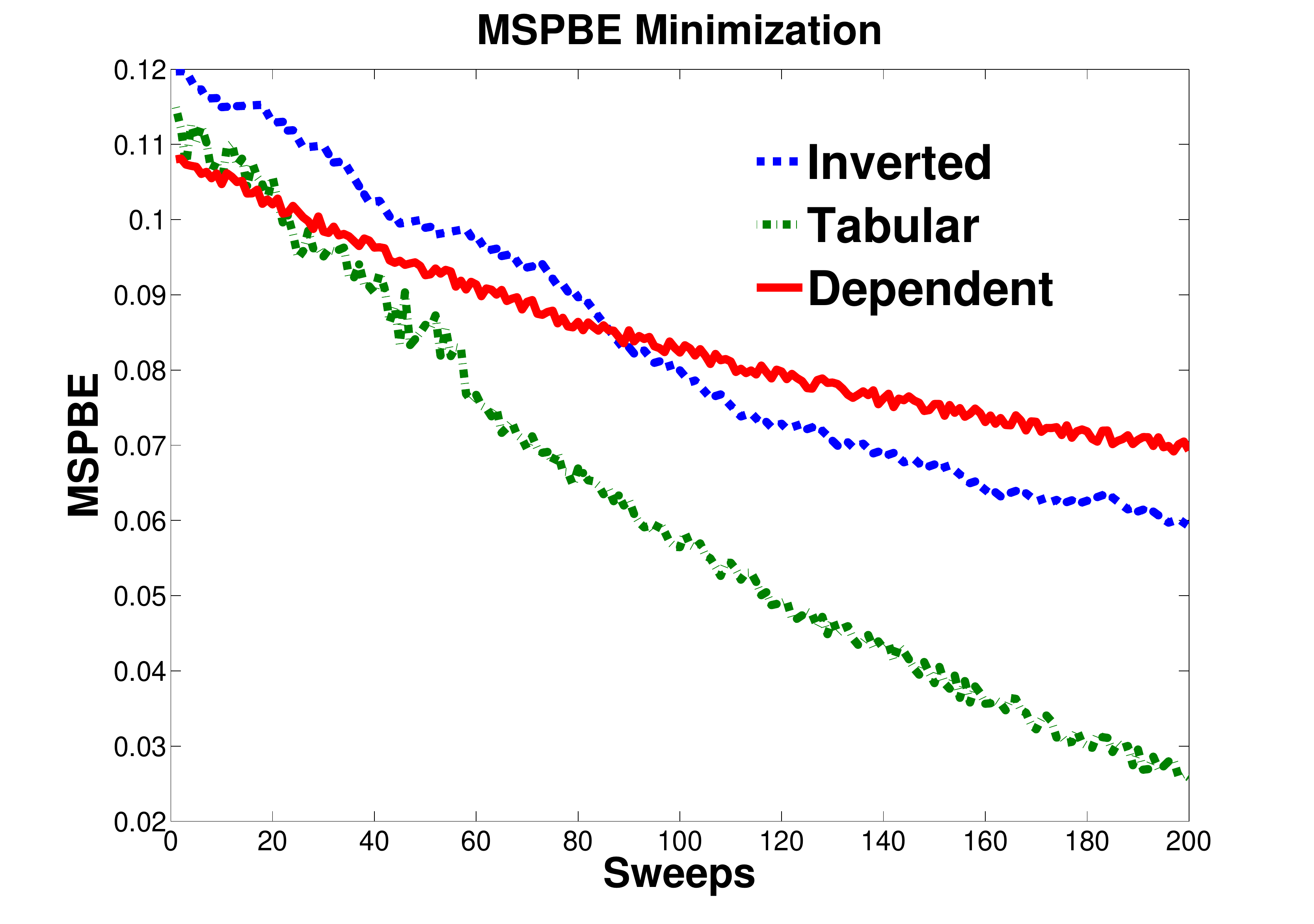}
\caption{Illustrative examples of the convergence of RO-TD using the Star and
Random-walk MDPs.}
\label{fig:STAR}
\par\end{center}

\end{minipage}
\end{figure}

This experiment aims to show the minimization of MSPBE and off-policy
convergence of the RO-TD algorithm. The $7$ state star MDP is a well
known counterexample where TD diverges monotonically and TDC converges.
It consists of $7$ states and the reward w.r.t any transition is
zero. Because of this, the star MDP is unsuitable for LSTD-based algorithms,
including LARS-TD since ${\Phi^{T}}R=0$ always holds. The random-walk
problem is a standard Markov chain with $5$ states and two absorbing
state at two ends. Three sets of different bases $\Phi$ are used
in \cite{FastGradient:2009}, which are tabular features, inverted
features and dependent features respectively. An identical experiment
setting to \cite{FastGradient:2009} is used for these two domains.
The regularization term $h(x)$ is set to $0$ to make a fair comparison
with TD and TDC. $\alpha=0.01$, $\eta=10$ for TD, TDC and RO-TD.
The comparison with TD, TDC and RO-TD is shown in the left sub-figure
of Figure \ref{fig:STAR}, where TDC and RO-TD have almost identical
MSPBE over iterations. The middle sub-figure shows the value of $y_{_{t}}^{T}(A{x_{t}}-b)$
and ${\left\Vert {A{x_{t}}-b}\right\Vert _{2}}$, wherein ${\left\Vert {A{x_{t}}-b}\right\Vert _{2}}$
is always greater than the value of $y_{_{t}}^{T}(A{x_{t}}-b)$. Note
that for this problem, the Slater condition is satisfied so there
is no duality gap between the two curves. As the result shows, TDC
and RO-TD perform equally well, which illustrates the off-policy convergence
of the RO-TD algorithm. The result of random-walk chain is averaged
over $50$ runs. The rightmost sub-figure of Figure \ref{fig:STAR}
shows that RO-TD is able to reduce MSPBE over successive iterations
w.r.t three different basis functions.

\subsection{Feature Selection}

In this section, we use the mountain car example with a variety of
bases to show the feature selection capability of RO-TD. The Mountain
car is an optimal control problem with a continuous two-dimensional
state space. The steep discontinuity in the value function makes learning
difficult for bases with global support. To make a fair comparison,
we use the same basis function setting as in \cite{Kolter09LARSTD},
where two dimensional grids of $2,4,8,16,32$ RBFs are used so that
there are totally $1365$ basis functions. For LARS-TD, $500$ samples
are used. For RO-TD and TDC, $3000$ samples are used by executing
$15$ episodes with $200$ steps for each episode, stepsize $\alpha_{t}=0.001$,
and $\rho_{1}=0.01,\rho_{2}=0.2$. We use the result of LARS-TD and
$l_{2}$ LSTD reported in \cite{Kolter09LARSTD}. As the result shows
in Table~\ref{tab:mcar}, RO-TD is able to perform feature selection
successfully, whereas TDC and TD failed. It is worth noting that comparing
the performance of RO-TD and LARS-TD is not the major focus here,
since LARS-TD is not convergent off-policy and RO-TD's performance
can be further optimized using the mirror-descent approach with the
Mirror-Prox algorithm \cite{sra2011optimization} which incorporates
mirror descent with an extragradient \cite{extragradient:1976}, as
discussed below. 

\begin{table}[h]
\begin{centering}
\centering %
\begin{tabular}{|c|c|c|c|c|c|}
\hline
Algorithm  & LARS-TD  & RO-TD  & $l_{2}$ LSTD  & TDC  & TD\tabularnewline
\hline
Success($20/20$)  & $100\%$  & $100\%$  & $0\%$  & $0\%$  & $0\%$ \tabularnewline
\hline
Steps  & $142.25\pm9.74$  & $147.40\pm13.31$  & -  & -  & - \tabularnewline
\hline
\end{tabular}\caption{Comparison of TD, LARS-TD, RO-TD, $l_{2}$ LSTD, TDC and TD}
\label{tab:mcar}
\par\end{centering}
\end{table}

\begin{table}[tph]
\begin{centering}
\centering %
\begin{tabular}{|c|c|c|c|}
\hline
Experiment\textbackslash{}Method  & RO-GQ($\lambda$)  & GQ($\lambda$)  & LARS-TD \tabularnewline
\hline
Experiment 1  & $6.9\pm4.82$  & $11.3\pm9.58$  & - \tabularnewline
\hline
Experiment 2  & $14.7\pm10.70$  & $27.2\pm6.52$  & - \tabularnewline
\hline
\end{tabular}\caption{Comparison of RO-GQ($\lambda$), GQ($\lambda$), and LARS-TD on Triple-Link
Inverted Pendulum Task}
\label{tab:rogq-triple}
\par\end{centering}
\end{table}

\subsection{High-dimensional Under-actuated Systems}

The triple-link inverted pendulum \cite{si2001online} is a highly
nonlinear under-actuated system with $8$-dimensional state space
and discrete action space. The state space consists of the angles
and angular velocity of each arm as well as the position and velocity
of the car. The discrete action space is $\{0,5{\rm {Newton}},-5{\rm {Newton}}\}$.
The goal is to learn a policy that can balance the arms for $N_{x}$
steps within some minimum number of learning episodes. The allowed
maximum number of episodes is $300$. The pendulum initiates from
zero equilibrium state and the first action is randomly chosen to
push the pendulum away from initial state. We test the performance
of RO-GQ($\lambda$), GQ($\lambda$) and LARS-TD. Two experiments
are conducted with $N_{x}=10,000$ and $100,000$, respectively. Fourier
basis \cite{konidaris:fourier} with order $2$ is used, resulting
in $6561$ basis functions. Table \ref{tab:rogq-triple} shows the results of this experiment,
where RO-GQ($\lambda$) performs better than other approaches, especially
in Experiment 2, which is a harder task. LARS-TD failed in this domain,
which is mainly not due to LARS-TD itself but the quality of samples
collected via random walk.

To sum up, RO-GQ($\lambda$) tends to outperform GQ($\lambda$) in
all aspects, and is able to outperform LARS-TD based policy iteration
in high dimensional domains, as well as in selected smaller MDPs where
LARS-TD diverges (e.g., the star MDP). It is worth noting that the
computation cost of LARS-TD is $O(Ndk^{2})$, where that for RO-TD
is $O(Nd)$. If $k$ is linear or sublinear w.r.t $d$, RO-TD has
a significant advantage over LARS-TD. However, compared with LARS-TD,
RO-TD requires fine tuning the parameters of $\alpha_{t},\rho_{1},\rho_{2}$
and is usually not as sample efficient as LARS-TD. We also find that
tuning the sparsity parameter $\rho_{2}$ generates an interpolation
between GQ($\lambda$) and Q-learning, where a large $\rho_{2}$
helps eliminate the correction term of TDC update and make the update
direction more similar to the TD update.

\section{Summary}

In this chapter we present a novel unified framework for designing regularized
off-policy convergent RL algorithms combining a convex-concave saddle-point
problem formulation for RL with stochastic first-order methods. A
detailed experimental analysis reveals that the proposed RO-TD algorithm
is both off-policy convergent and robust to noisy features.

\chapter{Safe Reinforcement Learning using Projected Natural Actor Critic}
\label{pnac-chapter}

Natural actor-critics form a popular class of policy search algorithms for finding locally optimal policies for Markov decision processes. In this paper we address a drawback of natural actor-critics that limits their real-world applicability---their lack of safety guarantees. We present a principled algorithm for performing natural gradient descent over a constrained domain \footnote{This paper is a revised version of the paper ``Projected Natural Actor-Critic" that was published in NIPS 2013.}. In the context of reinforcement learning, this allows for natural actor-critic algorithms that are guaranteed to remain within a known safe region of policy space. While deriving our class of constrained natural actor-critic algorithms, which we call Projected Natural Actor-Critics (PNACs), we also elucidate the relationship between natural gradient descent and mirror descent.

\section{Introduction}
Natural actor-critics form a class of policy search algorithms for finding locally optimal policies for Markov decision processes (MDPs
) by approximating and ascending the natural gradient \cite{Amari1998} of an objective function.  Despite the numerous successes of, and the continually growing interest in, natural actor-critic algorithms, they have not achieved widespread use for real-world applications. A lack of safety guarantees is a common reason for avoiding the use of natural actor-critic algorithms, particularly for biomedical applications. Since natural actor-critics are \emph{unconstrained} optimization algorithms, there are no guarantees that they will avoid regions of policy space that are known to be dangerous.

For example, proportional-integral-derivative controllers (PID controllers) are the most widely used control algorithms in industry, and have been studied in depth \cite{Astrom1995}.
Techniques exist for determining the set of stable gains (policy parameters) when a model of the system is available \cite{Soylemez2003}. Policy search can be used to find the optimal gains within this set (for some definition of optimality). 
A desirable property of a policy search algorithm in this context would be a guarantee that it will remain within the predicted region of stable gains during its search.

Consider a second example: \emph{functional electrical stimulation} (FES) control of a human arm. By selectively stimulating muscles using subcutaneous probes, researchers have made significant strides toward returning motor control to people suffering from paralysis induced by spinal cord injury \cite{Lynch2008}. There has been a recent push to develop controllers that specify how much and when to stimulate each muscle in a human arm to move it from its current position to a desired position \cite{Chadwick2009}. This closed-loop control problem is particularly challenging because each person's arm has different dynamics due to differences in, for example, length, mass, strength, clothing, and amounts of muscle atrophy, spasticity, and fatigue. Moreover, these differences are challenging to model. Hence, a proportional-derivative (PD) controller, tuned to a simulation of an ideal human arm, required manual tuning to obtain desirable performance on a human subject with biceps spasticity \cite{KathyPD}.

Researchers have shown that policy search algorithms are a viable approach to creating controllers that can automatically adapt to an individual's arm by training on a few hundred two-second reaching movements \cite{ThomasIAAI}. 
However, safety concerns have been raised in regard to both this specific application and other biomedical applications of policy search algorithms. Specifically, the existing state-of-the-art gradient-based algorithms, including the current natural actor-critic algorithms, are unconstrained and could potentially select dangerous policies. For example, it is known that certain muscle stimulations could cause the dislocation of a subject's arm. Although we lack an accurate model of each individual's arm, we can generate conservative safety constraints on the space of policies. Once again, a desirable property of a policy search algorithm would be a guarantee that it will remain within a specified region of policy space (known-safe policies).

In this paper we present a class of natural actor-critic algorithms that perform constrained optimization---given a known safe region of policy space, they search for a locally optimal policy while always remaining within the specified region. We call our class of algorithms \emph{Projected Natural Actor-Critics} (PNACs) since, whenever they generate a new policy, they project the policy back to the set of safe policies. The interesting question is how the projection can be done in a principled manner. We show that natural gradient descent (ascent), which is an \emph{unconstrained} optimization algorithm, is a special case of mirror descent (ascent), which is a \emph{constrained} optimization algorithm. In order to create a projected natural gradient algorithm, we add constraints in the mirror descent algorithm that is equivalent to natural gradient descent. We apply this projected natural gradient algorithm to policy search to create the PNAC algorithms, which we validate empirically.

\section{Related Work}
Researchers have addressed safety concerns like these before \cite{Perkins2002
}. \citet{Bendrahim1997} showed how a walking biped robot can switch to a stabilizing controller whenever the robot leaves a stable region of state space. Similar state-avoidant approaches to safety have been proposed by several others \cite{Arapostathis2005,
Arvelo2012,
Geibel2005}. These approaches do not account for situations where, over an unavoidable region of state space, the actions themselves are dangerous. \citet{Kuindersma2012} developed a method for performing risk-sensitive policy search, which models the variance of the objective function for each policy and permits runtime adjustments of risk sensitivity. However, their approach does not guarantee that an unsafe region of state space or policy space will be avoided.

\citet{Bhatnagar2009} presented projected natural actor-critic algorithms for the average reward setting. As in our projected natural actor-critic algorithms, they proposed computing the update to the policy parameters and then projecting back to the set of allowed policy parameters. However, they did not specify how the projection could be done in a principled manner. We show in Section \ref{sec:CP} that the Euclidean projection can be arbitrarily bad, and argue that the projection that we propose is particularly compatible with natural actor-critics (natural gradient descent).

\citet{Duchi2010} presented mirror descent using the Mahalanobis norm for the proximal function, which is very similar to the proximal function that we show to cause mirror descent to be equivalent to natural gradient descent. However, their proximal function is not identical to ours and they did not discuss any possible relationship between mirror descent and natural gradient descent.

\section{Equivalence of Natural Gradient Descent and Mirror Descent}

We begin by showing an important relationship between natural gradient methods and mirror descent.

\begin{thm}
The natural gradient descent update at step $k$ with metric tensor $G_k \triangleq G(x_k)$:
\begin{equation}
x_{k+1} = x_k - \alpha_k G_k^{-1}\nabla f(x_k),
\label{eq:NG}
\end{equation}
is equivalent to the mirror descent update at step $k$, with $\psi_k(x)=(\sfrac{1}{2})x^\intercal G_k x$.
\end{thm}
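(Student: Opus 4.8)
The plan is to start from the closed-form mirror descent update derived earlier in the excerpt, Equation~\eqref{md-update}, namely $w_{k+1} = \nabla\psi^*\!\left(\nabla\psi(w_k) - \alpha_k \nabla f(w_k)\right)$, specialized to the unconstrained setting (feasible set $\mathbb{R}^n$ with no nonsmooth regularizer, so the proximal step is the identity), and simply evaluate each ingredient for the quadratic choice $\psi_k(x) = (\sfrac{1}{2})x^\intercal G_k x$. Since a metric tensor $G_k$ is symmetric and positive definite, $\psi_k$ is differentiable and strongly convex (with strong-convexity modulus equal to the smallest eigenvalue of $G_k$), so it is a legitimate distance-generating function and the entire mirror descent machinery applies.

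First I would compute $\nabla\psi_k(x) = G_k x$, where symmetry of $G_k$ is exactly what removes any $(G_k + G_k^\intercal)/2$ ambiguity in the gradient of the quadratic form. Next I would invoke the Legendre-transform identity $\nabla\psi_k^* = (\nabla\psi_k)^{-1}$ recorded in the excerpt; since $\nabla\psi_k$ is the linear map $x \mapsto G_k x$, its inverse is $y \mapsto G_k^{-1}y$, i.e. $\nabla\psi_k^*(y) = G_k^{-1}y$. (Equivalently, one can verify directly that $\psi_k^*(y) = (\sfrac{1}{2})y^\intercal G_k^{-1} y$ by solving the supremum in the definition of the conjugate: the stationarity condition $y - G_k x = 0$ gives $x = G_k^{-1}y$.) Substituting into the update then yields
\[
x_{k+1} = \nabla\psi_k^*\!\left(G_k x_k - \alpha_k \nabla f(x_k)\right) = G_k^{-1}\!\left(G_k x_k - \alpha_k \nabla f(x_k)\right) = x_k - \alpha_k G_k^{-1}\nabla f(x_k),
\]
which is precisely the natural gradient update of Equation~\eqref{eq:NG}. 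As an alternative route I would instead start from the variational form, Equation~\eqref{md}, note that the Bregman divergence of the quadratic $\psi_k$ is the Mahalanobis distance $D_{\psi_k}(x,x_k) = (\sfrac{1}{2})(x-x_k)^\intercal G_k (x-x_k)$, set the gradient of $\langle x, \nabla f(x_k)\rangle + \alpha_k^{-1}D_{\psi_k}(x,x_k)$ to zero, and solve for $x$; this gives the same identity and makes the equivalence transparent.

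The calculation is routine, so the only genuine point demanding care is the structural assumption on $G_k$. Positive definiteness is what guarantees that $\psi_k$ is strongly convex, hence a valid mirror map whose gradient is invertible (so that $\nabla\psi_k^*$ is well defined), and symmetry is what makes $\nabla\psi_k(x) = G_k x$ exactly. I would therefore state explicitly at the outset that $G_k$ is a metric tensor and hence symmetric positive definite, and flag that the equivalence is asserted here in the unconstrained case; the constrained generalization that underlies the PNAC algorithm is obtained afterward by reintroducing the feasible-set projection into the mirror descent step once this base identity is established.
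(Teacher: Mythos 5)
Your proposal is correct and follows essentially the same route as the paper's own proof: compute $\nabla\psi_k(x)=G_kx$, obtain $\nabla\psi_k^*(y)=G_k^{-1}y$ (the paper does this by explicitly maximizing the conjugate, which you also note parenthetically), and substitute into the closed-form mirror descent update to recover $x_{k+1}=x_k-\alpha_k G_k^{-1}\nabla f(x_k)$. Your explicit flagging of the symmetry and positive-definiteness of $G_k$, and of the unconstrained setting, is a harmless and reasonable addition.
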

\begin{proof}
First, notice that $\nabla \psi_k(x) = G_kx$. Next, we derive a closed-form for $\psi_k^*$:
\begin{align}
\psi_k^*(y) = \max_{x \in \mathbb R^n} \left \{ x^\intercal y - \frac{1}{2}x^\intercal G_k x \right \}.
\label{eq:proof1}
\end{align}
Since the function being maximized on the right hand side is strictly concave, the $x$ that maximizes it is its critical point. Solving for this critical point, we get $x = G_k^{-1}y$. Substituting this into \eqref{eq:proof1}, we find that
$
\psi_k^*(y)=(\sfrac{1}{2})y^\intercal G_k^{-1}y.
$
Hence, $\nabla \psi_k^*(y) = G_k^{-1}y$. Using the definitions of $\nabla \psi_k(x)$ and $\nabla \psi_k^*(y)$, we find that the mirror descent update is
\begin{align}
x_{k+1}=& G_k^{-1} \left ( G_k x_k - \alpha_k \nabla f(x_k) \right )= x_k -\alpha_k G_k^{-1}\nabla f(x_k),
\end{align}
which is identical to \eqref{eq:NG}.{\hfill$\blacksquare$}\end{proof}

Although researchers often use $\psi_k$ that are norms like the $p$-norm and Mahalanobis norm, notice that the $\psi_k$ that results in natural gradient descent is \emph{not} a norm. Also, since $G_k$ depends on $k$, $\psi_k$ is an \emph{adaptive} proximal function \cite{Duchi2010}.

\section{Projected Natural Gradients}
When $x$ is constrained to some set, $X$, $\psi_k$ in mirror descent is augmented with the indicator function $I_X$, where $I_X(x)=0$ if $x \in X$, and $+\infty$ otherwise. The $\psi_k$ that was shown to generate an update equivalent to the natural gradient descent update, with the added constraint that $x \in X$, is $\psi_k(x) = (\sfrac{1}{2})x^\intercal G_kx + I_X(x).$ Hereafter, any references to $\psi_k$ refer to this augmented version.

For this proximal function, the subdifferential of $\psi_k(x)$ is
$
\nabla \psi_k(x)=G_k(x) + \hat N_X(x)= (G_k+\hat N_X)(x),
$
where $\hat N_X(x) \triangleq \partial I_X(x)$ and, in the middle term, $G_k$ and $\hat N_X$ are relations and $+$ denotes Minkowski addition.\footnote{Later, we abuse notation and switch freely between treating $G_k$ as a matrix and a relation. When it is a matrix, $G_kx$ denotes matrix-vector multiplication that produces a vector. When it is a relation, $G_k(x)$ produces the singleton $\{G_kx\}$.} $\hat N_X(x)$ is the normal cone of $X$ at $x$ if $x \in X$ and $\emptyset$ otherwise \cite{Rockafellar1970}. 
\begin{align}
\nabla \psi_k^*(y) = (G_k+ \hat N_X)^{-1}(y).
\label{eq:lkjhasdkltjt}
\end{align}

Let $\Pi_X^{G_k}(y)$, be the set of $x \in X$ that are closest to $y$, where the length of a vector, $z$, is $(\sfrac{1}{2})z^\intercal G_kz$. More formally,
\begin{align}
\Pi_X^{G_k}(y) \triangleq \arg \min_{x \in X} \frac{1}{2}(y-x)^\intercal G_k (y-x).
\label{eq:blah}
\end{align}

\begin{lemma}
\label{lemma:thingy}
$\Pi_X^{G_k}(y) = (G_k+\hat N_X)^{-1}(G_ky)$.
\end{lemma}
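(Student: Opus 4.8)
The plan is to recognize $\Pi_X^{G_k}(y)$ as the unique solution of a strictly convex constrained minimization problem and then read off the associated first-order optimality condition, which will coincide exactly with membership in the resolvent-like set $(G_k + \hat N_X)^{-1}(G_k y)$.

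First I would set $g(x) = \frac{1}{2}(y-x)^\intercal G_k (y-x)$, the objective in \eqref{eq:blah}. Since $G_k$ is positive definite, $g$ is strictly convex and coercive, so over the nonempty closed convex set $X$ the minimizer $x^*$ exists and is unique; this already shows $\Pi_X^{G_k}(y)$ is a genuine singleton. A direct gradient computation gives $\nabla g(x) = G_k(x-y)$.

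Next I would invoke the standard first-order optimality characterization for minimizing a differentiable convex function over a convex set: $x^*$ solves \eqref{eq:blah} if and only if $0 \in \nabla g(x^*) + \hat N_X(x^*)$, equivalently $-\nabla g(x^*) \in \hat N_X(x^*)$ (using $\hat N_X = \partial I_X$ together with the subdifferential sum rule, valid here since $g$ is finite and continuous). Substituting $\nabla g(x^*) = G_k(x^* - y)$ turns this into $G_k y - G_k x^* \in \hat N_X(x^*)$, i.e. $G_k y \in G_k x^* + \hat N_X(x^*) = (G_k + \hat N_X)(x^*)$. By the definition of the relational inverse this reads precisely as $x^* \in (G_k + \hat N_X)^{-1}(G_k y)$, giving one inclusion.

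For the reverse inclusion I would run the same chain backward: every step above is an ``if and only if'', so any $x$ with $G_k y \in (G_k + \hat N_X)(x)$ satisfies the optimality condition and hence equals the unique minimizer $x^*$; this also confirms the right-hand side is single-valued, matching the left. The main obstacle I anticipate is bookkeeping with the set-valued relation $\hat N_X$: I must use that $\hat N_X(x) = \emptyset$ for $x \notin X$, so the Minkowski sum $(G_k + \hat N_X)(x)$ is empty off $X$, which automatically confines the preimage to points of $X$ and keeps the equivalence with the constrained problem honest. The analytic content — strict convexity yielding uniqueness, and the normal-cone optimality condition — is routine; the care lies in handling emptiness and single-valuedness of these relations cleanly.
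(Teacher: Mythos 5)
Your proposal is correct and follows essentially the same route as the paper: append the indicator $I_X$, use strict convexity, and read off the first-order condition $0 \in G_k(x-y) + \hat N_X(x)$ to obtain $G_k y \in (G_k + \hat N_X)(x)$. Your added care about uniqueness, the reverse inclusion, and the emptiness of $\hat N_X$ outside $X$ only makes explicit what the paper leaves implicit.
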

\begin{proof}
We write \eqref{eq:blah} without the explicit constraint that $x \in X$ by appending the indicator function:
\begin{align}
\Pi_X^{G_k}(y)
=& \arg \min_{x \in \mathbb R^n} h_y(x),
\end{align}
where $h_y(x)=(\sfrac{1}{2}) (y-x)^\intercal G_k(y-x) + I_X(x)$. Since $h_y$ is strictly convex over $X$ and $+\infty$ elsewhere, its critical point is its global minimizer. The critical point satisfies
\begin{align}
0 \in \nabla h_y(x) =-G_k(y)+G_k(x)+\hat N_X(x).
\end{align}
The globally minimizing $x$ therefore satisfies
$
G_ky \in G_k(x) + \hat N_X(x)= (G_k+\hat N_X)(x).
$
Solving for $x$, we find that $x = (G_k+\hat N_X)^{-1}(G_ky)$.{\hfill$\blacksquare$}\end{proof}

Combining Lemma \ref{lemma:thingy} 
 with \eqref{eq:lkjhasdkltjt}, we find that
$
\nabla \psi^*(y)=\Pi_X^{G_k}(G_k^{-1}y).
$
Hence, mirror descent with the proximal function that produces natural gradient descent, augmented to include the constraint that $x \in X$, is:
\begin{align}
x_{k+1} =& \Pi_X^{G_k} \left (G_k^{-1} \left ( (G_k+\hat N_X)(x_k) - \alpha_k \nabla f(x_k) \right ) \right )\\
=&  \Pi_X^{G_k} \left ( (I+G_k^{-1}\hat N_X)(x_k) - \alpha_k G_k^{-1} \nabla f(x_k) \right ),
\end{align}
where $I$ denotes the identity relation. Since $x_k \in X$, we know that $0 \in \hat N_X(x_k)$, and hence the update can be written as
\begin{align}
x_{k+1} = \Pi_X^{G_k} \left ( x_k - \alpha_k G_k^{-1} \nabla f(x_k) \right ),
\label{eq:PNG}
\end{align}
which we call \emph{projected natural gradient} (PNG).

\section{Compatibility of Projection}
\label{sec:CP}
The standard projected subgradient (PSG) descent method follows the negative gradient (as opposed to the negative natural gradient) and projects back to $X$ using the Euclidean norm. If $f$ and $X$ are convex and the stepsize is decayed appropriately, it is guaranteed to converge to a global minimum, $x^* \in X$. Any such $x^*$ is a fixed point. This means that a small step in the negative direction of any subdifferential of $f$ at $x^*$ will project back to $x^*$.

Our choice of projection, $\Pi_X^{G_k}$, results in PNG having the same fixed points (see Lemma \ref{lemma:compatibleProjection}). This means that, when the algorithm is at $x^*$ and a small step is taken down the natural gradient to $x'$, $\Pi_X^{G_k}$ will project $x'$ back to $x^*$. We therefore say that $\Pi_X^{G_k}$ is compatible with the natural gradient. For comparison, the Euclidean projection of $x'$ will \emph{not} necessarily return $x'$ to $x^*$. 

\begin{lemma}
\label{lemma:compatibleProjection}
The sets of fixed points for PSG and PNG are equivalent.
\end{lemma}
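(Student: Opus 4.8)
The plan is to show that a point $x^*$ is a fixed point of either iteration precisely when it satisfies the stationarity inclusion $0 \in \partial f(x^*) + \hat N_X(x^*)$, i.e.\ when $-\partial f(x^*)$ meets the normal cone of $X$ at $x^*$. Since this characterization makes no reference to the metric $G_k$ or to the stepsize, establishing it for both PSG and PNG immediately yields equality of the two fixed-point sets. The structural facts I would lean on are that the Euclidean projection is the resolvent of the normal-cone relation, $\Pi_X = (I + \hat N_X)^{-1}$ (because $\hat N_X = \partial I_X$ and the resolvent of a subdifferential is the corresponding proximal map, as recalled earlier), and that $\hat N_X(x^*)$ is a cone, so membership is insensitive to multiplication by a positive scalar.

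First I would treat PSG. A point $x^*$ is fixed iff there is a subgradient $g \in \partial f(x^*)$ with $x^* = \Pi_X(x^* - \alpha_k g)$. Writing the projection as the resolvent $(I + \hat N_X)^{-1}$ and applying $(I + \hat N_X)$ to both sides gives $x^* - \alpha_k g \in x^* + \hat N_X(x^*)$, hence $-\alpha_k g \in \hat N_X(x^*)$. Using that $\hat N_X(x^*)$ is a cone and $\alpha_k > 0$, this is equivalent to $-g \in \hat N_X(x^*)$, i.e.\ $0 \in \partial f(x^*) + \hat N_X(x^*)$.

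Next I would treat PNG, starting from the update \eqref{eq:PNG} and invoking Lemma \ref{lemma:thingy} to express the composite map in resolvent form: $x^* = \Pi_X^{G_k}(x^* - \alpha_k G_k^{-1} g) = (G_k + \hat N_X)^{-1}\bigl(G_k x^* - \alpha_k g\bigr)$. Applying $(G_k + \hat N_X)$ to both sides yields $G_k x^* - \alpha_k g \in G_k x^* + \hat N_X(x^*)$. The decisive step is the cancellation of the $G_k x^*$ terms --- which is exactly what Lemma \ref{lemma:thingy} is arranged to produce, and is the precise sense in which $\Pi_X^{G_k}$ is ``compatible'' with the natural gradient --- leaving $-\alpha_k g \in \hat N_X(x^*)$, the identical inclusion obtained for PSG. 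Both sets therefore coincide with $\{x^* \in X : 0 \in \partial f(x^*) + \hat N_X(x^*)\}$, proving the lemma.

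The part needing the most care is the bookkeeping around the normal cone and the metric. I must verify that the $G_k^{-1}$ inside the PNG step genuinely disappears: it is the interplay between premultiplying the argument by $G_k$ inside Lemma \ref{lemma:thingy} and the $G_k^{-1}$ scaling the gradient that makes only the additive $-\alpha_k g$ survive, whereas the naive Euclidean projection of the natural-gradient step would retain a $G_k^{-1}$ factor and generically fail to fix $x^*$. I would also make explicit that the notion of ``fixed point'' is independent of the (positive) stepsize --- legitimate precisely because $\hat N_X(x^*)$ is a cone --- and note that $G_k$ is assumed symmetric positive definite, so that $G_k$ and $G_k^{-1}$ are well-defined single-valued relations throughout.
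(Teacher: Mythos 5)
Your argument is correct and follows essentially the same route as the paper's proof: both reduce the PNG fixed-point condition, via Lemma \ref{lemma:thingy}, to $G_k x - \alpha_k \nabla f(x) \in G_k(x) + \hat N_X(x)$ and cancel to obtain $-\nabla f(x) \in \hat N_X(x)$, which is the standard PSG stationarity condition (the paper simply cites this for PSG where you derive it from the resolvent identity $\Pi_X = (I+\hat N_X)^{-1}$). Your explicit remarks about the cone property absorbing the stepsize and about the $G_k^{-1}$ cancellation are sound and merely make precise what the paper leaves implicit.
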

\begin{proof}
A necessary and sufficient condition for $x$ to be a fixed point of  PSG is that $-\nabla f(x) \in \hat N_X(x)$ \cite{Nocedal2006}. 
A necessary and sufficient condition for $x$ to be a fixed point of PNG is
\begin{align}
x =& \Pi_X^{G_k} \left ( x - \alpha_k G_k^{-1} \nabla f(x) \right )= (G_k+\hat N_X)^{-1} \Big ( G_k \left ( x - \alpha_k G_k^{-1} \nabla f(x) \right ) \Big )\\
=& (G_k+\hat N_X)^{-1} \left ( G_kx - \alpha_k \nabla f(x) \right )\\
\Leftrightarrow &G_kx - \alpha_k \nabla f(x) \in  G_k(x) + \hat N_X(x) \\
\Leftrightarrow& -\nabla f(x) \in \hat N_X(x).&\mbox{\hfill$\blacksquare$} 
\end{align}
\end{proof}


To emphasize the importance of using a compatible projection, consider the following simple example. Minimize the function $f(x) = x^\intercal A x + b^\intercal x$, where $A = \mbox{diag}(1,0.01)$ and $b=[-0.2,-0.1]^\intercal$, subject to the constraints $\lVert x \rVert_1 \leq 1$ and $x \geq 0$. We implemented three algorithms, and ran each for $1000$ iterations using a fixed stepsize: 
\begin{enumerate}
\vspace{-.2cm}\item {\bf PSG} - projected subgradient descent using the Euclidean projection.
\vspace{-.1cm}\item {\bf PNG} - projected natural gradient descent using $\Pi_X^{G_k}$.
\vspace{-.1cm}\item {\bf PNG-Euclid} - projected natural gradient descent using the Euclidean projection.
\end{enumerate}
\vspace{-.2cm}The results are shown in Figure 1
. Notice that PNG and PSG converge to the optimal solution, $x^*$. From this point, they both step in different directions, but project back to $x^*$. However, PNG-Euclid converges to a suboptimal solution (outside the domain of the figure). If $X$ were a line segment between the point that PNG-Euclid and PNG converge to, then PNG-Euclid would converge to the pessimal solution within $X$, while PSG and PNG would converge to the optimal solution within $X$. Also, notice that the natural gradient corrects for the curvature of the function and heads directly towards the global unconstrained minimum. Since the natural methods in this example use metric tensor $G=A$, which is the Hessian of $f$, they are essentially an incremental form of Newton's method. In practice, the Hessian is usually not known, and an estimate thereof is used.

\begin{SCfigure}
\centering
\includegraphics[width=0.4\columnwidth, height=0.15\textheight]{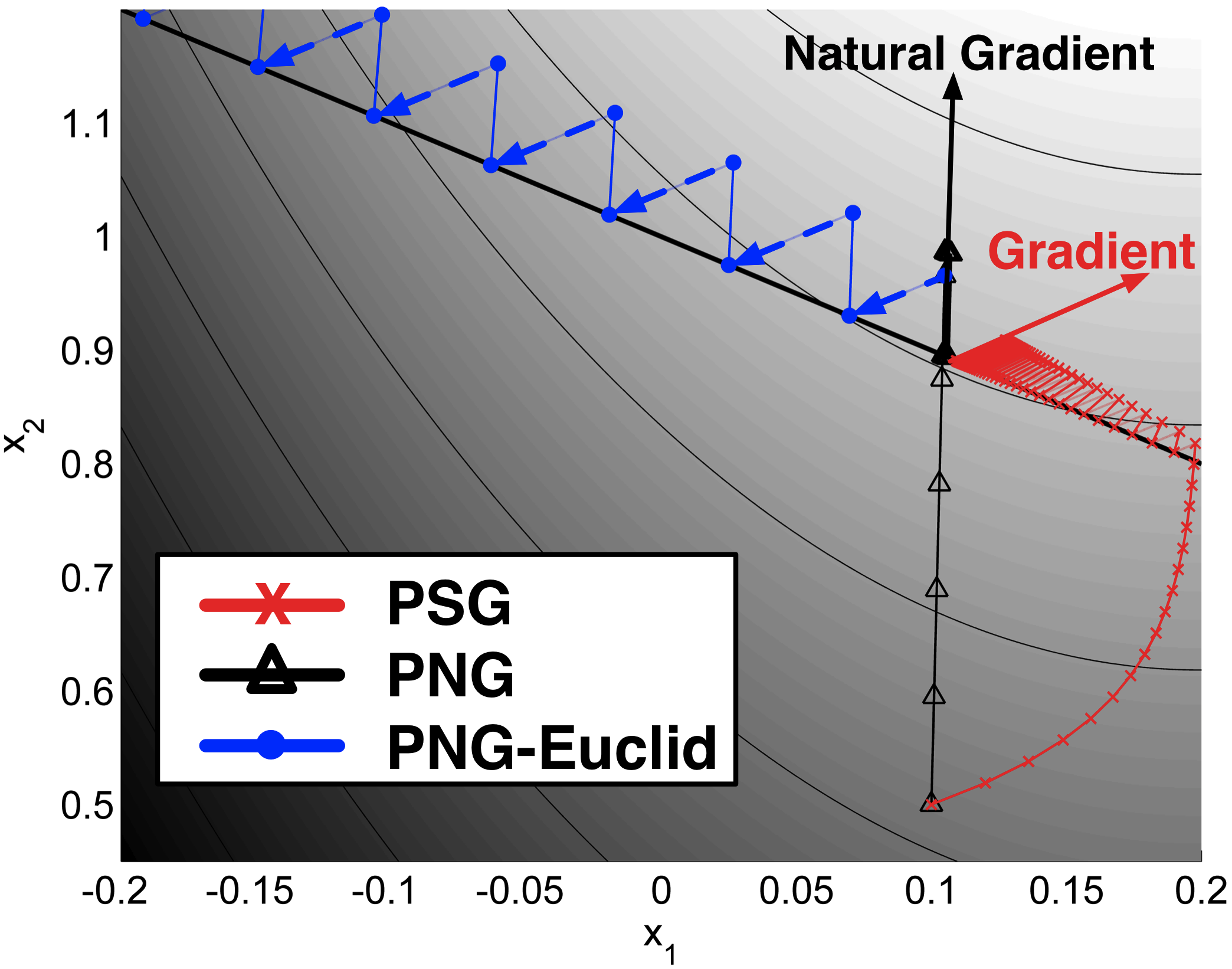}
\label{fig:example}
\caption{The thick diagonal line shows one constraint and dotted lines show projections. Solid arrows show the directions of the natural gradient and gradient at the optimal solution, $x^*$. The dashed blue arrows show PNG-Euclid's projections, and emphasize the the projections cause PNG-Euclid to move \emph{away} from the optimal solution. \vspace{0.15cm}}
\vspace{-.3cm}
\end{SCfigure}

\section{Natural Actor-Critic Algorithms}
\label{sec:NACA}
An MDP is a tuple $M=(\mathcal S, \mathcal A, \mathcal P, \mathcal R, d_0, \gamma)$, where $\mathcal S$ is a set of states, $\mathcal A$ is a set of actions, $\mathcal P(s'|s,a)$ gives the probability density of the system entering state $s'$ when action $a$ is taken in state $s$, $R(s,a)$ is the expected reward, $r$, when action $a$ is taken in state $s$, $d_0$ is the initial state distribution, and $\gamma \in [0,1)$ is a reward discount parameter. A parameterized policy, $\pi$, is a conditional probability density function---$\pi(a|s,\theta)$ is the probability density of action $a$ in state $s$ given a vector of policy parameters, $\theta \in \mathbb R^n$.

Let $J(\theta) = \mbox{E} \left [ \sum_{t=0}^\infty \gamma^t r_t | \theta \right ]$ be the \emph{discounted-reward objective} or the \emph{average reward objective function} with $J(\theta) = \lim_{n\to\infty} \frac{1}{n}\mbox{E} \left [ \sum_{t=0}^n r_t | \theta \right ]$. Given an MDP, $M$, and a parameterized policy, $\pi$, the goal is to find policy parameters that maximize one of these objectives. When the action set is continuous, the search for globally optimal policy parameters becomes intractable, so policy search algorithms typically search for locally optimal policy parameters.

Natural actor-critics, first proposed by \citet{Kakade2002}, are algorithms that estimate and ascend the natural gradient of $J(\theta)$, using the average Fisher information matrix as the metric tensor:
\vspace{-.1cm}\begin{equation}
G_k=G(\theta_k)=\mbox{E}_{s \sim d^\pi,a \sim \pi} \left [ \left (\frac{\partial}{\partial \theta_k} \log \pi(a|s,\theta_k)\right )\left(\frac{\partial}{\partial \theta_k} \log \pi(a|s,\theta_k)\right ) ^ \intercal \right ],
\end{equation}
where $d^\pi$ is a policy and objective function-dependent distribution over the state set \cite{Sutton2000}. 

There are many natural actor-critics, including Natural policy gradient utilizing the Temporal Differences (NTD) algorithm \cite{Morimura2005}, Natural Actor-Critic using LSTD-Q$(\lambda)$ (NAC-LSTD) \cite{Peters2008}, Episodic Natural Actor-Critic (eNAC) \cite{Peters2008}, Natural Actor-Critic using Sarsa$(\lambda)$ (NAC-Sarsa) \cite{Thomas2012}, Incremental Natural Actor-Critic (INAC) \cite{Degris2012}, and Natural-Gradient Actor-Critic with Advantage Parameters (NGAC) \cite{Bhatnagar2009}. All of them form an estimate, typically denoted $w_k$, of the natural gradient of $J(\theta_k)$. That is, $w_k \approx G(\theta_k)^{-1}\nabla J(\theta_k)$. They then perform the policy parameter update, $\theta_{k+1} = \theta_k + \alpha_k w_k.$

\section{Projected Natural Actor-Critics}
If we are given a closed convex set, $\Theta \subseteq \mathbb R^n$, of admissible policy parameters (e.g., the stable region of gains for a PID controller), we may wish to ensure that the policy parameters remain within $\Theta$. The natural actor-critic algorithms described in the previous section do not provide such a guarantee. However, their policy parameter update equations, which are natural gradient ascent updates, can easily be modified to the projected natural gradient ascent update in \eqref{eq:PNG} by projecting the parameters back onto $\Theta$ using $\Pi_\Theta^{G(\theta_k)}$:
\begin{align}
\theta_{k+1} = \Pi_\Theta^{G(\theta_k)} \!\! \left ( \theta_k + \alpha_k w_k \right ).
\end{align}
Many of the existing natural policy gradient algorithms, including NAC-LSTD, eNAC, NAC-Sarsa, and INAC, follow \emph{biased} estimates of the natural policy gradient~\cite{Thomas2012b}. For our experiments, we must use an unbiased algorithm since the projection that we propose is compatible with the natural gradient, but not necessarily biased estimates thereof.

NAC-Sarsa and INAC are equivalent \emph{biased} discounted-reward natural actor-critic algorithms with per-time-step time complexity linear in the number of features. The former was derived by replacing the LSTD-Q$(\lambda)$ component of NAC-LSTD with Sarsa$(\lambda)$, while the latter is the discounted-reward version of NGAC. Both are similar to NTD, which is a biased average-reward algorithm. The \emph{unbiased} \emph{discounted}-reward form of NAC-Sarsa was recently derived \cite{Thomas2012b}. References to NAC-Sarsa hereafter refer to this unbiased variant. In our case studies we use the \emph{projected natural actor-critic using Sarsa$(\lambda)$} (PNAC-Sarsa), the projected version of the unbiased NAC-Sarsa algorithm.

Notice that the projection, $\Pi_\Theta^{G(\theta_k)}$, as defined in \eqref{eq:blah}, is \emph{not} merely the Euclidean projection back onto $\Theta$. For example, if $\Theta$ is the set of $\theta$ that satisfy $A\theta \leq b$, for some fixed matrix $A$ and vector $b$, then the projection, $\Pi_\Theta^{G(\theta_k)}$, of $y$ onto $\Theta$ is a quadratic program,
\begin{align}
\mbox{minimize } f(\theta)=&-y^\intercal G(\theta_k)\theta +\frac{1}{2}\theta^\intercal G(\theta_k)\theta, \hspace{1cm} \mbox{s.t. }A\theta \leq b.
\end{align}

\vspace{-.1cm}In order to perform this projection, we require an estimate of the average Fisher information matrix, $G(\theta_k)$. If the natural actor-critic algorithm does not already include this (like NAC-LSTD and NAC-Sarsa do not), then an estimate can be generated by selecting $G_0 = \beta I$, where $\beta$ is a positive scalar and $I$ is the identity matrix, and then updating the estimate with
\begin{align}
G_{t+1} = (1-\mu_t) G_t + \mu_t \left ( \frac{\partial}{\partial \theta_k} \log \pi(a_t|s_t,\theta_k) \right ) \left ( \frac{\partial}{\partial \theta_k} \log \pi(a_t|s_t,\theta_k) \right )^\intercal,
\end{align}
where $\{\mu_t\}$ is a stepsize schedule \cite{Bhatnagar2009}. Notice that we use $t$ and $k$ subscripts since many time steps of the MDP may pass between updates to the policy parameters.


\section{Case Study: Functional Electrical Stimulation}
In this case study, we searched for proportional-derivative (PD) gains to control a simulated human arm undergoing FES. We used the Dynamic Arm Simulator 1 (DAS1) \cite{Blana2009}, a detailed biomechanical simulation of a human arm undergoing functional electrical stimulation. In a previous study, a controller created using DAS1 performed well on an actual human subject undergoing FES, although it required some additional tuning in order to cope with biceps spasticity \cite{KathyPD}. This suggests that it is a reasonably accurate model of an ideal arm.

The DAS1 model, depicted in Figure 2a, has state $s_t=(\phi_1, \phi_2, \dot \phi_1, \dot \phi_2, \phi_1^{target}, \phi_2^{target})$, where $\phi_1^{target}$ and $\phi_2^{target}$ are the desired joint angles, and the desired joint angle velocities are zero. The goal is to, during a two-second episode, move the arm from its random initial state to a randomly chosen stationary target. The arm is controlled by providing a stimulation in the interval $[0,1]$ to each of six muscles. The reward function used was similar to that of \citet{KathyPD}, which punishes joint angle error and high muscle stimulation. We searched for locally optimal PD gains using PNAC-Sarsa where the policy was a PD controller with Gaussian noise added for exploration.

Although DAS1 does not model shoulder dislocation, we added safety constraints by limiting the $l_1$-norm of certain pairs of gains. The constraints were selected to limit the forces applied to the humerus. These constraints can be expressed in the form $A \theta \leq b$, where $A$ is a matrix, $b$ is a vector, and $\theta$ are the PD gains (policy parameters). We compared the performance of three algorithms:
\begin{enumerate}
\vspace{-.2cm}\item {\bf NAC}: NAC-Sarsa with no constraints on $\theta$.
\vspace{-.1cm}\item {\bf PNAC}: PNAC-Sarsa using the compatible projection, $\Pi_\Theta^{G(\theta_k)}$.
\vspace{-.1cm}\item {\bf PNAC-E}: PNAC-Sarsa using the Euclidean projection.
\end{enumerate}
\vspace{-.2cm}
Since we are not promoting the use of one natural actor-critic over another, we did not focus on finely tuning the natural actor-critic nor comparing the learning speeds of different natural actor-critics. Rather, we show the importance of the proper projection by allowing PNAC-Sarsa to run for a million episodes (far longer than required for convergence), after which we plot the mean sum of rewards during the last quarter million episodes. Each algorithm was run ten times, and the results averaged and plotted in Figure 2b. Notice that PNAC performs worse than the unconstrained NAC. This happens because NAC leaves the safe region of policy space during its search, and converges to a dangerous policy---one that reaches the goal quickly and with low total muscle force, but which can cause large, short, spikes in muscle forces surrounding the shoulder, which violates our safety constraints. We suspect that PNAC converges to a near-optimal policy within the region of policy space that we have designated as safe. PNAC-E converges to a policy that is worse than that found by PNAC because it uses an incompatible projection.

\begin{figure}[]
	\centering
	\begin{subfigure}[t]{0.475\textwidth}
		\centering
		\includegraphics[width=.7\textwidth]{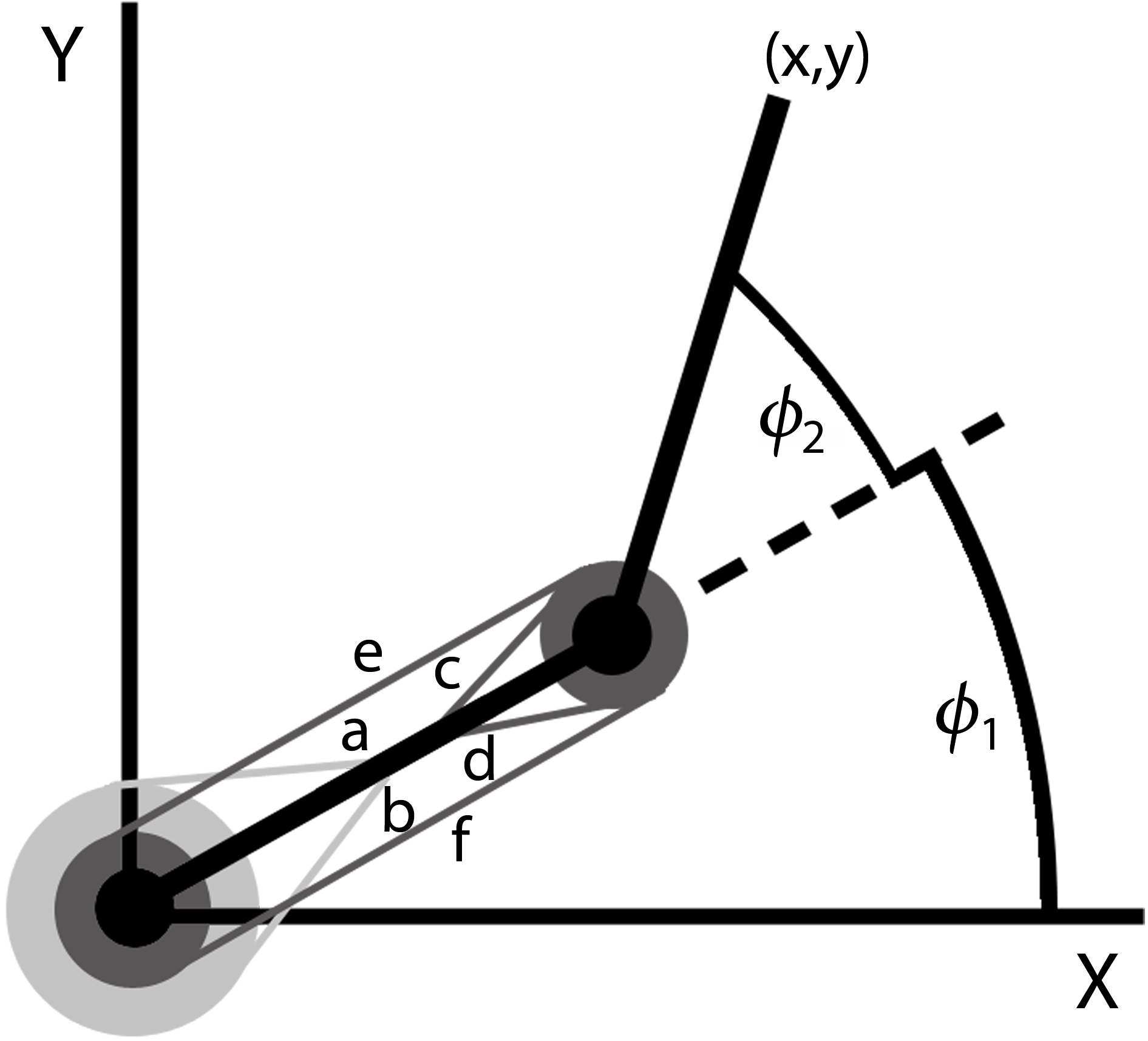}
		\caption*{(Figure 2a) DAS1, the two-joint, six-muscle biomechanical model used. Antagonistic muscle pairs are as follows, listed as (flexor, extensor): monoarticular shoulder muscles (a: anterior deltoid, b: posterior deltoid); monoarticular elbow muscles (c: brachialis, d: triceps brachii (short head)); biarticular muscles (e: biceps brachii, f: triceps brachii (long head)).}
		\label{fig:DAS1}
	\end{subfigure}
	\hspace{.03\textwidth}
	\begin{subfigure}[t]{0.475\textwidth}
		\centering
		\includegraphics[width=1\textwidth]{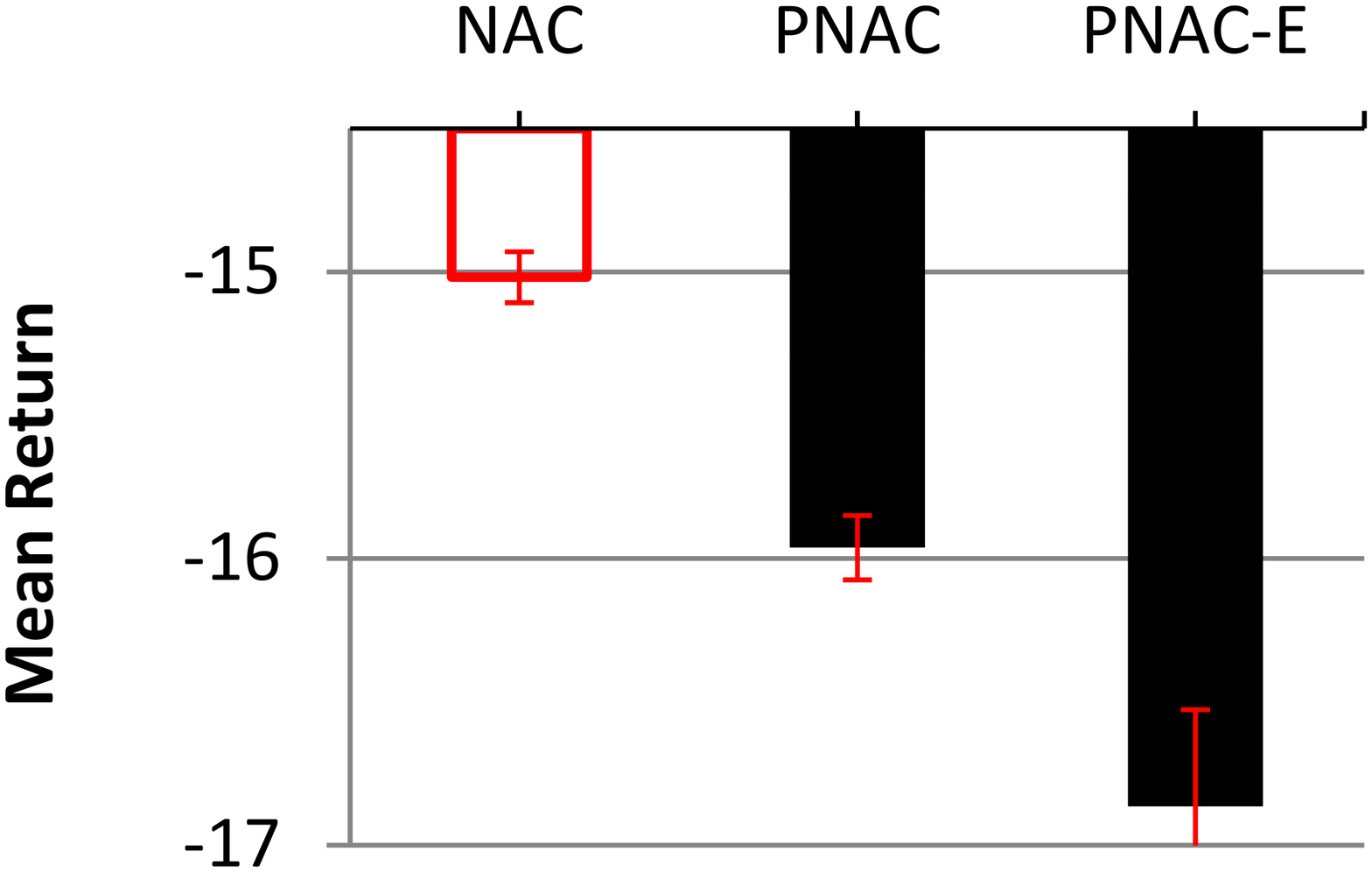}
		\caption*{(Figure 2b) Mean return during the last 250,000 episodes of training using thee algorithms. Standard deviation error bars from the 10 trials are provided. The NAC bar is red to emphasize that the final policy found by NAC resides in the dangerous region of policy space.}
		\label{fig:DAS1Data}
	\end{subfigure}
\vspace{-.4cm}
\end{figure}

\section{Case Study: uBot Balancing}
In the previous case study, the optimal policy lay outside the designated safe region of policy space (this is common when a single failure is so costly that adding a penalty to the reward function for failure is impractical, since a single failure is unacceptable). We present a second case study in which the optimal policy lies within the designated safe region of policy space, but where an unconstrained search algorithm may enter the unsafe region during its search of policy space (at which point large negative rewards return it to the safe region).

 The uBot-5, shown in Figure \ref{fig:uBot}, is an 11-DoF mobile manipulator developed at the University of Massachusetts Amherst \cite{Deegan2010,Kuindersma2009}. During experiments, it often uses its arms to interact with the world. Here, we consider the problem faced by the controller tasked with keeping the robot balanced during such experiments. To allow for results that are easy to visualize in 2D, we use a PD controller that observes only the current body angle, its time derivative, and the target angle (always vertical). This results in the PD controller having only two gains (tunable policy parameters). We use a crude simulation of the uBot-5 with random upper-body movements, and search for the PD gains that minimize a weighted combination of the energy used and the mean angle error (distance from vertical).

We constructed a set of conservative estimates of the region of stable gains, with which the uBot-5 should never fall, and used PNAC-Sarsa and NAC-Sarsa to search for the optimal gains. Each training episode lasted 20 seconds, but was terminated early (with a large penalty) if the uBot-5 fell over. Figure \ref{fig:uBot} (middle) shows performance over 100 training episodes. Using NAC-Sarsa, the PD weights often left the conservative estimate of the safe region, which resulted in the uBot-5 falling over. Figure \ref{fig:uBot} (right) shows one trial where the uBot-5 fell over four times (circled in red). The resulting large punishments cause NAC-Sarsa to quickly return to the safe region of policy space. Using PNAC-Sarsa, the simulated uBot-5 never fell. Both algorithms converge to gains that reside within the safe region of policy space. We selected this example because it shows how, even if the optimal solution resides within the safe region of policy space (unlike the in the previous case study), unconstrained RL algorithms may traverse unsafe regions of policy space during their search.

\begin{figure}
\raisebox{0.8cm}{\includegraphics[width=.18\columnwidth]{uBot.jpg}}
\hspace{.1cm}
 \includegraphics[width=.4\columnwidth]{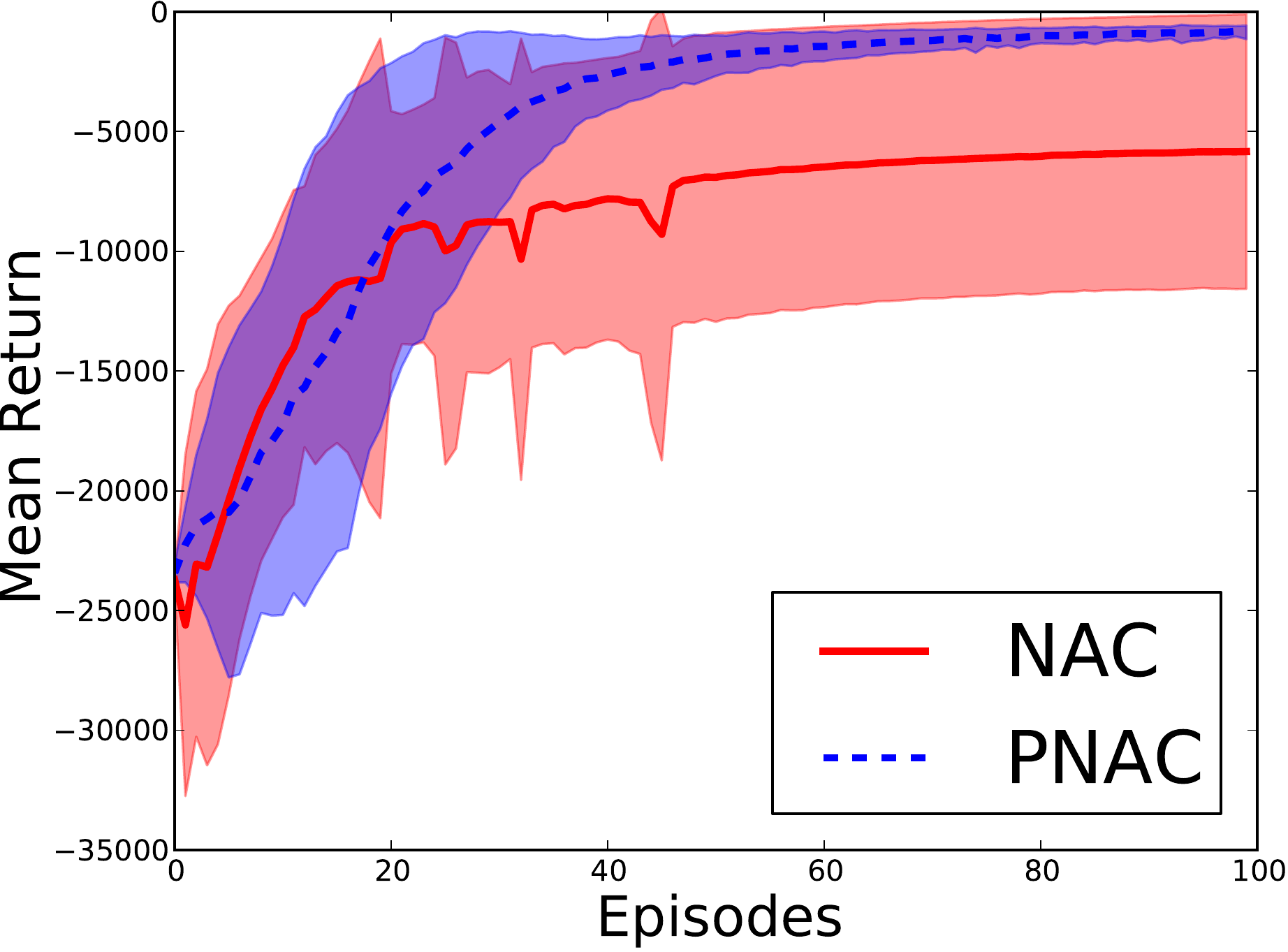}
\hspace{-.1cm}
\raisebox{-.2cm}{\includegraphics[width=.3\columnwidth]{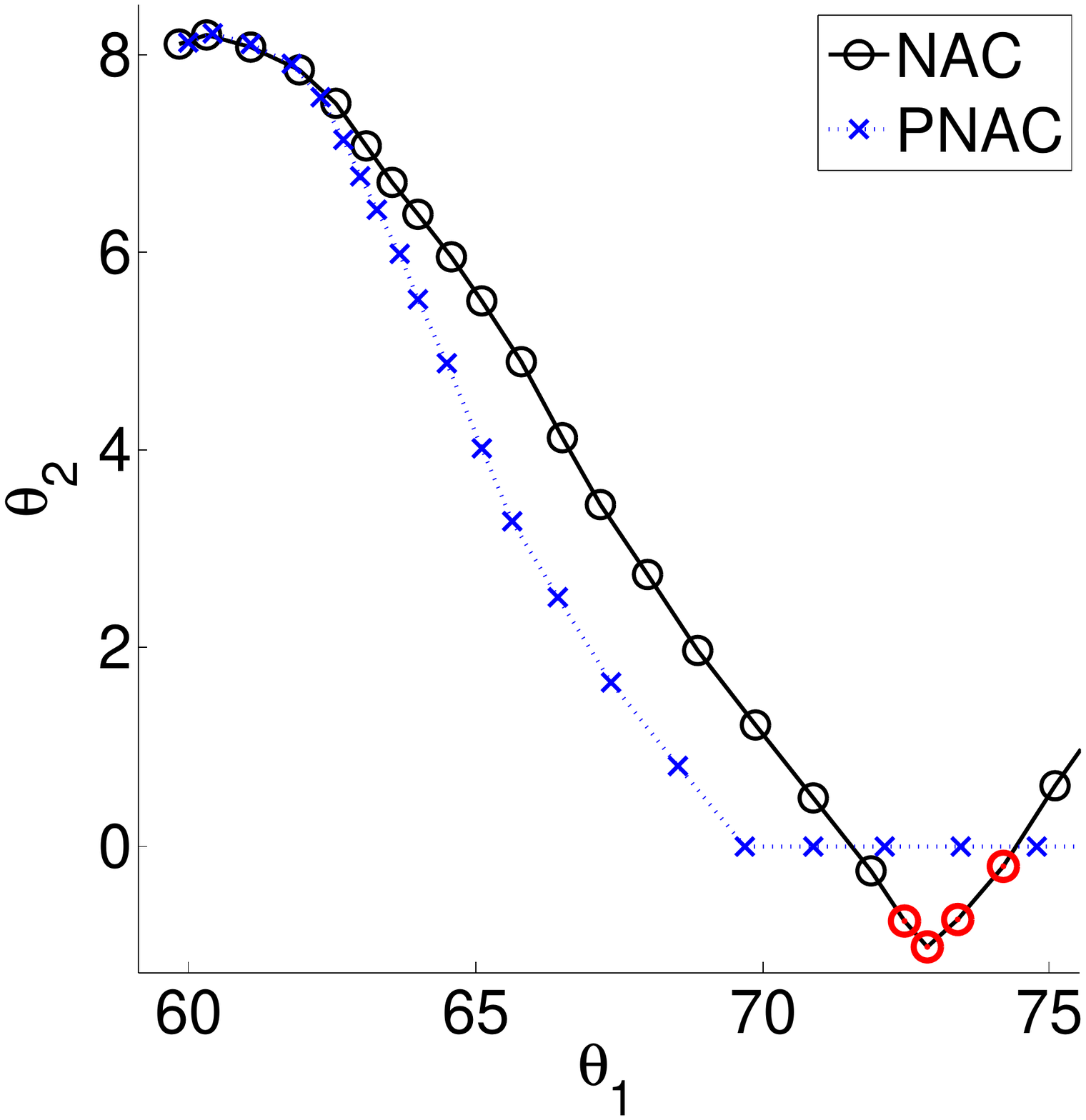}} \hfill
\caption{{\bf Left:} uBot-5 holding a ball. {\bf Middle:} Mean (over 20-trials) returns over time using PNAC-Sarsa and NAC-Sarsa on the simulated uBot-5 balancing task. The shaded region depicts standard deviations. {\bf Right:} Trace of the two PD gains, $\theta_1$ and $\theta_2$, from a typical run of PNAC-Sarsa and NAC-Sarsa. A marker is placed for the gains after each episode, and red markers denote episodes where the simulated uBot-5 fell over.}
\label{fig:uBot}
\end{figure}


%
%
%
%

\section{Summary}

We presented a class of algorithms, which we call \emph{projected natural actor-critics} (PNACs). PNACs are the simple modification of existing natural actor-critic algorithms to include a projection of newly computed policy parameters back onto an allowed set of policy parameters (e.g., those of policies that are known to be safe). We argued that a principled projection is the one that results from viewing natural gradient descent, which is an \emph{unconstrained} algorithm, as a special case of mirror descent, which is a \emph{constrained} algorithm.

We show that the resulting projection is compatible with the natural gradient and gave a simple empirical example that shows why a compatible projection is important. This example also shows how an incompatible projection can result in natural gradient descent converging to a pessimal solution in situations where a compatible projection results in convergence to an optimal solution. We then applied a PNAC algorithm to a realistic constrained control problem with six-dimensional continuous states and actions. Our results support our claim that the use of an incompatible projection can result in convergence to inferior policies. Finally, we applied PNAC to a simulated robot and showed its substantial benefits over unconstrained natural actor-critic algorithms.


\chapter{True Stochastic Gradient Temporal Difference Learning Algorithms}
\label{gtd-chapter}

We now turn to the solution of a longstanding puzzle: how to design a ``true" gradient method for reinforcement learning?
We address long-standing questions in reinforcement learning: (1) Are  there any first-order reinforcement
learning algorithms that can be viewed as ``true" stochastic gradient methods?
If there are, what are their objective functions and what are their convergence rates?
(2) What is the general framework for avoiding biased sampling (instead of double-sampling, which is a stringent sampling requirement)
in reinforcement learning? To this end, we introduce a novel primal-dual
splitting framework for reinforcement learning, which shows that the GTD family of algorithms are
true stochastic algorithms with respect to the primal-dual formulation
of the objective functions such as NEU and MSPBE, which facilitates their
convergence rate analysis and regularization. We also propose  operator splitting
as a unified framework to avoid bias sampling in reinforcement learning.
We  present an illustrative empirical study on simple canonical problems
validating the effectiveness of the proposed algorithms compared with previous
approaches.

\section{Introduction}

First-order temporal difference (TD) learning is a widely used class of techniques
in reinforcement learning. Although least-squares based
temporal difference approaches, such as LSTD \cite{bradtke-barto:LSTD},
LSPE \cite{bertsekas-nedic} and LSPI \cite{lagoudakis:jmlr} perform
well with moderate size problems, first-order temporal difference
learning algorithms scale more gracefully to high dimensional problems.
The initial class of TD methods was known to converge only when samples are drawn
``on-policy". This motivated the development of the gradient TD (GTD)  family of methods \cite{FastGradient:2009}.  A novel saddle-point framework for sparse regularized GTD was proposed recently \cite{ROTD:NIPS2012}. However, there have been several questions
regarding the current off-policy TD algorithms.
(1) The first is the convergence rate of these algorithms. Although
these algorithms are motivated from the gradient of an objective function
such as MSPBE and NEU, they are not true stochastic gradient methods
with respect to these objective functions, as pointed out in \cite{szepesvari2010algorithms}, which make the convergence rate and error bound analysis difficult, although asymptotic analysis
has been carried out using the ODE approach. (2) The second concern is
regarding acceleration. It is believed that TDC performs the best so far of
the GTD family of algorithms. One may intuitively ask if there are any gradient TD
algorithms that can outperform TDC. (3) The third concern is regarding compactness
of the feasible set $\theta$. The GTD family of algorithms all assume
that the feasible set $\theta$ is unbounded, and if the feasible
set $\theta$ is compact, there is no theoretical analysis and convergence
guarantee. (4) The fourth question is on regularization: although
the saddle point framework proposed in \cite{ROTD:NIPS2012} provides an online regularization framework
for the GTD family of algorithms, termed as RO-TD, it is based on the
inverse problem formulation and is thus not quite explicit. One further
question is whether there is a more straightforward algorithm, e.g,
the regularization is directly based on the MSPBE and NEU objective
functions.

Biased sampling is a well-known problem in reinforcement learning.
Biased sampling is caused by the stochasticity of the policy wherein
there are multiple possible  successor states from the current state
where the agent is. If it is a deterministic policy, then there will
be no biased sampling problem. Biased sampling is often caused
by the product of the TD errors, or the product of TD error and the
gradient of TD error w.r.t the model parameter $\theta$. There are
two ways to avoid the biased sampling problem, which can be categorized
into double sampling methods and two-time-scale stochastic approximation
methods.

In this paper, we  propose a novel approach to TD algorithm design in reinforcement learning, based on
introducing the  {\em proximal splitting}  framework \cite{PROXSPLITTING2011}. We show that the GTD family of algorithms are true stochastic gradient descent (SGD) methods, thus making their convergence
rate analysis available. New accelerated off-policy algorithms are
proposed and their comparative study with RO-TD is carried out to show
the effectiveness of the proposed algorithms. We also show that primal-dual
splitting is a unified first-order optimization framework to solve the
biased sampling problem.

Here is a roadmap to the rest of the chapter. Section 2 reviews reinforcement
learning and the basics of proximal splitting formulations and algorithms.
Section 3 introduces a novel problem formulation which we investigate
in this paper. Section 4 proposes a series of new algorithms, demonstrates
the connection with the GTD algorithm family, and also presents accelerated
algorithms. Section 5 presents theoretical analysis of the algorithms.
Finally, empirical results are presented in Section 6 which validate
the effectiveness of the proposed algorithmic framework. Abbreviated technical proofs
of the main theoretical results are provided in a supplementary appendix.

\section{Background}

\subsection{Markov Decision Process and Reinforcement Learning}


In linear value function approximation,
a value function is assumed to lie in the linear span of a basis function
matrix $\Phi$ of dimension $\left|S\right|\times d$, where $d$
is the number of linear independent features. Hence, $V\approx V_{\theta}=\Phi\theta$.
For the $t$-th sample, $\phi_{t}$ (the $t$-th row of $\Phi$),
$\phi'_{t}$ (the $t$-th row of $\Phi'$) are the feature vectors
corresponding to $s_{t},s'_{t}$, respectively. $\theta_{t}$ is the
weight vector for $t$-th sample in first-order TD learning methods,
and ${\delta_{t}}=({r_{t}}+\gamma\phi_{t}^{'T}{\theta_{t}})-\phi_{t}^{T}{\theta_{t}}$
is the temporal difference error. TD learning uses the following update
rule ${\theta_{t+1}}={\theta_{t}}+{\alpha_{t}}{\delta_{t}}{\phi_{t}}$,
where $\alpha_{t}$ is the stepsize. However, TD is only guaranteed
to converge in the on-policy setting, although in many off-policy
situations, it still has satisfactory performance \cite{Kolter:offpolicyTD}.
To this end, Sutton et al. proposed a family of off-policy convergent
algorithms including GTD, GTD2 and TD with gradient correction (TDC).
GTD is a two-time-scale stochastic approximation approach which aims
to minimize the norm of the expected TD update (NEU), which is defined
as
\begin{equation}
{\rm {NEU}}(\theta)=\mathbb{E}{[\delta_{t}(\theta)\phi_{t}]^{T}}\mathbb{E}[\delta_{t}(\theta)\phi_{t}].\label{eq:neu}
\end{equation}
TDC \cite{FastGradient:2009} aims to minimize the mean-square projected
Bellman error (MSPBE) with a similar two-time-scale technique, which
is defined as ${\rm {MSPBE}}(\theta) =$
\begin{equation}
\left\Vert {\Phi\theta-\Pi T(\Phi\theta)}\right\Vert _{\Xi}^{2}={({\Phi^{T}}\Xi(T\Phi\theta-\Phi\theta))^{T}}{({\Phi^{T}}\Xi\Phi)^{-1}}{\Phi^{T}}\Xi(T\Phi\theta-\Phi\theta), \label{eq:mspbe-1}
\end{equation}
where $\Xi$ is a diagonal matrix whose entries $\xi(s)$ are given
by a positive probability distribution over states.

\section{Problem Formulation}

Biased sampling is a well-known problem in reinforcement learning.
Biased sampling is caused by $\mathbb{E}[{\phi_{t}^{'T}}{\phi_{t}^{'}}]$
or $\mathbb{E}[{\phi_{t}^{'}}{\phi_{t}^{'T}}]$, where $\phi_{t}^{'}$
is the feature vector for state $s_{t}^{'}$ in sample $(s_{t},a_{t},r_{t},{s_{t}^{'}})$.
Due to the stochastic nature of the policy, there may be many $s_{t}'$
w.r.t the same $s_{t}$, thus $\mathbb{E}[{\phi_{t}^{'T}}{\phi_{t}^{'}}]$
or $\mathbb{E}[{\phi_{t}^{'}}{\phi_{t}^{'T}}]$ cannot be consistently
estimated via a single sample. This problem hinders the objective
functions to be solved via stochastic gradient descent (SGD) algorithms.
As pointed out in \cite{szepesvari2010algorithms}, although many
algorithms are motivated by well-defined convex objective functions
such as MSPBE and NEU, due to the biased sampling problem, the unbiased
stochastic gradient is impossible to obtain, and thus the algorithms
are not true SGD methods w.r.t. these objective functions.
The biased sampling is often caused by the product of the TD errors,
or the product of TD error and the derivative of TD error w.r.t. the
parameter $\theta$. There are two ways to avoid the biased sampling
problem, which can be categorized into double sampling methods and
stochastic approximation methods. Double sampling, which samples both
$s'$ and $s''$ and thus requires computing $\phi'$ and $\phi''$,
is possible in batch reinforcement learning, but is  usually impractical
in online reinforcement learning. The other approach is
stochastic approximation, which introduces a new variable to
estimate the part containing $\phi_{t}^{'}$, thus avoiding the product
of $\phi_{t}^{'}$ and $\phi_{t}^{''}$. Consider, for example, the
NEU objective function in Section (\ref{eq:neu}). Taking the gradient
w.r.t. $\theta$, we have
\begin{equation}
\label{eq:neu-grad}
-\frac{1}{2}{\rm {NEU}}(\theta)=\mathbb{E}[(\phi_{t}-\gamma\phi_{t}'){\phi_{t}^{T}}]\mathbb{E}[\delta_{t}(\theta)\phi_{t}]
\end{equation}
If the gradient can be written as a single expectation value, then
it is straightforward to use a stochastic gradient method, however,
here we have a product of two expectations, and due to the correlation
between $(\phi_{t}-\gamma\phi_{t}'){\phi_{t}^{T}}$ and $\delta_{t}(\theta)\phi_{t}$,
the sampled product is not an unbiased estimate of the gradient. In
other words, $\mathbb{E}[(\phi_{t}-\gamma\phi_{t}'){\phi_{t}^{T}}]$
and $\mathbb{E}[\delta_{t}(\theta)\phi_{t}]$ can be directly sampled,
yet $\mathbb{E}[(\phi_{t}-\gamma\phi_{t}'){\phi_{t}^{T}}]\mathbb{E}[\delta_{t}(\theta)\phi_{t}]$
can not be directly sampled. To tackle this, the GTD algorithm uses
the two-time-scale stochastic approximation method by introducing
an auxiliary variable $w_{t}$, and thus the method is not a true
stochastic gradient method w.r.t. ${\rm {NEU}}(\theta)$ any more.
This auxiliary variable technique is also used in \cite{ZHIWEI2014}.

The other problem for first-order reinforcement learning algorithms
is that it is difficult to define the objective functions, which is
also caused by the biased sampling problem. As pointed out in \cite{szepesvari2010algorithms},
although the GTD family of algorithms are derived from the gradient w.r.t.
the objective functions such as MSPBE and NEU, because of the biased-sampling
problem, these algorithms cannot be formulated directly as SGD methods
w.r.t. these objective functions.

In sum, due to biased sampling, the RL objective functions cannot
be solved via a stochastic gradient method, and it is also difficult
to find objective functions of existing first-order reinforcement learning
algorithms. Thus, there remains a large gap between first-order reinforcement
learning algorithms and stochastic optimization, which we now show how to bridge.

\section{Algorithm Design}

In what follows, we build on the operator splitting methods introduced in Section~\ref{pd-splitting}, which should be reviewed before reading
the section below.

\subsection{NEU Objective Function}

The primal-dual formulation of the NEU defined in Section (\ref{eq:neu})
is as follows:
\begin{equation}
\mathop{\min}\limits _{\theta\in X}\left({\frac{1}{2}{\rm {NEU}}(\theta)+h(\theta)}\right)=\mathop{\min}\limits _{\theta\in X}\mathop{\max}\limits _{y}\left({\langle {\Phi^{T}}\Xi(R+\gamma{\Phi^{'}}\theta-\Phi\theta),y\rangle -\frac{1}{2}||y||_{2}^{2}+h(\theta)}\right)\label{eq:lower-neu}
\end{equation}
We have $K(\theta)={\Phi^{T}}\Xi(R+\gamma{\Phi^{'}}\theta-\Phi\theta)$
, and $F(\cdot)=\frac{1}{2}||\cdot||_{2}^{2}$ , thus the Legendre
transform is ${F^{*}}(\cdot)=F(\cdot)=\frac{1}{2}||\cdot||_{2}^{2}$.
Thus the update rule is
\begin{equation}
\begin{array}{l}
{y_{t+1}}={y_{t}}+{\alpha_{t}}({\delta_{t}}{\phi_{t}}-{y_{t}}){\rm {,\,}}
{\theta _{t + 1}} = {\rm{pro}}{{\rm{x}}_{{\alpha _t}h}}\left( {{\theta _t} + {\alpha _t}({\phi _t} - \gamma \phi _t^\prime )(y_t^T{\phi _t})} \right)
\end{array}\label{eq:gtd}
\end{equation}
Note that if $h(\theta)=0$ and $X={\mathbb{R}^{d}}$, then we will
have the GTD algorithm proposed in \cite{Sutton:GTD1:2008}.

\subsection{MSPBE Objective Function }
Based on the definition of MSPBE in Section (\ref{eq:neu}), we
can reformulate MSPBE as
\begin{equation}
{\rm {MSPBE}}(\theta)=||{\Phi^{T}}\Xi(T{V_{\theta}}-{V_{\theta}})||_{{{({\Phi^{T}}\Xi\Phi)}^{-1}}}^{2}\label{eq:mspbe-1-1}
\end{equation}
The gradient of MSPBE is correspondingly computed as
\begin{equation}
-\frac{1}{2}{\rm {MSPBE}}(\theta)=\mathbb{E}[({\phi_{t}}-\gamma{\phi_{t}^{'}})\phi_{t}^{T}]\mathbb{E}{[{\phi_{t}}\phi_{_{t}}^{T}]^{-1}}\mathbb{E}[{\delta_{t}}(\theta){\phi_{t}}]\label{eq:mspbe-grad}
\end{equation}

As opposed to computing the NEU gradient, computing Equation
(\ref{eq:mspbe-grad}) involves computing the inverse matrix $\mathbb{E}{[{\phi_{t}}\phi_{_{t}}^{T}]^{-1}}$,
which imposes extra difficulty. To this end, we propose another primal-dual
splitting formulation with weighted Euclidean norm as follows,
\begin{equation}
\mathop{\min}\limits _{x\in X}\frac{1}{2}||x||_{{M^{-1}}}^{2}=\mathop{\min}\limits _{x\in X}\mathop{\max}\limits _{w}\langle x,w \rangle -\frac{1}{2}||w||_{M}^{2}\label{eq:primdual-mspbe}
\end{equation}
where $M={\Phi^{T}}\Xi\Phi$,
and the dual variable is denoted as $w_{t}$ to differentiate it from
$y_{t}$ used for the NEU objective function. Then we have
\begin{equation}
\mathop{\min}\limits _{\theta\in X}\frac{1}{2}{\rm {MSPBE}}(\theta)+h(\theta)=\mathop{\min}\limits _{\theta\in X}\mathop{\max}\limits _{w}\langle {{\Phi^{T}}\Xi(R+\gamma{\Phi^{'}}\theta-\Phi\theta),w}\rangle -\frac{1}{2}||w||_{M}^{2}+h(\theta)\label{eq:lower-mspbe}
\end{equation}
Note that the nonlinear
convex $F(\cdot)=\frac{1}{2}||\cdot||_{M^{-1}}^{2}$ , and thus the
Legendre transform is ${F^{*}}(\cdot)=\frac{1}{2}||\cdot||_{M}^{2}$.
We can see that by using the primal-dual splitting formulation, computing
the inverse matrix $M^{-1}$ is avoided. Thus the update rule
is as follows:
\begin{equation}
\begin{array}{l}
{w_{t+1}}={w_{t}}+{\alpha_{t}}({\delta_{t}}-\phi_{_{t}}^{T}{w_{t}}){\phi_{t}}{\rm {,\,\mathbf{}}}\mbox{{\ensuremath{\theta_{t+1}}} = pro{\ensuremath{x_{{\alpha_{t}}h}}}\ensuremath{\left({{\theta_{t}}+{\alpha_{t}}({\phi_{t}}-\gamma\phi_{t}^{\prime})(w_{t}^{T}{\phi_{t}})}\right)}}\end{array}\label{eq:gtd2}
\end{equation}
Note that if $h(\theta)=0$ and $X={\mathbb{R}^{d}}$, then we will
have the GTD2 algorithm proposed in \cite{FastGradient:2009}. It
is also worth noting that the TDC algorithm seems not to have an explicit proximal
splitting representation, since it incorporates $w_{t}(\theta)=\mathbb{E}{[{\phi_{t}}\phi_{_{t}}^{T}]^{-1}}\mathbb{E}[{\delta_{t}}(\theta){\phi_{t}}]$
into the update of $\theta_{t}$, a quasi-stationary condition which
is commonly used in two-time-scale stochastic approximation approaches.
 An intuitive answer to the advantage of TDC over GTD2 is that
the TDC update of $\theta_{t}$ can be considered as incorporating
the prior knowledge into the update rule: for a stationary $\theta_{t}$,
if the optimal $w_{t}(\theta_{t})$ (termed as $w_{t}^{*}(\theta_{t})$)
has a closed-form solution or is easy to compute, then incorporating
this $w_{t}^{*}(\theta_{t})$ into the update rule tends to accelerate
the algorithm's convergence performance. For the GTD2 update in Equation
(\ref{eq:gtd2}), note that there is a sum of two terms where $w_{t}$
appears: which are $({\phi_{t}}-\gamma\phi_{t}^{\prime})(w_{t}^{T}{\phi_{t}})={\phi_{t}}(w_{t}^{T}{\phi_{t}})-\gamma\phi_{t}^{\prime}(w_{t}^{T}{\phi_{t}})$. Replacing $w_{t}$ in the first term with $w_{t}^{*}(\theta)=\mathbb{E}{[{\phi_{t}}\phi_{_{t}}^{T}]^{-1}}\mathbb{E}[{\delta_{t}}(\theta){\phi_{t}}]$,
we have the update rule as follows
\begin{equation}
\begin{array}{l}
{w_{t+1}}={w_{t}}+{\alpha_{t}}({\delta_{t}}-\phi_{_{t}}^{T}{w_{t}}){\phi_{t}}
{\;,\;}
\mbox{{\ensuremath{\theta_{t+1}}} = pro{\ensuremath{x_{{\alpha_{t}}h}}}\ensuremath{\left({{\theta_{t}}+{\alpha_{t}}({\phi_{t}}-\gamma\phi_{t}^{\prime})(\phi_{t}^{T}{w_{t}})}\right)}}
\end{array}
\end{equation}
Note that if $h(\theta)=0$ and $X={\mathbb{R}^{d}}$, then we will
have TDC algorithm proposed in \cite{FastGradient:2009}. Note that
this technique does not have the same convergence guarantee as the original
objective function. For example, if we use a similar trick on the GTD update
with the optimal $y_{t}(\theta_{t})$ (termed as $y_{t}^{*}(\theta_{t})$)
where $y_{t}^{*}(\theta)=\mathbb{E}[{\delta_{t}}(\theta){\phi_{t}}]$,
then we can have
\begin{equation}
\theta_{t+1}={\rm pro}{\ensuremath{{\rm x}_{{\alpha_{t}}h}}}\ensuremath{\left({\theta_{t}}+{\alpha_{t}}{\delta_{t}}({\phi_{t}}-\gamma\phi_{t}^{'})\right)}
\end{equation}
which is the update rule of residual gradient \cite{Baird:ResidualAlgorithms1995},
and is proven not to converge to NEU any more.%
\footnote{It converges to mean-square TD error (MSTDE), as proven in \cite{maei2011gradient}.%
}

\section{Accelerated Gradient Temporal Difference Learning Algorithms}
In this section we will discuss  the acceleration of GTD2 and TDC.
The acceleration of GTD is not discussed due to space consideration,
which is similar to GTD2. A comprehensive
overview of the convergence rate of different approaches to stochastic
saddle-point problems is given in  \cite{chen2013optimal}. In this section we present  accelerated
algorithms based on the Stochastic Mirror-Prox (SMP) Algorithm \cite{sra2011optimization,juditsky2008solving}.
Algorithm \ref{alg:GTD2mp}, termed as GTD2-MP, is accelerated GTD2
with extragradient. Algorithm \ref{alg:TDCmp}, termed as TDC-MP,
is accelerated TDC with extragradient.
\begin{algorithm}
\caption{Algorithm Template}
\label{alg:tdneu}
Let $\pi$ be some fixed policy of an MDP $M$, $\Phi$ be some fixed basis.
\begin{algorithmic}[1]
\REPEAT
\STATE Compute $\ensuremath{{{\phi_{t}},{\phi_{t}}'}}$ and TD error ${\delta_{t}}={r_{t}}+\gamma{\phi_{t}}^{\prime T}{\theta_{t}}-\phi_{t}^{T}{\theta_{t}}$
\STATE Compute $\ensuremath{\theta_{t+1},w_{t+1}}$ according to each algorithm update rule\
\UNTIL {$t=N$};
\STATE Compute primal average ${\bar{\theta}_{N}}=\frac{1}{N}\sum\limits _{i=1}^{N}{\theta_{i}},{\bar{w}_{N}}=\frac{1}{N}\sum\limits _{i=1}^{N}{w_{i}}$
\end{algorithmic}
\end{algorithm}
\begin{algorithm}
\caption{GTD2-MP}
\label{alg:GTD2mp}
\begin{enumerate}
\item ${w_{t+\frac{1}{2}}}={{w_{t}}+{\beta_{t}}({\delta_{t}}-\phi_{t}^{T}{w_{t}}){\phi_{t}}},\\ \;{\theta_{t+\frac{1}{2}}}={{\rm{pro}}{{\rm{x}}_{{\alpha _t}h}}}\left({{\theta_{t}}+{\alpha_{t}}({\phi_{t}}-\gamma{\phi_{t}}^{\prime})(\phi_{t}^{T}{w_{t}})}\right)$
\item ${\delta_{t+\frac{1}{2}}}={r_{t}}+\gamma{\phi_{t}}^{\prime T}{\theta_{t+\frac{1}{2}}}-\phi_{t}^{T}{\theta_{t+\frac{1}{2}}}$
\item $\begin{array}{l}
{w_{t+1}}={w_{t}}+{\beta_{t}}({\delta_{t+\frac{1}{2}}}-\phi_{t}^{T}{w_{t+\frac{1}{2}}}){\phi_{t}}
{\;,\;}\\
{\theta_{t+1}}={{\rm{pro}}{{\rm{x}}_{{\alpha _t}h}}}\left({{\theta_{t}}+{\alpha_{t}}({\phi_{t}}-\gamma{\phi_{t}}^{\prime})(\phi_{t}^{T}{w_{t+\frac{1}{2}}})}\right)
\end{array}
$\end{enumerate}
\end{algorithm}
\begin{algorithm}
\caption{TDC-MP}
\label{alg:TDCmp}
\begin{enumerate}
\item ${w_{t+\frac{1}{2}}}={{w_{t}}+{\beta_{t}}({\delta_{t}}-\phi_{t}^{T}{w_{t}}){\phi_{t}}},\\ \;{\theta_{t+\frac{1}{2}}}={{\rm{pro}}{{\rm{x}}_{{\alpha _t}h}}}\left({{\theta_{t}}+{\alpha_{t}}\delta_{t}{\phi_{t}}-{\alpha_{t}}\gamma{\phi_{t}}^{\prime}(\phi_{t}^{T}{w_{t}})}\right)$
\item ${\delta_{t+\frac{1}{2}}}={r_{t}}+\gamma{\phi_{t}}^{\prime T}{\theta_{t+\frac{1}{2}}}-\phi_{t}^{T}{\theta_{t+\frac{1}{2}}}$
\item $\begin{array}{l}
{w_{t+1}}={w_{t}}+{\beta_{t}}({\delta_{t+\frac{1}{2}}}-\phi_{t}^{T}{w_{t+\frac{1}{2}}}){\phi_{t}}\;
{\;,\;}\\
{\theta_{t+1}}={{\rm{pro}}{{\rm{x}}_{{\alpha _t}h}}}\left({{\theta_{t}}+{\alpha_{t}}{\delta_{t+\frac{1}{2}}}{\phi_{t}}-{\alpha_{t}}\gamma{\phi_{t}}^{\prime}(\phi_{t}^{T}{w_{t+\frac{1}{2}}})}\right)
\end{array}$\end{enumerate}
\end{algorithm}

\section{Theoretical Analysis}
In this section, we discuss the convergence rate and error bound of
GTD, GTD2 and GTD2-MP. 

\subsection{Convergence Rate}
\textbf{Proposition 1} The convergence rates of the GTD/GTD2
algorithms with primal average are $O(\frac{{{L_{{F^{*}}}}+{L_{K}}+\sigma}}{{\sqrt{N}}})$,
where ${L_{K}}=||{\Phi^{T}}\Xi(\Phi-\gamma{\Phi^{'T}})|{|^{2}}$,
for GTD, ${L_{{F^{*}}}}=1$ and for GTD2, ${L_{{F^*}}} = ||{\Phi ^T}\Xi \Phi |{|_2}$, $\sigma$ is defined in the Appendix due to space limitations.

Now we consider the convergence rate of GTD2-MP.

\textbf{Proposition 2 }The convergence rate of the GTD2-MP algorithm
is $O(\frac{{{L_{{F^{*}}}}+{L_{K}}}}{N}+\frac{\sigma}{{\sqrt{N}}})$.


See supplementary materials for an abbreviated proof.
\textbf{Remark}: The above propositions imply that when the noise level is low, the GTD2-MP algorithm is able to converge at the rate of $O(\frac{1}{N})$, whereas the convergence rate of GTD2 is $O(\frac{1}{{\sqrt N }})$. However, when the noise level is high, both algorithms' convergence rates reduce to $O(\frac{\sigma }{{\sqrt N }})$.

\subsection{Value Approximation Error Bound}
\label{eq:mspbe}

\textbf{Proposition 3}: For GTD/GTD2, the prediction error of $||V-{V_{\theta}}||$
is bounded by $||V-{V_{\theta}}|{|_{\infty}}\le\frac{{L_{\phi}^{\Xi}}}{{1-\gamma}}\cdot O\left({\frac{{{L_{{F^{*}}}}+{L_{K}}+\sigma}}{{\sqrt{N}}}}\right)$
; For GTD2-MP, it is bounded by $||V-{V_{\theta}}|{|_{\infty}}\le\frac{{L_{\phi}^{\Xi}}}{{1-\gamma}}\cdot O\left({\frac{{{L_{{F^{*}}}}+{L_{K}}}}{N}+\frac{\sigma}{{\sqrt{N}}}}\right)$,
where $L_{\phi}^{\Xi}={\max_{s}}||{({\Phi^{T}}\Xi\Phi)^{-1}}\phi(s)|{|_{1}}$.

\textbf{Proof}: see Appendix.

\subsection{Related Work}

Here we will discuss previous related work. To the best of our knowledge,
the closest related work is the RO-TD algorithm, which first introduced
the convex-concave saddle-point framework to regularize the TDC family
of algorithms. \footnote{Although only regularized TDC was proposed in \cite{ROTD:NIPS2012},
the algorithm can be easily extended to regularized GTD and GTD2.}
The major difference is that RO-TD is motivated by the linear inverse problem formulation of TDC algorithm and uses its dual norm representation as the objective function, which does not explore the auxiliary variable $w_t$. In contrast, by introducing the operator splitting framework, we demonstrate that the GTD family of algorithms can be nicely explained as a ``true'' SGD approach, where the auxiliary variable $w_t$ has a nice explanation.

Another interesting question is whether ADMM is suitable for the operator
splitting algorithm here. Let's take NEU for example. The ADMM formulation
is as follows, where we assume $K(\theta ) = K\theta$ for simplicity, and other scenarios can be derived similarly,
\begin{equation}
\mathop{\min}\limits _{\theta,z}\left({F(z)+h(\theta)}\right){\rm {s}}.{\rm {t}}.{\rm {}}z=K\theta\label{eq:admm}
\end{equation}
The update rule is as follows, where $\alpha_{t}$ is the stepsize
\begin{equation}
\begin{array}{l}
{\theta_{t+1}}=\arg\mathop{\min}\limits _{\theta}\left({h(\theta)+\left\langle {{y_{t}},K\theta-{z_{t}}}\right\rangle +\frac{1}{2}||K\theta-{z_{t}}|{|^{2}}}\right)\\
{z_{t+1}}=\arg\mathop{\min}\limits _{z}\left({F(z)+\left\langle {{y_{t}},K{\theta_{t+1}}-z}\right\rangle +\frac{1}{2}||K{\theta_{t+1}}-z|{|^{2}}}\right)\\
{y_{t+1}}={y_{t}}+\alpha_{t}(K{\theta_{t+1}}-{z_{t+1}})
\end{array}\label{eq:admm-update}
\end{equation}
At first glance the operator of $F(\cdot)$ and $K\theta$ seem to
be split, however, if we compute the closed-form update rule of $\theta_{t}$,
we can see that the update of $\theta_{t}$ includes ${({K^{T}}K)^{-1}}$,
which involves both biased-sampling and computing the inverse matrix,
thus regular ADMM does not seem to be practical for this first-order
reinforcement learning setting. However, using the pre-conditioning technique
introduced in \cite{esser2010general}, ADMM can be reduced to the
primal-dual splitting method as pointed out in \cite{POCK2011SADDLE}.

\section{Experimental Study}
\subsection{Off-Policy Convergence: Baird Example}
The Baird example is a well-known example where TD diverges and TDC converges. The stepsizes are set to be constants where ${\beta _t} = \mu {\alpha _t}$ as shown in Figure \ref{fig:star}. From Figure \ref{fig:star}, we can see that GTD2-MP
and TDC-MP have a significant advantage over the GTD2 and TDC algorithms
wherein both the MSPBE and the variance are substantially reduced.
%
\begin{figure}
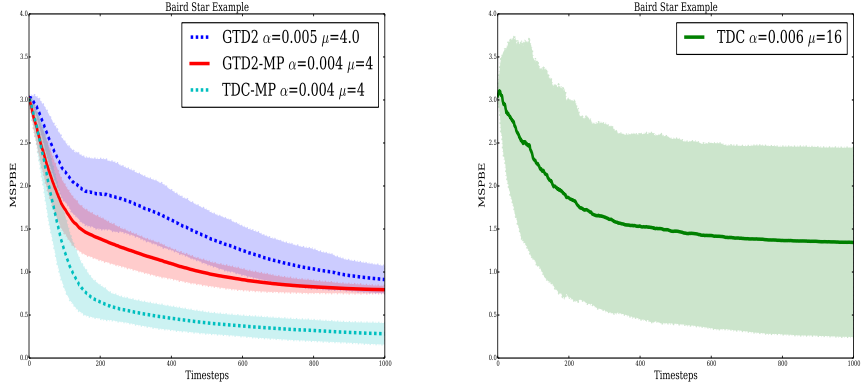

\centering
\begin{minipage}{1.15\textwidth}
\includegraphics[width= .45\textwidth, height=2.25in]{baird_1.pdf}
\includegraphics[width= .45\textwidth, height=2.25in]{baird_tdc.pdf}
\end{minipage}
\caption{Off-Policy Convergence Comparison}
\label{fig:star}
\end{figure}

\subsection{Regularization Solution Path: Two-State Example}
Now we consider the two-state MDP in \cite{DantzigRL:2012}. The transition
matrix and reward vector are $[0,1;0,1]$ and $R = {[0, - 1]^T},\gamma=0.9$, and a one-feature basis $\Phi={[1,2]^{T}}$.  The objective function are
$\theta  = \arg \mathop {\min }\limits_\theta  \left( {\frac{1}{2}L(\theta ) + \rho ||\theta |{|_1}} \right)$,
where $L(\theta)$ is ${{\rm {NEU}}(\theta)}$ and ${{\rm {MSPBE}}(\theta)}$.
The objective functions are termed as $l_{1}$-NEU and $l_{1}$-MSPBE
for short. In Figure \ref{fig:2state}, both $l_{1}$-NEU and $l_{1}$-MSPBE have well-defined solution paths w.r.t $\rho$, whereas Lasso-TD may
have multiple solutions if the $P$-matrix condition is not satisfied
\cite{Kolter09LARSTD}.
%
%
\begin{figure}
\centering
\begin{minipage}{1\textwidth}
\includegraphics[width= .32\textwidth, height=1.5in]{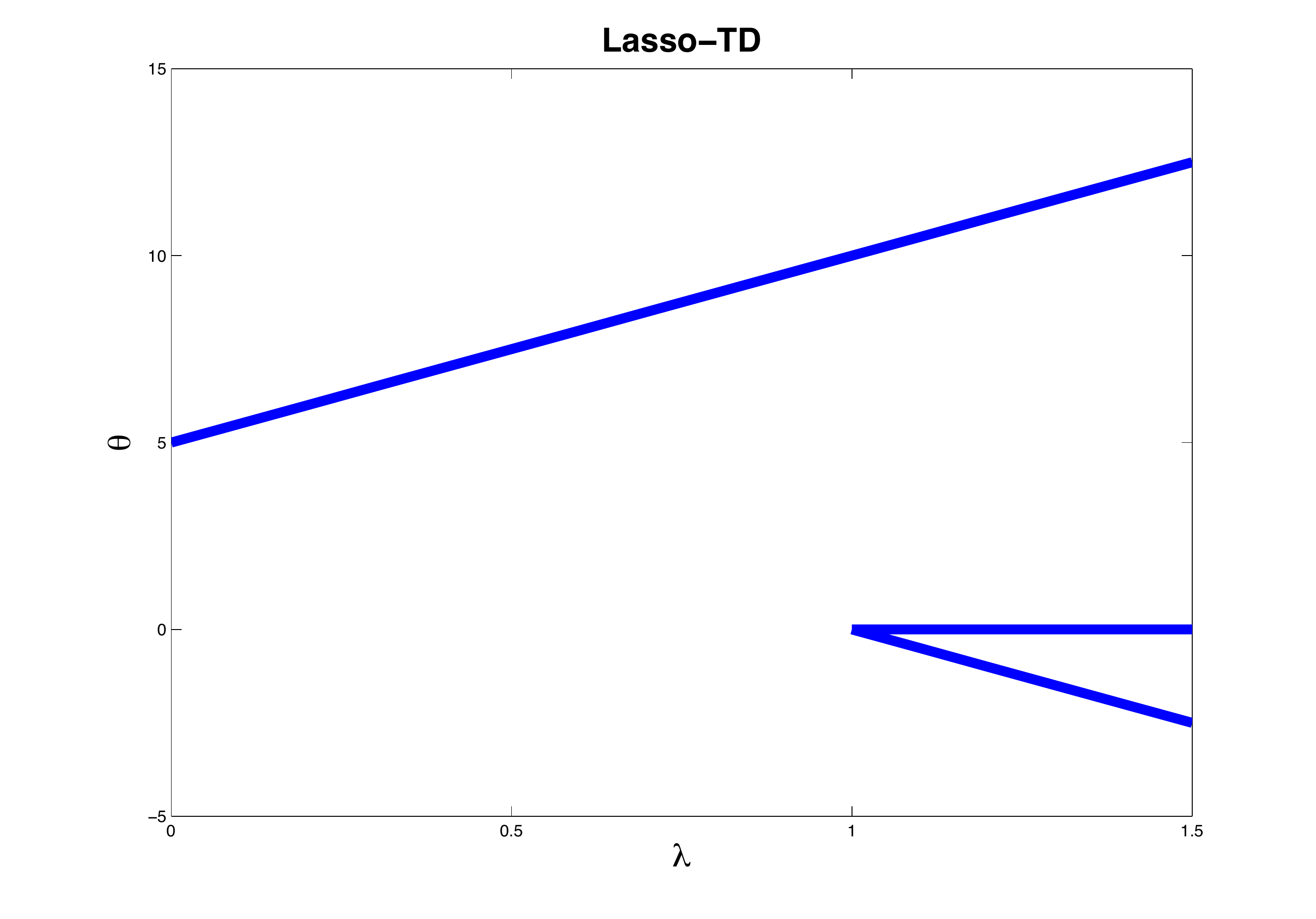}
\includegraphics[width= .32\textwidth, height=1.5in]{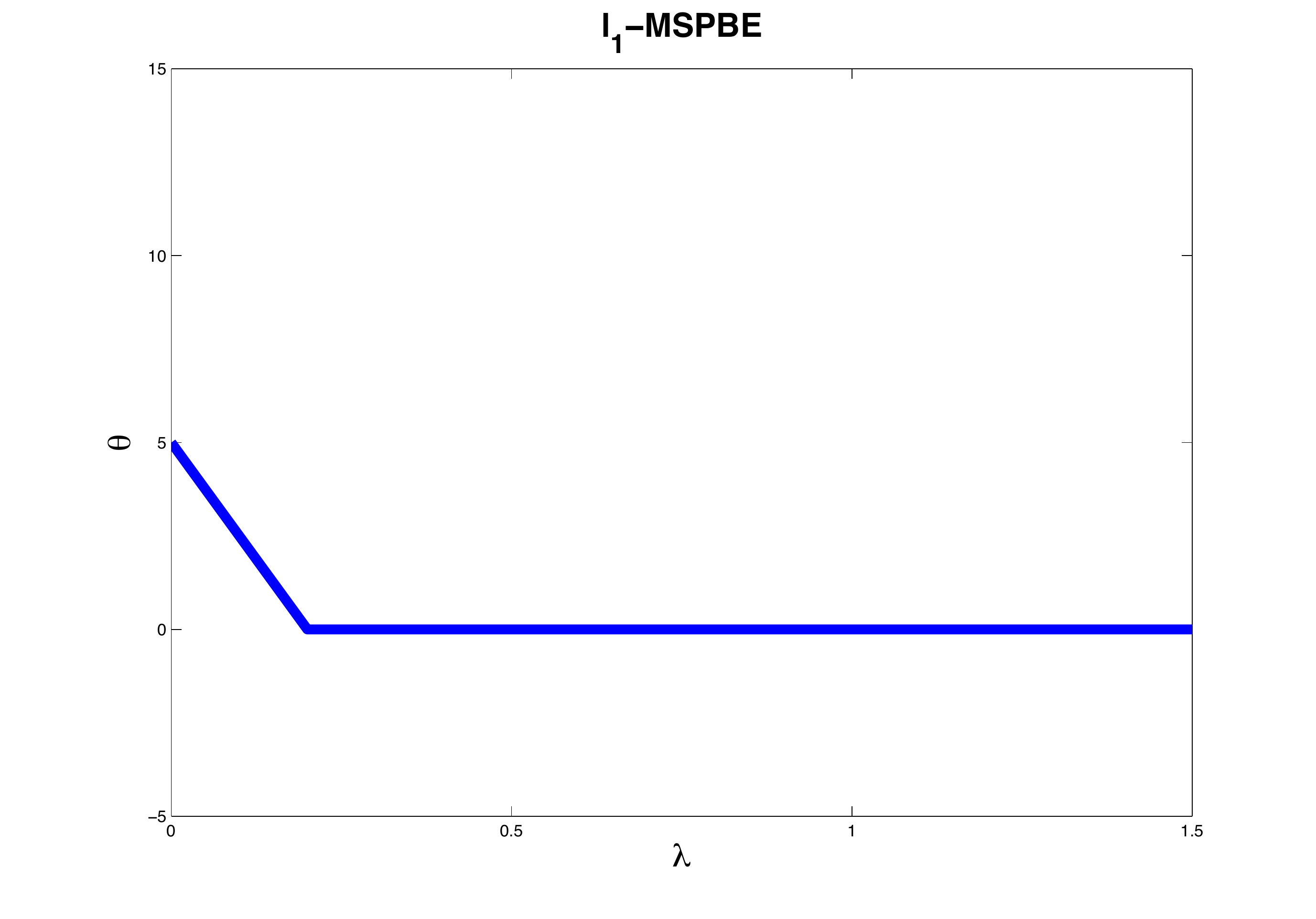}
\includegraphics[width= .32\textwidth, height=1.5in]{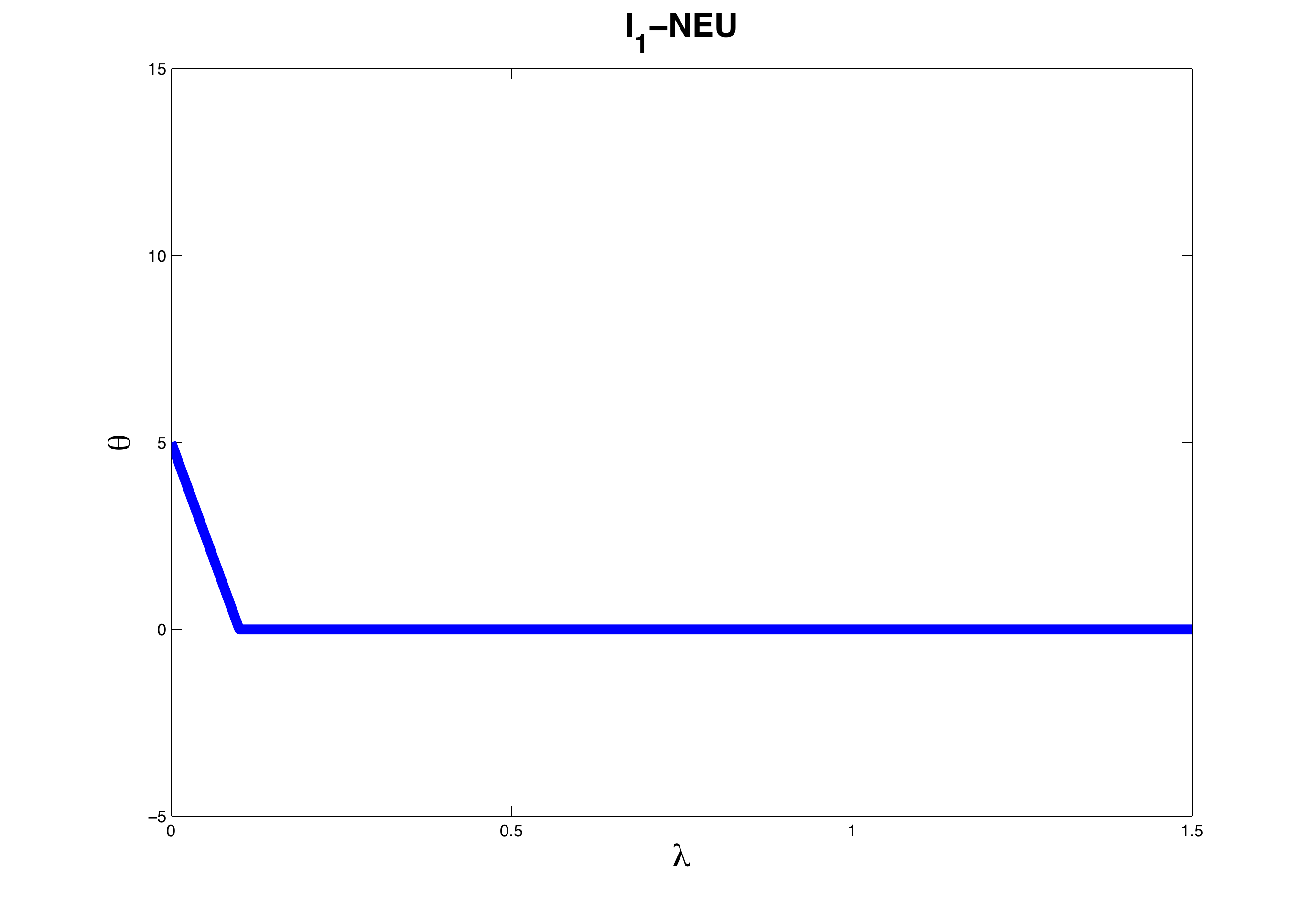}
\end{minipage}
\caption{Solution Path Comparison}
\label{fig:2state}
\end{figure}

\subsection{On-Policy Performance: $400$-State Random MDP}
In this experiment we compare the on-policy performance of the four algorithms. We use the random generated MDP with $400$ states and $10$ actions in \cite{dann2014tdsurvey}. Each state is represented by a feature vector with $201$ features, where $200$ features are generated by sampling from a uniform distribution the $201$-th feature is a constant. The stepsizes are set to be constants where ${\beta _t} = \mu {\alpha _t}$ as shown in Figure \ref{fig:400state}. The parameters of each algorithm are chosen via comparative studies similar to \cite{dann2014tdsurvey}.
The result is shown in Figure \ref{fig:400state}. The results for each algorithm are averaged on $100$ runs, and the parameters of each algorithm are chosen via experiments. TDC shows high variance and chattering effect of MSPBE curve on this domain.
Compared with GTD2, GTD2-M1P is able to reduce the MSPBE significantly. Compared with TDC, TDC-MP  not only reduces the MSPBE, but also  the variance and the ''chattering" effect.
\begin{figure}[h]
\centering{}\includegraphics[width=.8\textwidth,height=2.5in]{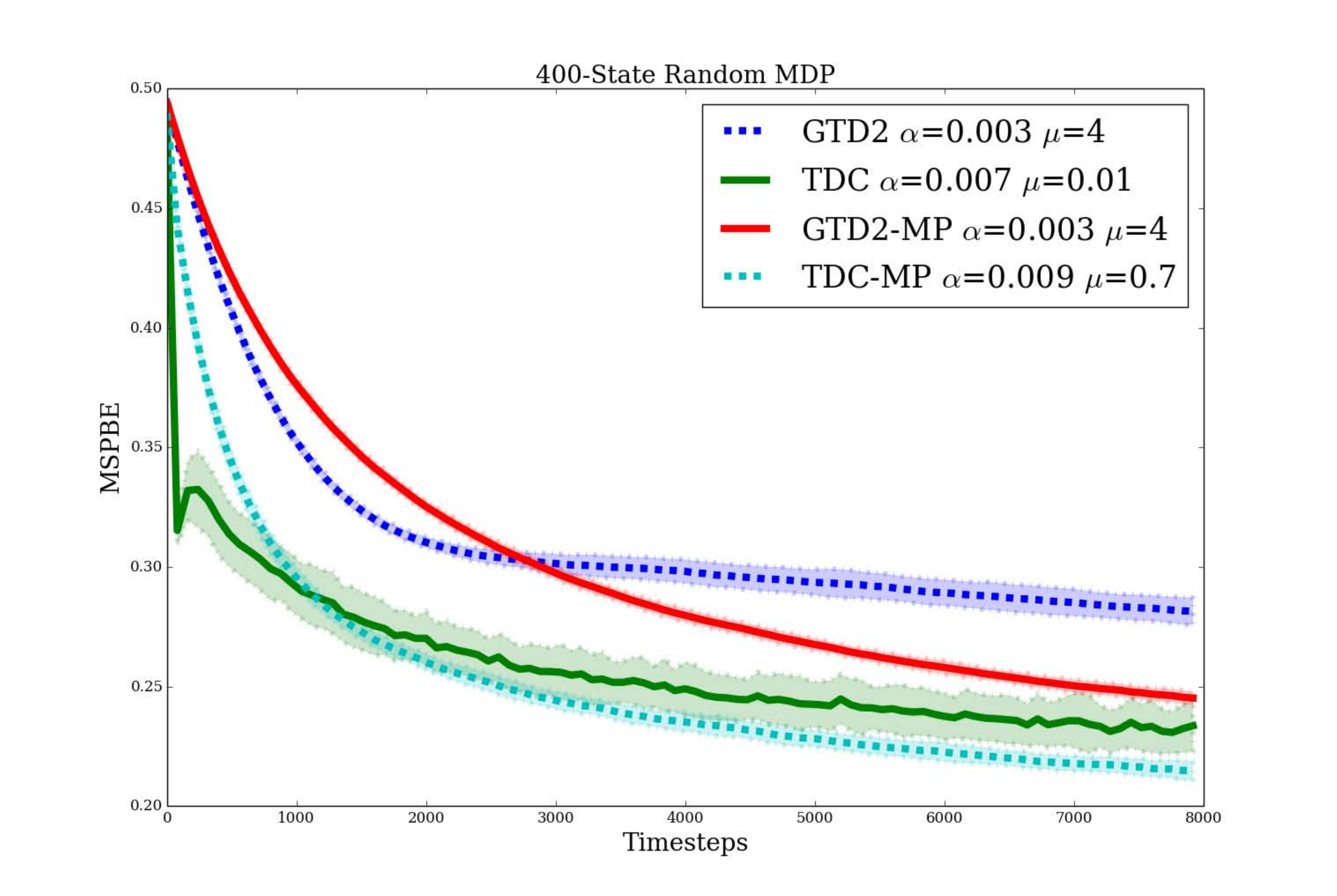}
\caption{Comparison of $400$-State Random MDP}
\label{fig:400state}
\end{figure}


\section{Summary}
\label{gtd-summary}

This chapter shows that the GTD/GTD2 algorithms are true stochastic gradient methods {\em w.r.t. the primal-dual formulation of their corresponding objective functions, which enables their convergence rate analysis and
regularization}. Second, it proposes operator splitting as a broad framework to solve the biased-sampling
problem in reinforcement learning. Based on the unified primal-dual splitting framework, it also proposes
accelerated algorithms with both rigorous theoretical analysis and illustrates their
improved performance w.r.t. previous methods. Future research is ongoing to explore other operator splitting techniques
beyond primal-dual splitting as well as incorporating random projections \cite{lstdrp:nips2010}, and investigating kernelized algorithms  \cite{barreto2013kernelrl,taylor2009kernelrl}. Finally, exploring the convergence rate of the TDC algorithm is also important
and interesting.

\chapter{Variational Inequalities: The Emerging Frontier of Machine Learning}
\label{vis}

This paper describes a new framework for reinforcement learning based on {\em primal dual} spaces connected by a Legendre transform. The ensuing theory yields surprising and beautiful solutions to several important questions that have remained unresolved: (i) how to design reliable, convergent, and stable reinforcement learning algorithms (ii) how to guarantee that reinforcement learning satisfies pre-specified ``safety" guarantees, and remains in a stable region of the parameter space (iv) how to design ``off-policy" TD-learning algorithms in a reliable and stable manner, and finally, (iii) how to integrate the study of reinforcement learning into the rich theory of stochastic optimization. In this paper, we gave detailed answers to all these questions using the powerful framework of {\em proximal operators}. The single most important idea that emerges is the use of {\em primal dual spaces} connected through the use of a {\em Legendre} transform. This allows temporal-difference updates to occur in dual spaces, allowing a variety of important technical advantages. The Legendre transform, as we show, elegantly generalizes past algorithms for solving reinforcement learning problems, such as {\em natural gradient} methods, which we show relate closely to the previously unconnected framework of {\em mirror descent} methods. Equally importantly, proximal operator theory enables the systematic development of {\em operator splitting} methods that show how to safely and reliably decompose complex products of gradients that occur in recent variants of gradient-based temporal-difference learning. This key technical contribution makes it possible to finally show to design ``true" stochastic gradient methods for reinforcement learning. Finally, Legendre transforms enable a variety of other benefits, including modeling sparsity and domain geometry. Our work builds extensively on recent work on the convergence of saddle-point algorithms, and on the theory of {\em  monotone operators} in Hilbert spaces,  both in optimization and for variational inequalities.  The latter represents possibly the most exciting future research direction, and we give a more detailed description of this ongoing research thrust.

\section{Variational Inequalities}

Our discussion above has repeatedly revolved around the fringes of variational inequality theory. Methods like extragradient \cite{extragradient:1976} and the mirror-prox algorithm were originally proposed  to solve variational inequalities and related saddle point problems. We are currently engaged in redeveloping the proposed ideas more fully within the fabric of variational inequality (VI). Accordingly, we briefly describe the framework of VIs, and give the reader a brief tour of this fascinating extension of the basic underlying framework of optimization. We lack the space to do a thorough review. That is the topic of another monograph to be published at a later date, and several papers on this topic are already under way.

At the dawn of a new millennium, the Internet dominates our economic, intellectual and social lives. The  concept of equilibrium plays a key role in understanding not only the Internet, but also other networked systems, such as human migration \cite{nagurney:migration}, evolutionary dynamics and the spread of infectious diseases \cite{novak:book}, and social networks \cite{kleinberg:text}. Equilibria are also a central idea in  game theory  \cite{fudenberg-levine:book,algorithmic-game-theory}, economics \cite{econ-text}, operations research \cite{puterman}, and many related areas.  We are currently exploring two powerful mathematical tools for the study of equilibria -- variational inequalities (VIs) and projected dynamical systems (PDS) \cite{nagurney:vibook,nagurney:pdsbook} -- in developing a new  machine learning framework for solving equilibrium problems in a rich and diverse range of practical applications.   As Figure~\ref{vi-problems} illustrates, finite-dimensional VIs provide a mathematical framework that unifies many disparate equilibrium problems of significant importance, including (convex) optimization, equilibrium problems in economics, game theory and networks, linear and nonlinear complementarity problems, and solutions of systems of nonlinear equations.

\begin{figure}[h]
\begin{center}
\begin{minipage}[t]{0.85\textwidth}
\includegraphics[width=\textwidth,height=3in]{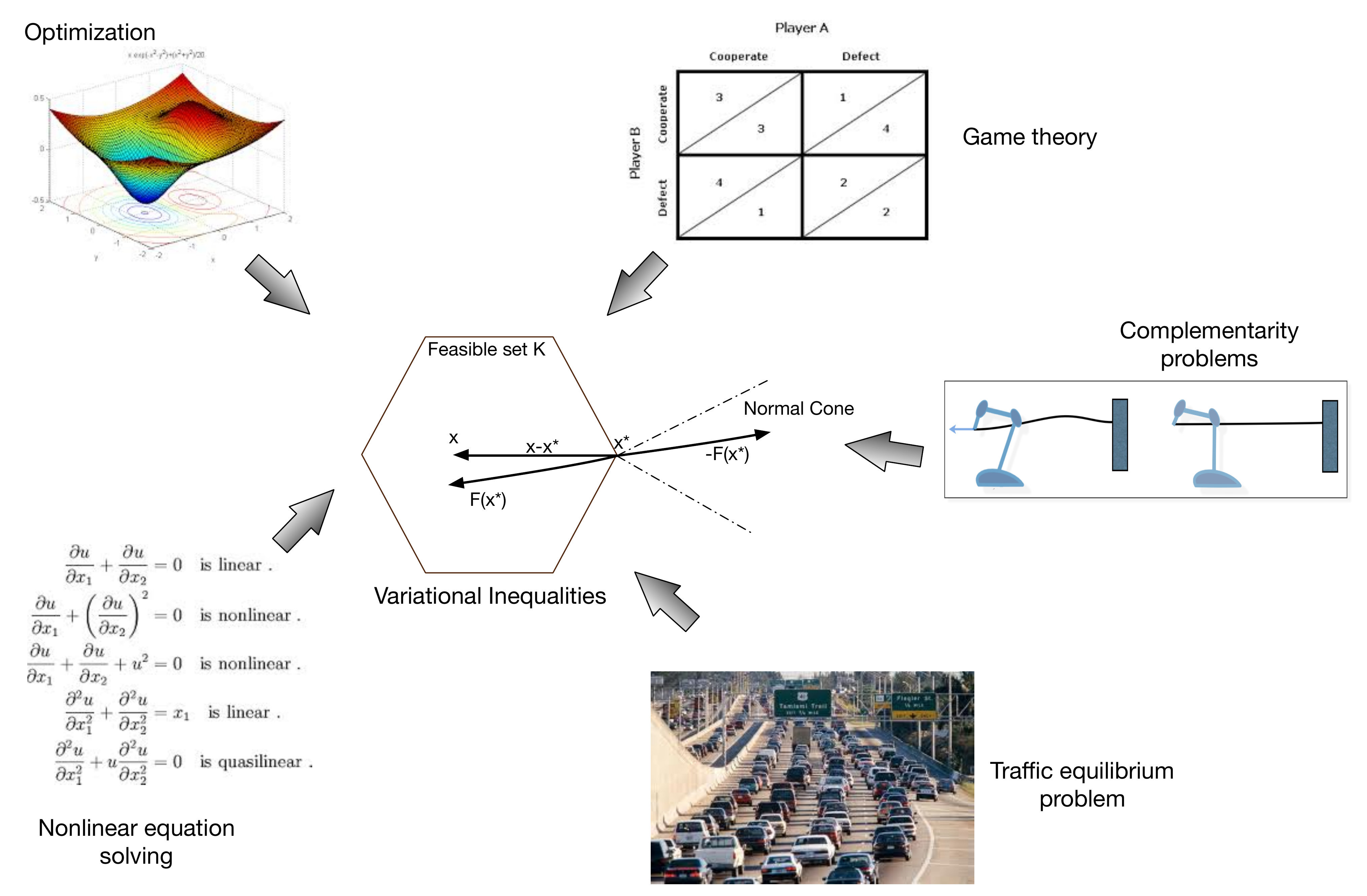}
\end{minipage} \hspace{0.2in}
\end{center}
\vskip -0.2in
\caption{A variety of  real-world problems can be modeled as solving variational inequalities.}
\label{vi-problems}
\end{figure}

Variational inequalities (VIs), in the infinite-dimensional setting,  were originally proposed by Hartman and Stampacchia \cite{hartman-stampacchia:acta} in the mid-1960s in the context of solving partial differential equations in mechanics. Finite-dimensional VIs rose in popularity in the 1980s partly as a result of work by Dafermos \cite{dafermos}. who showed that the traffic network equilibrium problem  could be formulated as a finite-dimensional VI. This advance inspired much follow-on research, showing that a variety of equilibrium problems in economics, game theory, sequential decision-making etc. could also be formulated as finite-dimensional VIs -- the books by Nagurney \cite{nagurney:vibook} and Facchinei and Pang \cite{facchinei-pang:vi} provide  a detailed introduction to the theory and applications of finite-dimensional VIs.     Projected dynamical systems (PDS) \cite{nagurney:pdsbook} are a class of ordinary differential equations (ODEs) with a discontinuous right-hand side. Associated with every finite-dimensional VI is a PDS, whose stationary points are the solutions of the VI. While VIs provide a static analysis of equilibria, PDS enable a microscopic examination of the dynamic processes that lead to or away from stable equilibria. . There has been longstanding interest in AI in the development of gradient-based learning algorithms for finding Nash equilibria in multiplayer games, e.g. \cite{bowling:aij,fudenberg-levine:book,singh:uai2000}. A gradient method for finding Nash equilibria can be formalized by a set of ordinary differential equations, whose phase space portrait solution reveals the dynamical process of convergence to an equilibrium point, or lack thereof. A key complication in this type of analysis is that the classical dynamical systems approach does not allow incorporating constraints on values of variables, which are omnipresent in equilibria problems, not only in games, but also in many other applications in economics, network flow, traffic modeling etc. In contrast, the right-hand side of a PDS is a discontinuous projection operator that allows enabling constraints to be modeled.

One of the original algorithms for solving finite-dimensional VIs is the {\em extragradient method} proposed by Korpelevich \cite{korpelevich}. It has been applied to structured prediction models in machine learning by Taskar et al. \cite{taskar:jmlr}.  Bruckner et al. \cite{bruckner:jmlr} use a modified extragradient method for solving the spam filtering problem modeled as a prediction game.  We are developing a new family of extragradient-like methods based on well-known numerical methods for solving ordinary differential equations, specifically the {\em Runge Kutta} method \cite{numerical-recipes}. In optimization, the extragradient algorithm was generalized to the non-Euclidean case by combining it with the mirror-descent method \cite{nemirovski-yudin:book}, resulting in the so-called ``mirrror-prox" algorithm \cite{mirror-prox, mirror-prox2}. We have extended the mirror-prox method by combining it with Runge-Kutta methods for solving high-dimensional VI problems over the simplex and other spaces.  We show the enhanced performance of Runge-Kutta extragradient methods on a range of benchmark variational inequalities drawn from standard problems in the optimization literature.

\subsection{Definition}

\begin{figure}[t]
\begin{center}
\begin{minipage}[t]{0.75\textwidth}
\includegraphics[width=\textwidth,height=1.5in]{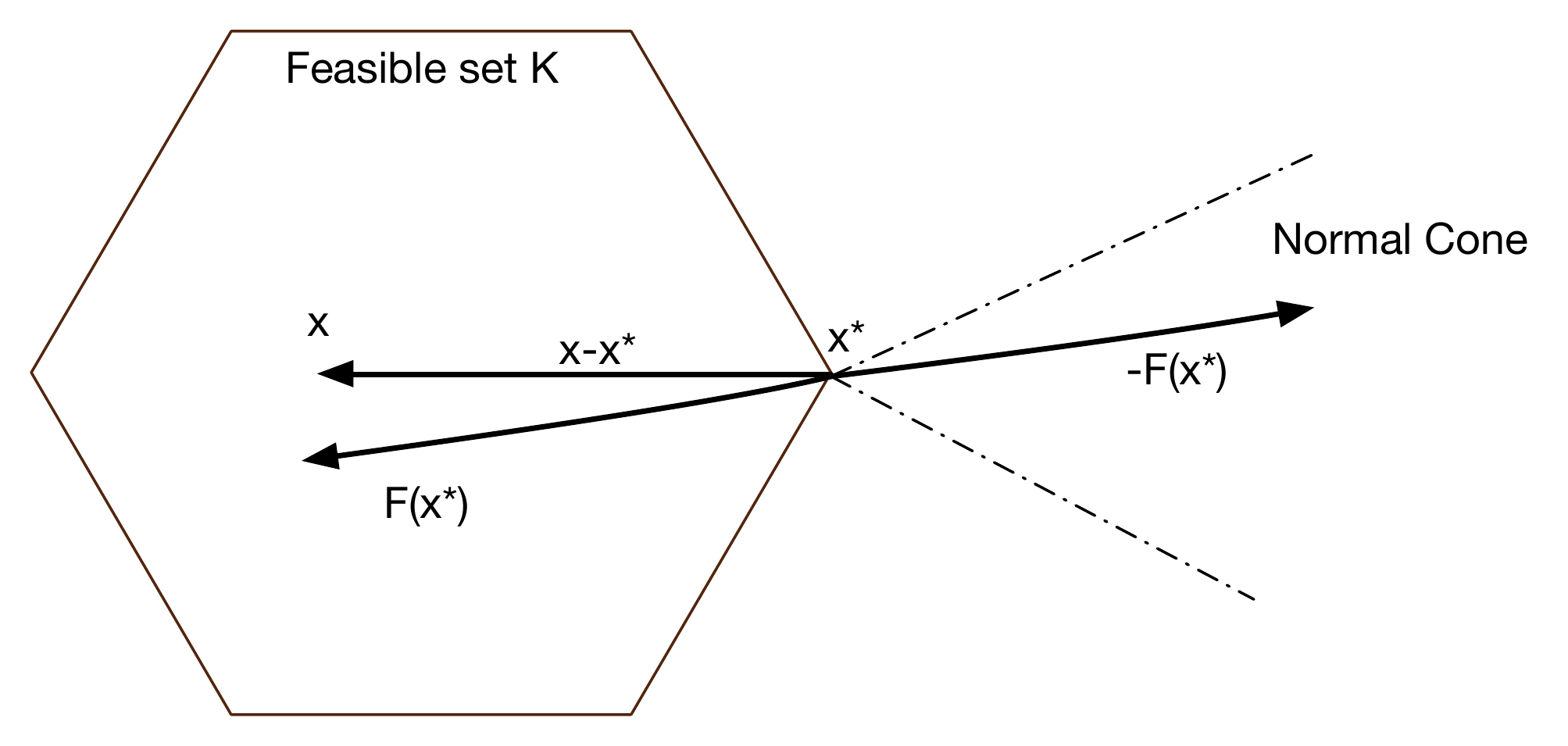}
\end{minipage}
\end{center}
\caption{This figure provides a geometric interpretation of the variational inequality $VI(F,K)$. The mapping $F$ defines a vector field over the feasible set $K$ such that at the solution point $x^*$, the vector field $F(x^*)$ is directed inwards at the boundary, and  $-F(x^*)$ is an element of the normal cone $C(x^*)$ of $K$ at $x^*$.}
\label{vi-geom}
\end{figure}


The formal definition of a VI as follows:\footnote{Variational problems can be defined more abstractly in Hilbert spaces. We  confine our discussion to $n$-dimensional Euclidean spaces.}

\begin{definition}
The finite-dimensional variational inequality problem VI(F,K) involves finding a vector $x^* \in K \subset \mathbb{R}^n$ such that

\[ \langle F(x^*), x - x^* \rangle \geq 0, \ \forall x \in K \]

where $F: K \rightarrow \mathbb{R}^n$ is a given continuous function and $K$ is a given closed convex set, and $\langle ., . \rangle$ is the standard inner product in $\mathbb{R}^n$.

\end{definition}

Figure~\ref{vi-geom} provides a geometric interpretation of a variational inequality. \footnote{In Figure~\ref{vi-geom}, the normal cone  $C(x^*)$ at the vector $x^*$ of a convex set $K$ is defined as $C(x^*) = \{y \in \mathbb{R}^n | \langle y, x - x^* \rangle \leq 0, \forall x \in K \}$. } The following general result characterizes when solutions to VIs exist:

\begin{theorem}
Suppose $K$ is compact, and that $F: K \rightarrow \mathbb{R}^n$ is continuous. Then, there exists a solution to VI($F,K$).
\end{theorem}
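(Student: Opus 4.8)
The plan is to recast the variational inequality as a fixed-point problem and then invoke Brouwer's fixed-point theorem. The central observation is that $x^*$ solves VI($F,K$) if and only if $x^*$ is a fixed point of the map $T: K \to K$ defined by $T(x) = \Pi_K(x - F(x))$, where $\Pi_K(y) = \arg\min_{z \in K} \|y - z\|^2$ is the Euclidean projection onto $K$ (the same projection operator used in the extragradient method of Equation~\eqref{eq:eg}). First I would establish this equivalence. Since $K$ is closed and convex, $\Pi_K(y)$ is well-defined and unique, and it is characterized by the obtuse-angle (variational) inequality $\langle y - \Pi_K(y), x - \Pi_K(y) \rangle \le 0$ for all $x \in K$. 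Taking $y = x^* - F(x^*)$ and assuming $\Pi_K(y) = x^*$, this reads $\langle -F(x^*), x - x^* \rangle \le 0$ for all $x \in K$, which is precisely the VI condition $\langle F(x^*), x - x^* \rangle \ge 0$. This is exactly the statement, recorded in the caption of Figure~\ref{vi-geom}, that $-F(x^*)$ lies in the normal cone $C(x^*)$ of $K$ at $x^*$.

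Next I would check the hypotheses needed to apply Brouwer. The projection $\Pi_K$ onto a closed convex set is non-expansive (a standard consequence of the variational characterization above), and hence continuous. Combined with the continuity of $F$, the composition $T = \Pi_K \circ (I - F)$ is continuous, and since $\Pi_K$ always maps into $K$, we have $T(K) \subseteq K$, so $T$ is a genuine self-map. The set $K$ is compact, convex, and nonempty, so Brouwer's fixed-point theorem applies to $T$ and furnishes an $x^* \in K$ with $T(x^*) = x^*$. By the equivalence established in the first step, this $x^*$ solves VI($F,K$).

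The step requiring the most care — and the conceptual crux of the argument — is the first one: the equivalence between solutions of the VI and fixed points of $T$, which hinges entirely on the variational characterization of the Euclidean projection onto a convex set. Everything else (non-expansiveness and continuity of $\Pi_K$, continuity of the composition, the self-map property, and the appeal to Brouwer) is routine once compactness and convexity of $K$ are in hand. I would also remark that compactness enters only to supply a compact convex domain for Brouwer; the very same fixed-point argument extends to unbounded $K$ provided $F$ satisfies a coercivity condition that confines the iterates to a compact set, which is the natural next refinement of the statement.
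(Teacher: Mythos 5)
Your proof is correct. The paper states this existence theorem without supplying a proof, so there is nothing to compare against line by line; your argument is the classical Hartman--Stampacchia/Brouwer route, and it meshes exactly with the machinery the paper itself sets up: the equivalence you establish in your first step is precisely Theorem~\ref{projthm} (specialized to $\gamma = 1$), and the normal-cone reading of the fixed-point condition matches the geometric description accompanying Figure~\ref{vi-geom}. The one hypothesis you use that is absent from the theorem statement is convexity of $K$, but that is built into the paper's definition of VI($F,K$) ($K$ is ``a given closed convex set''), so it is legitimately available --- and worth flagging, since without it neither the projection $\Pi_K$ nor the appeal to Brouwer would go through. Your closing remark, that compactness can be traded for a coercivity condition on $F$ when $K$ is unbounded, is also accurate and is the standard refinement of this result.
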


As Figure~\ref{vi-geom} shows, $x^*$ is a solution to $VI(F,K)$ if and only if the angle between the vectors $F(x^*)$ and $x - x^*$, for any vector $x \in K$, is less than or equal to $90^0$. To build up some intuition, the reduction of a few well-known problems to a VI is now provided.

\begin{theorem}
Let $x^*$ be a solution to the optimization problem of minimizing a continuously differentiable function $f(x)$, subject to $x \in K$, where $K$ is a closed and convex set. Then, $x^*$ is a solution to $VI(\nabla f, K)$, such that $\langle \nabla f(x^*), x - x^* \rangle \geq 0, \ \forall x \in K$.
\end{theorem}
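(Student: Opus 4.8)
The plan is to derive the variational inequality directly from the first-order optimality condition, exploiting the convexity of the feasible set $K$ to construct feasible directions from $x^*$. First I would fix an arbitrary point $x \in K$ and use the convexity of $K$ to observe that the entire line segment $x^* + t(x - x^*)$ remains in $K$ for every $t \in [0,1]$. This produces a one-parameter family of feasible points emanating from $x^*$ in the direction $x - x^*$, which is exactly the kind of variation we can perturb along without leaving the feasible set.

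Next I would introduce the scalar restriction $\phi(t) = f(x^* + t(x - x^*))$ defined on $[0,1]$. Since $x^*$ minimizes $f$ over $K$ and every point of the segment is feasible, we have $\phi(t) \geq \phi(0)$ for all $t \in [0,1]$, so $t = 0$ is a minimizer of $\phi$ on $[0,1]$. This forces the one-sided derivative to satisfy $\phi'(0^+) \geq 0$. Because $f$ is continuously differentiable, the chain rule applies and gives $\phi'(0) = \langle \nabla f(x^*), x - x^* \rangle$, whence $\langle \nabla f(x^*), x - x^* \rangle \geq 0$. As $x \in K$ was arbitrary, this is precisely the assertion that $x^*$ solves $VI(\nabla f, K)$.

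The main obstacle, such as it is, is the delicate step of concluding $\phi'(0^+) \geq 0$ from minimality: the argument is that if $\phi'(0^+)$ were strictly negative, then for all sufficiently small $t > 0$ we would have $\phi(t) < \phi(0)$, producing a feasible point with strictly smaller objective value and contradicting the minimality of $x^*$. I want to emphasize that the convexity of $K$ is \emph{essential} here, since it is what guarantees the perturbed points stay feasible; without it no feasible-descent contradiction could be drawn. It is worth noting, finally, that this is a \emph{necessary}-condition argument that uses only the differentiability of $f$ and the convexity of $K$, and does \emph{not} require $f$ itself to be convex; convexity of $f$ would only be needed for the converse direction, namely to show that a solution of $VI(\nabla f, K)$ is in fact a global minimizer.
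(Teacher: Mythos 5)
Your proposal is correct and follows essentially the same route as the paper's own proof: restricting $f$ to the segment $x^* + t(x - x^*)$, noting $t=0$ is a minimizer, and computing $\phi'(0) = \langle \nabla f(x^*), x - x^* \rangle \geq 0$. Your version is in fact slightly more careful than the paper's, since you explicitly invoke the convexity of $K$ to justify feasibility of the segment and argue via the one-sided derivative, both of which the paper leaves implicit.
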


{\bf Proof:} Define $\phi(t) = f(x^* + t (x - x^*))$. Since $\phi(t)$ is minimized at $t=0$, it follows that $0 \leq \phi'(0) = \langle \nabla f(x^*), x - x^* \rangle \geq 0, \ \forall x \in K$, that is $x^*$ solves the VI.

\begin{theorem}
If $f(x)$ is a convex function, and $x^*$ is the solution of $VI(\nabla f, K)$, then $x^*$ minimizes $f$.
\end{theorem}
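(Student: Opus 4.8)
The plan is to exploit the first-order characterization of convexity and read off the result almost immediately from the definition of $VI(\nabla f, K)$. The starting observation is that since the variational inequality is stated as $VI(\nabla f, K)$, the function $f$ is implicitly assumed differentiable, so for a convex differentiable $f$ we have the gradient (subtangent) inequality
\[
f(x) \geq f(x^*) + \langle \nabla f(x^*), x - x^* \rangle, \quad \forall x \in K.
\]
This is the key structural fact I would invoke; it is the convexity analogue of the tangent-line underestimator and holds for every $x$ in the domain, in particular for every $x \in K$.

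Next I would bring in the hypothesis that $x^*$ solves $VI(\nabla f, K)$. By the definition given earlier in the excerpt, this means precisely
\[
\langle \nabla f(x^*), x - x^* \rangle \geq 0, \quad \forall x \in K.
\]
The two displayed facts are now set up to be chained together: the convexity inequality lower-bounds $f(x)$ by $f(x^*)$ plus the inner product, and the VI condition says that inner product is nonnegative.

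Combining them yields, for every $x \in K$,
\[
f(x) \geq f(x^*) + \langle \nabla f(x^*), x - x^* \rangle \geq f(x^*),
\]
so $f(x) \geq f(x^*)$ for all $x \in K$, which is exactly the statement that $x^*$ minimizes $f$ over $K$. This closes the argument and, together with the preceding theorem (minimizer $\Rightarrow$ VI solution), establishes the equivalence of constrained convex minimization and the variational inequality $VI(\nabla f, K)$.

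I do not anticipate a genuine obstacle here, as the proof is a one-line consequence once the two inequalities are aligned; the only point requiring mild care is to note explicitly that differentiability of $f$ is needed for both the gradient inequality and for $VI(\nabla f, K)$ to be well posed, and to make sure the direction of the convexity inequality is the underestimator form (value at $x$ bounded \emph{below} by the linearization at $x^*$) rather than the reverse. A secondary stylistic choice is whether to present it directly or to contrast it with the previous theorem to emphasize that together they give an \emph{if and only if}; I would add a sentence to that effect to reinforce the claim made earlier that any convex optimization problem reduces to a VI.
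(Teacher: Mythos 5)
Your proof is correct and follows exactly the paper's argument: the first-order convexity inequality $f(x) \geq f(x^*) + \langle \nabla f(x^*), x - x^* \rangle$ combined with the VI condition $\langle \nabla f(x^*), x - x^* \rangle \geq 0$ gives $f(x) \geq f(x^*)$ for all $x \in K$. Your additional remarks on differentiability and on the converse pairing with the preceding theorem are sensible but do not change the substance.
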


{\bf Proof:} Since $f$ is convex, it follows that any tangent lies below the function, that is $f(x) \geq f(x^*) + \langle \nabla f(x^*), x - x^* \rangle, \ \forall x \in K$. But, since $x^*$ solves the VI, it follows that $f(x^*)$ is a lower bound on the value of $f(x)$ everywhere, or that $x^*$ minimizes $f$.

A rich class of problems called {\bf complementarity problems} (CPs) also can be reduced to solving a VI. When the feasible set $K$ is a cone, meaning that if $x \in K$, then $\alpha x \in K, \alpha \geq 0$, then the VI becomes a CP.

\begin{definition}
Given a cone $K \subset \mathbb{R}^n$, and a mapping $F: K \rightarrow \mathbb{R}^n$, the complementarity problem CP(F,K) is to find an $x \in K$ such that $F(x) \in K^*$, the dual cone to $K$, and $\langle x, F(x) \rangle \geq 0$. \footnote{Given a cone $K$, the dual cone $K^*$ is defined as $K^* = \{ y \in \mathbb{R}^n | \langle y, x \rangle \geq 0, \forall x \in K \}$.}
\end{definition}

A number of special cases of CPs are important. The nonlinear complementarity problem (NCP) is to find $x^* \in \mathbb{R}^n_+$ (the non-negative orthant)  such that $F(x^*) \geq 0$ and $\langle F(x^*), x^* \rangle = 0$. The solution to an NCP and the corresponding $VI(F, \mathbb{R}^n_+)$ are the same, showing that NCPs reduce to VIs. In an NCP, whenever the mapping function $F$ is affine, that is $F(x) = Mx + b$, where $M$ is an $n \times n$ matrix, then the corresponding NCP is called a linear complementarity problem (LCP) \cite{murty:lcpbook}.  Recent work on learning sparse models using $L_1$ regularization has exploited the fact that the standard LASSO objective \cite{lasso} of $L_1$ penalized regression can be reduced to solving an LCP \cite{lasso-lcp}. This reduction to LCP has been used in recent work on sparse value function approximation as well in a method called LCP-TD \cite{jeff:lcp-rl}.
A final crucial property of VIs is that they can be formulated as finding {\bf fixed points}.

\begin{theorem}
\label{projthm}
The vector $x^*$ is the solution of VI(F,K) if and only if, for any $\gamma > 0$, $x^*$ is also a fixed point of the map  $x^* = \Pi_K(x^* - \gamma F(x^*))$,
where $\Pi_K$ is the projector onto convex set $K$.

\end{theorem}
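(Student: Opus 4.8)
The plan is to reduce both directions of the equivalence to a single characterization of the Euclidean projection onto a closed convex set, namely the \emph{obtuse-angle} (or variational) characterization: for any $y \in \mathbb{R}^n$, a point $z \in K$ equals $\Pi_K(y)$ if and only if
\[ \langle y - z, x - z \rangle \leq 0, \quad \forall x \in K. \]
First I would establish this characterization, and here I can lean directly on the earlier theorem of this chapter stating that a minimizer of a continuously differentiable convex function over a closed convex set solves the associated VI. Applying that result to $f(x) = \tfrac{1}{2}\|x - y\|^2$, whose gradient is $\nabla f(x) = x - y$, shows that $z = \arg\min_{x \in K} \tfrac{1}{2}\|x - y\|^2 = \Pi_K(y)$ is exactly the point satisfying $\langle z - y,\, x - z \rangle \geq 0$ for all $x \in K$, which is the stated inequality after a sign flip. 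Convexity of the squared distance together with the converse theorem (a solution of the VI for a convex objective is a minimizer) supplies the reverse implication, so the characterization is an ``if and only if''.

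With this in hand, fix $\gamma > 0$ and set $y = x^* - \gamma F(x^*)$, so that $y - x^* = -\gamma F(x^*)$. By the characterization, the fixed-point equation $x^* = \Pi_K(x^* - \gamma F(x^*))$ holds precisely when $x^* \in K$ and
\[ \langle -\gamma F(x^*),\, x - x^* \rangle \leq 0, \quad \forall x \in K. \]
The coefficient $-\gamma$ is negative, so multiplying through by $-1/\gamma$ reverses the inequality and yields $\langle F(x^*),\, x - x^* \rangle \geq 0$ for all $x \in K$, which is exactly the defining condition of $VI(F,K)$. Every step is an equivalence, so the fixed-point property and the VI property are the same; the clause ``for any $\gamma > 0$'' is automatic because the positive scalar cancels and never affects the sign of the final inequality.

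The proof is essentially immediate once the projection characterization is available, so the only real content — and the step I would treat most carefully — is justifying that characterization. The delicate point is that it requires $K$ to be both closed and convex: closedness guarantees the projection exists, while convexity guarantees uniqueness and ensures that the first-order condition inherited from the earlier theorem is \emph{both} necessary and sufficient. I would also make explicit that $\Pi_K(y) \in K$ always, so in the ``if'' direction the membership $x^* \in K$ comes for free from the fixed-point equation itself. Finally, I would emphasize that no monotonicity, differentiability, or smoothness of $F$ is used anywhere beyond continuity — the equivalence is purely geometric and holds for an arbitrary continuous mapping $F$.
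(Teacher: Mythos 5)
Your proof is correct. It is worth noting that the paper itself does not actually prove Theorem~\ref{projthm}: after the statement it offers only the geometric remark that at a solution the field $-F(x^*)$ lies in the normal cone, so the projection of $x^* - \gamma F(x^*)$ returns $x^*$. Your argument supplies the formal content behind that picture, and it does so economically by reusing the two preceding theorems of the chapter (minimizer $\Rightarrow$ VI solution, and, for convex $f$, VI solution $\Rightarrow$ minimizer) applied to $f(x) = \tfrac{1}{2}\|x - y\|^2$ to obtain the variational characterization $z = \Pi_K(y) \iff z \in K$ and $\langle y - z, x - z\rangle \le 0$ for all $x \in K$; the rest is the substitution $y = x^* - \gamma F(x^*)$ and cancellation of $\gamma > 0$. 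The one step you should make fully explicit is that strict convexity of $x \mapsto \|x - y\|^2$ (plus closedness of $K$ for existence) gives \emph{uniqueness} of the minimizer, which is what lets you pass from ``$z$ minimizes the squared distance'' to ``$z = \Pi_K(y)$''; you flag this in your closing remarks but it belongs in the body of the argument. With that detail included, your proof is a complete and standard justification of a result the paper leaves unproved.
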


In terms of the geometric picture of a VI illustrated in Figure~\ref{vi-geom}. this property means that the solution of a VI occurs at a vector $x^*$ where the vector field $F(x^*)$ induced by $F$ on $K$ is normal to the boundary of $K$ and directed inwards, so that the projection of $x^* - \gamma F(x^*)$ is the vector $x^*$ itself. This property forms the basis for the projection class of methods that solve for the fixed point.

\subsection{Equilibrium Problems in Game Theory}

The VI framework provides a mathematically elegant approach to model equilibrium problems in game theory  \cite{fudenberg-levine:book,algorithmic-game-theory}. A {\em Nash game} consists of $m$ players, where player $i$ chooses a strategy $x_i$ belonging to a closed convex set $X_i \subset \mathbb{R}^n$. After executing the joint action, each player is penalized (or rewarded) by the amount $F_i(x_1, \ldots, x_m)$, where $F_i: \mathbb{R}^{n_i} \rightarrow \mathbb{R}$ is a continuously differentiable function. A set of strategies $x^* = (x^*_1, \ldots, x^*_m) \in \prod_{i=1}^M X_i$ is said to be in equilibrium if no player can reduce the incurred penalty (or increase the incurred reward) by unilaterally deviating from the chosen strategy. If each $F_i$ is convex on the set $X_i$, then the set of strategies $x^*$ is in equilibrium if and only if $\langle (x_i - x^*_i), \nabla_i F_i(x^*_i) \rangle \geq 0$.  In other words, $x^*$ needs to be a solution of the VI $\langle (x - x^*), f(x^*) \rangle \geq 0$, where $f(x) = (\nabla F_1(x), \ldots, \nabla F_m(x))$. Nash games are closely related to {\em saddle point} problems \cite{mirror-prox,mirror-prox2,bo-sm:nips2012}. where we are given a function $F: X \times Y \rightarrow \mathbb{R}$, and the objective is to find a solution $(x^*, y^*) \in X \times Y$ such that
\[ F(x^*, y) \leq F(x^*, y^*) \leq F(x, y^*), \ \ \forall x \in X, \ \forall y \in Y \]
Here, $F$ is convex in $x$ for each fixed $y$, and concave in $y$ for each fixed $x$. Many equilibria problems in economics can be modeled using VIs \cite{nagurney:vibook}.

\section{Algorithms for Variational Inequalities}
\label{viola}

We briefly describe two algorithms for solving variational inequalities below: the projection method and the extragradient method. We conclude with a brief discussion of how these relate to reinforcement learning.

\subsection{Projection-Based Algorithms for VIs}
\label{egvi}

The basic projection-based method (Algorithm 1) for solving VIs is based on Theorem~\ref{projthm} introduced earlier.

\begin{algorithm}
\caption{The Basic Projection Algorithm for solving VIs.}

{\bf INPUT:} Given VI(F,K), and a symmetric positive definite matrix $D$.

\begin{algorithmic}[1]

\STATE Set $k=0$ and $x_k \in K$.

\REPEAT

\STATE Set $x_{k+1} \leftarrow \Pi_{K,D} (x_k - D^{-1} F(x_k))$.

\STATE Set $k \leftarrow k+1$.

\UNTIL{$x_k = \Pi_{K,D} (x_k - D^{-1} F(x_k))$}.

\STATE Return $x_k$

\end{algorithmic}
\end{algorithm}

Here, $\Pi_{K,D}$ is the projector onto convex set $K$ with respect to the natural norm induced by $D$, where $\| x \|^2_D  = \langle x, D x \rangle$. It can be shown that the basic projection algorithm solves any $VI(F,K)$ for which the mapping $F$ is {\em strongly monotone} \footnote{ A mapping $F$ is {\em strongly monotone} if
$\langle F(x) - F(y), x  - y \rangle \geq \mu \| x - y \|^2_2, \mu > 0, \forall x,y \in K$.}  and {\em Lipschitz}.\footnote{A mapping $F$ is {\em Lipschitz} if $\| F(x) - F(y) \|_2 \leq L \|x - y \|_2, \forall x,y \in K$. }A simple strategy is to set $D = \alpha I$, where $\alpha > \frac{L^2}{2 \mu}$, and $L$ is the Lipschitz smoothness constant, and $\mu$ is the strong monotonicity constant. The basic projection-based algorithm has two critical limitations: it requires that the mapping $F$ be strongly monotone. If, for example, $F$ is the gradient map of a continuously differentiable function, strong monotonicity implies the function must be strongly convex.  Second, setting the parameter $\alpha$ requires knowing the Lipschitz smoothness $L$ and the strong monotonicity parameter $\mu$. The extragradient method of Korpolevich \cite{korpelevich} addresses some of these concerns, and is defined as Algorithm 2 below.

%

\begin{algorithm}
\caption{The Extragradient Algorithm for solving VIs.}

{\bf INPUT:} Given VI(F,K), and a scalar $\alpha$.

\begin{algorithmic}[1]

\STATE Set $k=0$ and $x_k \in K$.

\REPEAT

\STATE  Set $y_{k} \leftarrow \Pi_{K} (x_k - \alpha  F(x_k))$.

\STATE \label{projstep} Set $x_{k+1} \leftarrow \Pi_{K} (x_k - \alpha F(y_k))$.

\STATE Set $k \leftarrow k+1$.

\UNTIL{$x_k = \Pi_{K} (x_k - \alpha F(x_k))$}.

\STATE Return $x_k$

\end{algorithmic}
\end{algorithm}

Figure~\ref{eg-vi-example} shows a simple example where Algorithm 1 fails to converge, but Algorithm 2 does. If the initial point $x_0$ is chosen to be on the boundary of $X$, using Algorithm 1, it stays on it and fails to converge to the solution of this VI (which is at the origin). If $x_0$ is chosen to be in the interior of $K$, Algorithm 1 will  move towards the boundary. In contrast, using Algorithm 2, the solution can be found for any starting point.  The extragradient algoriithm derives its name from the property that it requires an ``extra gradient" step (step 4 in Algorithm 2), unlike the basic projection algorithm given earlier as Algorithm 1. The principal advantage of the extragradient method is that it can be shown to converge under a considerably weaker condition on the mapping $F$, which now has to be merely monotonic: $\langle F(x) - F(y), x - y \rangle \geq 0$. The earlier Lipschitz condition is still necessary for convergence.

\begin{figure}[h]
\begin{center}
\begin{minipage}[t]{0.4\textwidth}
\includegraphics[width=\textwidth,height=1.5in]{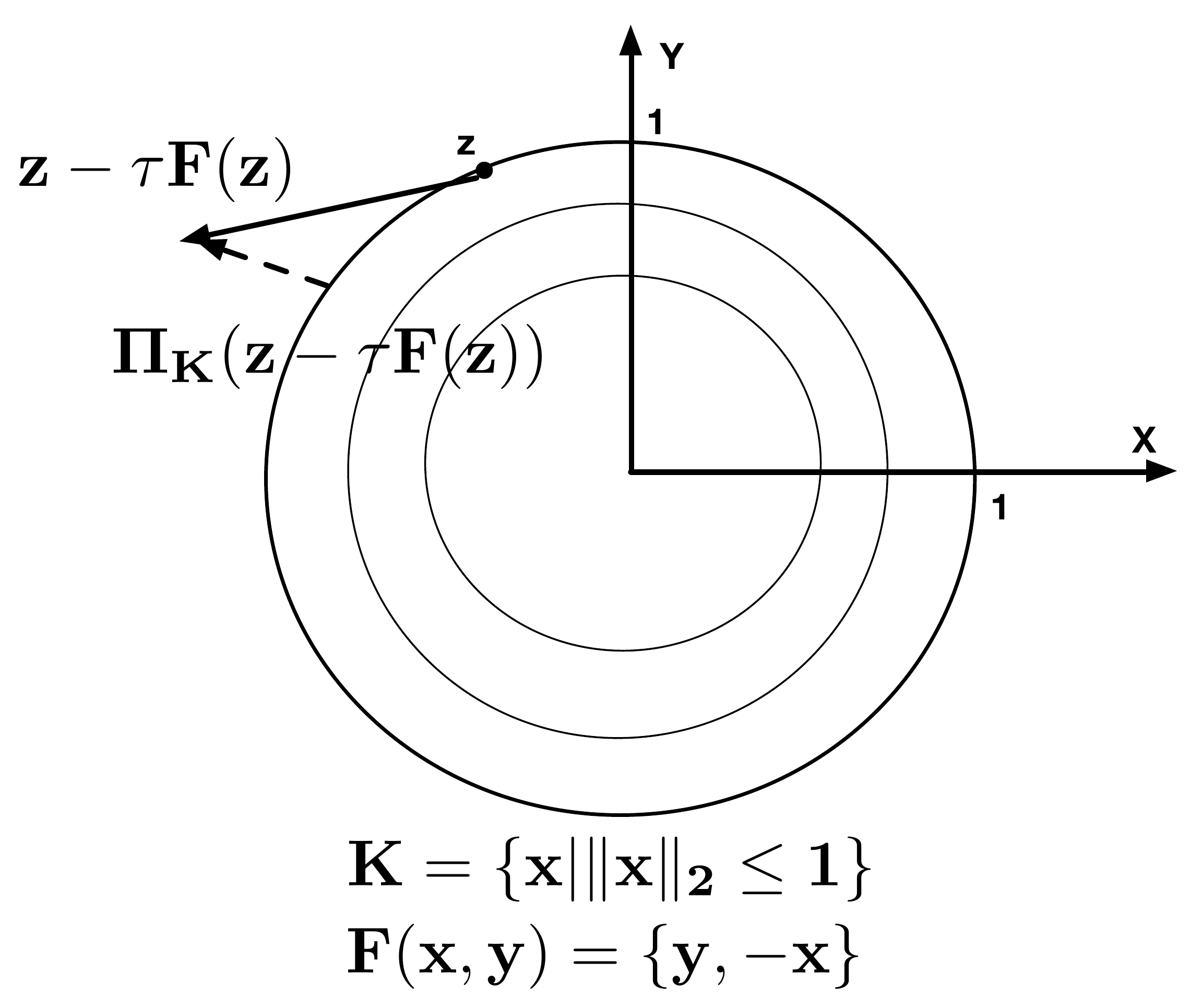}
\end{minipage}
\begin{minipage}[t]{0.4\textwidth}
\includegraphics[width=\textwidth,height=1in]{eg-one-step.pdf}
\end{minipage}
\end{center}
\caption{Left: This figure illustrates a VI where the basic projection algorithm (Algorithm 1) fails, but the extragradient algorithm (Algorithm 2) succeeds \cite{bertsekas:pdc}. Right: One iteration of the extradient algorithm.}
\label{eg-vi-example}
\end{figure}

The extragradient algorithm has been the topic of much attention in optimization since it was proposed, e.g.,  see \cite{iusem,khobotov,marcotte,peng:extragradient,nesterov:dualeg,solodov-svaiter}. Khobotov \cite{khobotov} proved that the extragradient method converges under the weaker requirement of {\em pseudo-monotone} mappings, \footnote{A mapping $F$ is {\em pseudo-monotone} if $\langle F(y), x - y \rangle \geq 0 \Rightarrow \langle F(x), x - y \rangle \geq 0, \ \forall x, y \in K$.}  when the learning rate is automatically adjusted based on a local measure of the Lipschitz constant. Iusem \cite{iusem} proposed a variant whereby the current iterate was projected onto a  hyperplane separating the current iterate from the final solution, and subsequently projected from the hyperplane onto the feasible set. Solodov and Svaiter \cite{solodov-svaiter} proposed another hyperplane method, whereby the current iterate is projected onto the intersection of the hyperplane and the feasible set. Finally, the extragradient method was generalized to the non-Euclidean case by combining it with the mirror-descent method \cite{nemirovski-yudin:book}, resulting in the so-called ``mirrror-prox" algorithm \cite{mirror-prox}.

\subsection{Variational Inequaities and Reinforcement Learning}

Variational inequalities also provide a useful framework for reinforcement learning \cite{ndp:book,sutton-barto:book}. In this case, it can be shown that the mapping $F$ for the VI defined by reinforcement learning is {\em affine} and represents a (linear) complementarity problem. For this case, a number of special properties can be exploited in designing a faster more scalable class of algorithms.
Recall from Theorem~\ref{projthm} that each VI(F,K)  corresponds to solving a particular fixed point problem $x^* = \Pi_K(x^* - \gamma F(x^*))$, which led to the projection  algorithm (Algorithm 1). Generalizing this, consider solving for the fixed point of a {\em projected equation} $x^* = \Pi_{\hat{S}} T(x^*)$  \cite{bertsekas:vi-tr09,gordon:lcp} for a functional mapping $T: \mathbb{R}^n \rightarrow \mathbb{R}^n$, where $\Pi_{\hat{S}}$ is the projector onto a low-dimensional convex subspace $\hat{S}$ w.r.t. some positive definite matrix $\Xi$, so that
\begin{equation}
\hat{S} = \{\Phi r | r \in \hat{R} \}, \ \ \hat{R} = \{r | \Phi r \in \hat{S} \}  \Rightarrow \Phi r^* = \Pi_{\hat{S}} T(\Phi r^*)
\end{equation}
Here. $\Phi$ is an $n \times s$ matrix where $s \ll n$, and the goal is to make the computation depend on $s$, not $n$. Note that $x^* = \Pi_{\hat{S}} T(x^*)$ if and only if
\[ \langle (x^* - T(x^*),  \Xi (x - x^*) \rangle  \geq 0. \ \forall x \in \hat{S} \]
Following \cite{bertsekas:vi-tr09}, note that this is a variational inequality of the form $\langle F(x^*), (x - x^*) \rangle \geq 0$ if we identify $F(x) = \Xi (x - T(x))$, and in the lower-dimensional space, $\langle F(\Phi r^*), \Phi(r - r^*) \rangle, \ \forall r \in \hat{R}$. Hence, the projection algorithm takes on the form:
\[ x_{k+1} = \Pi_{\hat{S}} (x_k -  \gamma D^{-1} \langle \Phi,  F(\Phi x_k)) \rangle \]
It is shown in \cite{bertsekas:vi-tr09} that if $T$ is a contraction mapping, then $F(x) = \Xi (x - T(x))$ is strongly monotone. Hence, the above projection algorithm will converge to the solution $x^*$ of the VI for any given starting point $x_0 \in \hat{S}$. Now, the only problem is how to ensure the computation depends only on the size of the projected lower-dimensional space (i.e., $s$, not $n$). To achieve this, let us assume that the mapping $T(x) = Ax + b$ is affine, and that the constraint region $\hat{R}$ is polyhedral. In this case, we can use the following identities:
\[ \langle \Phi, F(\Phi, x) \rangle = \Phi^T \Xi F(\Phi x) = Cr - d, \ \ C = \Phi^T \Xi (I - A) \Phi, \ \ d = \Phi^T \Xi b \]
and the projection algorithm for this affine case can be written as:
\[ x_{k+1} = \Pi_{\hat{S}} (x_k - \gamma D^{-1} (C r - d) ) \]
One way to solve this iteratively is to compute a progressively more accurate approximation $C_k \rightarrow C$ and $d_k \rightarrow d$  by sampling from the rows and columns of the matrix  $A$ and vector $b$, as follows:
\[ C_k  = \frac{1}{k+1} \sum_{t=0}^k \phi(i_t) (\phi(i_t) - \frac{a_{i_t j_t}}{p_{i_t j_t}} \phi(j_t))^T, \ \ d_k = \frac{1}{k+1} \sum_{t=0}^k \phi(i_t) b_{i_t} \]
where the row sampling generates the indices $(i_0, i_1, \ldots)$ and the column sampling generates the transitions $((i_0, j_0), (i_1, j_1), \ldots, )$ in such a way that the relative frequency of row index $i$ matches the diagonal element $\xi_i$ of the positive definite matrix $\Xi$. Given $C_k$ and $d_k$, the solution can be found by $x^* \approx C_k^{-1} d_k$, or by using an incremental method. Computation now only depends on the dimension $s$ of the lower-dimensional space, not on the original high-dimensional space. Gordon \cite{gordon:lcp} proposes an alternative approach separating the projection of the current iterate on the low-dimensional subspace spanned by $\Phi$ from its projection onto the feasible set. Both of these approaches \cite{bertsekas:vi-tr09,gordon:lcp} have been only studied with the simple projection method (Algorithm 1), and can be generalized to a more powerful class of VI methods that we are currently developing.

\section*{Acknowledgements}

We  like to acknowledge the useful feedback of  past and present members of the Autonomous Learning Laboratory at the University of Massachusetts, Amherst.  Principal funding for this research was provided by the National Science Foundation under the grant NSF IIS-1216467. Past work by the first author has been funded by NSF grants IIS-0534999 and IIS-0803288.

\chapter{Appendix: Technical Proofs}

\section{Convergence Analysis of Saddle Point Temporal Difference Learning}

\subsection*{Proof of Proposition 1}
We give a descriptive proof here. We first present the monotone operator corresponding to the bilinear saddle-point problem and then extend it to stochastic approximation case with certain restrictive assumptions, and use the result in  \cite{sra2011optimization}.

The monotone operator $\Phi (x,y)$ with saddle-point problem $SadVal = {\inf _{x \in X}}{\sup _{y \in Y}}\phi (x,y)$ is a point-to-set operator
\[\Phi (x,y) = \{ {\partial _x}\phi (x,y)\}  \times \{  - {\partial _y}\phi (x,y)\} \]
Where ${\partial _x}\phi (x,y)$ is the subgradient of $\phi(x,\cdot)$ over $x$ and ${\partial _y}\phi (x,y)$ is the subgradient of $\phi(\cdot,y)$ over $y$. For the bilinear problem in Equation (\ref{eq:minimax}), the corresponding $\Phi(x,y)$ is
\begin{equation}\label{eq:bilinearphi}
\Phi (x,y) = ({A^T}y,b - Ax)
\end{equation}
Now we verify that the problem (\ref{eq:lossfunc}) can be reduced to a standard bilinear minimax problem. To prove this we only need to prove $X$ in our RL problem is indeed a closed compact convex set. This is easy to verify as we can simply define $X = \{ x|{\left\| x \right\|_2} \le R\} $ where $R$ is large enough. In fact, the sparse regularization $h(x)= \rho ||x||_1$ helps $x_t$ stay within this $l_2$ ball.
Now we extend to the stochastic approximation case wherein the objective function $f(x)$ is given by the stochastic oracle, and in our case, it is $Ax-b$, where $A, b$ are defined in Equation (\ref{eq:UNIFY}), with Assumption 3 and further assuming that the noise $\varepsilon_t$ for $t$-th sample is i.i.d noise, then with the result in \cite{juditsky2008solving}, we can prove that the RO-TD algorithm converges to the global minimizer of
\[{x^*} = \arg {\min _{x \in X}}{\left\| {Ax - b} \right\|_m} + \rho {\left\| x \right\|_1}\]
Then we prove the error level of approximate saddle-point ${\bar x_t},{\bar y_t}$ defined in (\ref{eq:averaging}) is $\alpha_{t} L^2$. With the subgradient boundedness assumption and using the result in Proposition 1 in \cite{nedic2009subgradient}, this can be proved.

\section{Convergence Analysis of True Gradient Temporal Difference Learning}

We first present the assumptions for the MDP and basis functions,
which are similar to \cite{FastGradient:2009,ROTD:NIPS2012}.

\textbf{Assumption 1} (\textbf{MDP}): The underlying Markov Reward
Process (MRP) $M=(S,P,R,\gamma)$ is finite and mixing, with stationary
distribution $\pi$. The training sequence $({s_{t}},{a_{t}},s_{t}^{'})$
is an i.i.d sequence. 

\textbf{Assumption 2} (\textbf{Basis Function}): The inverses $\mathbb{E}{[\phi_{t}{\phi_{t}^{T}}]^{-1}}$
and ${[{\phi_{t}}({\phi_{t}}-\gamma{\phi_{t}}^{'T})]^{-1}}$ exist.
This implies that $\Phi$ is a full column rank matrix. Also, assume
the features $(\phi_{t},\phi_{t}^{'})$ have uniformly bounded second
moments, and ${\left\Vert {\phi_{t}}\right\Vert _{\infty}}<+\infty,{\left\Vert {\phi{'_{t}}}\right\Vert _{\infty}}<+\infty$.
\\
 Next we present the assumptions for the stochastic saddle point problem
formulation, which are similar to \cite{chen2013optimal,juditsky2008solving}.

\textbf{Assumption 3} (\textbf{Compactness}): Assume for the primal-dual
loss function,
\begin{equation}
\mathop{\min}\limits _{\theta\in X}\mathop{\max}\limits _{y\in Y}\left({L(\theta,y)=\left\langle {K(\theta),y}\right\rangle -{F^{*}}(y)}+h(\theta)\right),
\label{eq:primal-dual2}
\end{equation}
the sets $X,Y$ are closed compact sets.

\textbf{Assumption 4} ($F^{*}(\cdot)$): We assume that $F^{*}(\cdot)$
is a smooth convex function with Lipschitz continuous gradient, i.e.,
$\exists L_{F^{*}}$ such that $\forall x,y\in X$
\begin{equation}
{F^{*}}(y)-{F^{*}}(x)-\left\langle {\nabla{F^{*}}(x),y-x}\right\rangle \le\frac{{L_{{F^{*}}}}}{2}||y-x|{|^{2}}
\end{equation}

\textbf{Assumption 5} ($K(\theta)$): $K(\theta)$ is a linear mapping
which can be extended to Lipschitz continuous vector-valued mapping
defined on a closed convex cone $C_{K}$. Assuming $K(\theta)$ to
be $C_{K}$-convex, i.e., $\forall\theta,\theta'\in X,\lambda\in[0,1],$
\[
K(\lambda\theta+(1-\lambda)\theta'){\le_{{C_{K}}}}\lambda K(\theta)+(1-\lambda)K(\theta'),
\]

where $a{\le_{{C_{K}}}}b$ means that $b-a\in{C_{K}}$.

\textbf{Assumption 6} (\textbf{Stochastic Gradient}): In the stochastic saddle
point problem, we assume that there exists a stochastic oracle $\mathcal{SO}$
that is able to provide unbiased estimation with bounded variance
such that

\begin{equation}
\begin{array}{l}
\mathbb{E}[{{\cal F}^{*}}({y_{t}})]={\nabla{F^{*}}({y_{t}})}\\
\mathbb{E}[||{{\cal F}^{*}}({y_{t}})-\nabla{F^{*}}({y_{t}})||^2] \le{\sigma^2_{{F^{*}}}}
\end{array}
\end{equation}

\begin{equation}
\begin{array}{l}
\mathbb{E}[{\cal K_{\theta}}(\theta_{t})]=K({\theta_{t}})\\
\mathbb{E}[||{\cal K_{\theta}}(\theta_{t})-K({\theta_{t}})|{|^{2}}]\le\sigma_{K,\theta}^{2}
\end{array}
\end{equation}

\begin{equation}
\begin{array}{l}
\mathbb{E}[{{\cal K}_{y}(\theta_{t})^{T}}{y_{t}}]=\nabla K{(\theta_{t})^{T}}{y_{t}}\\
\mathbb{E}[||{{\cal K}_{y}(\theta_{t})^{T}}{y_{t}}-\nabla K{(\theta_{t})^{T}}{y_{t}}|{|^{2}}]\le\sigma_{K,y}^{2}
\end{array}
\end{equation}

where $\sigma_{{F^{*}}}$ , $\sigma_{K,\theta}$ and $\sigma_{K,y}$
are non-negative constants.
We further define
\begin{equation}
\sigma=\sqrt{\sigma_{{F^{*}}}^{2}+\sigma_{K,\theta}^{2}}+{\sigma_{K,y}}\label{eq:sigma}
\end{equation}

\subsection{Convergence Rate}

Here we discuss the convergence rate of the proposed algorithms. First let us review the nonlinear primal form
\begin{equation}
\mathop{\min}\limits _{\theta\in X}\left(\Psi(\theta)={F(K(\theta))+h(\theta)}\right)\label{eq:primal-1}
\end{equation}
The corresponding primal-dual formulation \cite{BOOK2011PROXSPLIT,PROXSPLITTING2011,POCK2011SADDLE}
of Equation (\ref{eq:primal-1}) is Equation (\ref{eq:primal-dual}).
Thus we have the general update rule as
\begin{equation}
\begin{array}{l}
{y_{t + 1}} = {y_t} + {\alpha _t}{{\cal K}_\theta }({\theta _t}) - {\alpha _t}{{\cal F}^*}({y_t})
{\rm {,\;}}
{\theta _{t + 1}} = {\rm{pro}}{{\rm{x}}_{{\alpha _t}h}}({\theta _t} - {\alpha _t}{{\cal K}_y}{({\theta _t})^T}{y_t})
\end{array}
\end{equation}

\textbf{Lemma 1 (Optimal Convergence Rate):}
The optimal convergence rate of \ref{eq:primal-dual} is
$O(\frac{{L_{{F^{*}}}}}{{N^{2}}}+\frac{{L_{K}}}{N}+\frac{\sigma}{{\sqrt{N}}})$.

\textbf{Proof:}
Equation (\ref{eq:primal-1}) can be easily converted to the
following primal-dual formulation
\begin{equation}
\mathop{\min}\limits _{y\in Y}\mathop{\max}\limits _{\theta\in X}\left({\left\langle {-K(\theta),y}\right\rangle +{F^{*}}(y)-h(\theta)}\right)\label{eq:primal-dual-2}
\end{equation}
Using the bounds proved in \cite{nesterov2004introductory,juditsky2008solving,chen2013optimal},
the optimal convergence rate of stochastic saddle-point problem is
$O(\frac{{L_{{F^{*}}}}}{{N^{2}}}+\frac{{L_{K}}}{N}+\frac{\sigma}{{\sqrt{N}}})$.

The GTD/GTD2 algorithms can be considered as using Polyak's algorithm
without the primal average step. Hence, by adding the primal average
step, GTD/GTD2 algorithms will become standard Polyak's algorithms
\cite{polyak1992acceleration}, and thus the convergence rates are
$O(\frac{{{L_{{F^{*}}}}+{L_{K}}+\sigma}}{{\sqrt{N}}})$ according
to \cite{RobustSA:2009}. So we  have the following propositions.

\textbf{Proposition 1} The convergence rates of GTD/GTD2 algorithms
with primal average are $O(\frac{{{L_{{F^{*}}}}+{L_{K}}+\sigma}}{{\sqrt{N}}})$,
where ${L_{K}}=||{\Phi^{T}}\Xi(\Phi-\gamma{\Phi^{'T}})|{|^{2}}$,
for GTD, ${L_{{F^{*}}}}=1$ and for GTD2, ${L_{{F^{*}}}}=||M|{|_{2}}$.

Now we consider the acceleration using the SMP algorithm, which incorporates
the extragradient term. According to \cite{juditsky2008solving} which extends the SMP algorithm to solving saddle-point problems and variational inequality
problems, the convergence rate is accelerated to $O(\frac{{{L_{{F^{*}}}}+{L_{K}}}}{N}+\frac{\sigma}{{\sqrt{N}}})$. Consequently,

\textbf{Proposition 2 }The convergence rate of the GTD2-MP algorithm is
$O(\frac{{{L_{{F^{*}}}}+{L_{K}}}}{N}+\frac{\sigma}{{\sqrt{N}}})$.

\subsection{Value Approximation Error Bound}
\label{eq:mspbe-2}

One key question is how to give the error bound of $||V-{V_{\theta}}||$
given that of $||K(\theta)||$. Here we use the result in \cite{DantzigRL:2012},
which is similar to the one in \cite{Yu08newerror}.

\textbf{Lemma 2 \cite{DantzigRL:2012}:} For any ${V_{\theta}}=\Phi\theta$,
the following component-wise equality holds
\begin{equation}
V-{V_{\theta}}={(I-\gamma{\Pi^{\Xi}}P)^{-1}}\left({\left({V-{\Pi^{\Xi}}V}\right)+\Phi{({\Phi^{T}}\Xi\Phi)^{-1}}K(\theta)}\right)\label{eq:ds_v_ax-b-1}
\end{equation}
\textbf{Proof}:

Use the equality $V=TV$ and ${V_{\theta}}={\Pi^{\Xi}}{V_{\theta}}$,
where the first equality is the Bellman equation, and the second is that
$V_{\theta}$ lies within the spanning space of $\Phi$.

we have
\begin{equation}
\begin{array}{l}
V-{\Pi^{\Xi}}V\\
=V-{\Pi^{\Xi}}TV+({V_{\theta}}-{\Pi^{\Xi}}T{V_{\theta}})-({V_{\theta}}-{\Pi^{\Xi}}T{V_{\theta}})\\
=(I-\gamma{\Pi^{\Xi}}P)(V-{V_{\theta}})+{\Pi^{\Xi}}({V_{\theta}}-T{V_{\theta}})
\end{array}
\end{equation}
After rearranging the equation, we have Equation (\ref{eq:ds_v_ax-b-1}) and find that
\begin{equation}
{\Pi^{\Xi}}({V_{\theta}}-T{V_{\theta}})=-\Phi{({\Phi^{T}}\Xi\Phi)^{-1}}K(\theta)
\end{equation}

\textbf{Proposition }3: For GTD/GTD2, the prediction error of $||V-{V_{\theta}}||$
is bounded by $||V-{V_{\theta}}|{|_{\infty}}\le\frac{{L_{\phi}^{\Xi}}}{{1-\gamma}}\cdot O\left({\frac{{{L_{{F^{*}}}}+{L_{K}}+\sigma}}{{\sqrt{N}}}}\right)$
; For GTD2-MP, it is bounded by $||V-{V_{\theta}}|{|_{\infty}}\le\frac{{L_{\phi}^{\Xi}}}{{1-\gamma}}\cdot O\left({\frac{{{L_{{F^{*}}}}+{L_{K}}}}{N}+\frac{\sigma}{{\sqrt{N}}}}\right)$,
where $L_{\phi}^{\Xi}={\max_{s}}||{({\Phi^{T}}\Xi\Phi)^{-1}}\phi(s)|{|_{1}}$.

\textbf{Proof:}

From Lemma 2, we have
\begin{equation}
||V-{V_{\theta}}|{|_{\infty}}\le||{(I-\gamma{\Pi^{\Xi}}P)^{-1}}|{|_{\infty}}\cdot\left({||V-{\Pi^{\Xi}V}|{|_{\infty}}+L_{\phi}^{\Xi}||K(\theta)|{|_{\infty}}}\right)
\end{equation}
Using the results in Proposition 1 and Proposition 2, we have for GTD
and GTD2,
\begin{equation}
||V-{V_{\theta}}|{|_{\infty}}\le||{(I-\gamma{\Pi^{\Xi}}P)^{-1}}|{|_{\infty}}\cdot\left({||V-{\Pi^{\Xi}V}|{|_{\infty}}+L_{\phi}^{\Xi}\cdot O\left({\frac{{{L_{{F^{*}}}}+{L_{K}}+\sigma}}{{\sqrt{N}(1-\gamma)}}}\right)}\right)
\end{equation}
For GTD2-MP,
\begin{equation}
||V-{V_{\theta}}|{|_{\infty}}\le||{(I-\gamma{\Pi^{\Xi}}P)^{-1}}|{|_{\infty}}\cdot\left({||V-{\Pi^{\Xi}V}|{|_{\infty}}+L_{\phi}^{\Xi}\cdot O(\frac{{{L_{{F^{*}}}}+{L_{K}}}}{N}+\frac{\sigma}{{\sqrt{N}}})}\right)
\end{equation}
If we further assume a rich expressive hypothesis
space $\mathcal{H}$, i.e., ${\Pi^{\Xi}}P=P,{\Pi^{\Xi}}R=R$, $||V-{\Pi^{\Xi}}V|{|_{\infty}}=0,||{(I-\gamma{\Pi^{\Xi}}P)^{-1}}|{|_{\infty}}=\frac{1}{{1-\gamma}}$,
then for GTD and GTD2, we have
\begin{equation}
||V-{V_{\theta}}|{|_{\infty}}\le\frac{{L_{\phi}^{\Xi}}}{{1-\gamma}}\cdot O\left({\frac{{{L_{{F^{*}}}}+{L_{K}}+\sigma}}{{\sqrt{N}}}}\right)
\end{equation}
For GTD2-MP, we have
\begin{equation}
||V-{V_{\theta}}|{|_{\infty}}\le\frac{{L_{\phi}^{\Xi}}}{{1-\gamma}}\cdot O\left({\frac{{{L_{{F^{*}}}}+{L_{K}}}}{N}+\frac{\sigma}{{\sqrt{N}}}}\right)
\end{equation}

\bibliographystyle{unsrtnat}

\bibliography{thesisbib,sridhar-references-vi,vi,sridhar-tls-references,sridhar-references,sridhar-references2,boucher_uai2014,ma,allcitations,bregman,mirror-descent,online-convex,rdbook,compressive-sensing,mybib,bregman,mirror-descent,rdbook,compressed-sensing,mlft-paper}

\end{document}